\numberwithin{equation}{section}
\newtheorem{thm}{Theorem}
\newtheorem{lem}{Lemma}
\newtheorem{pro}{Proposition}
\newtheorem{rmk}{Remark}
\newcommand {\emptycomment}[1]{}
\newcommand{\be }{\begin{equation}}
\newcommand{\ee }{\end{equation}}
\def\bea{\begin{eqnarray}}
\def\eea{\end{eqnarray}}
\def\be{\begin{equation}}
\def\ee{\end{equation}}
\def\blm{\begin{lem}}
\def\elm{\end{lem}}
\def\bea{\begin{eqnarray}}
	\def\eea{\end{eqnarray}}
\def\be{\begin{equation}}
	\def\ee{\end{equation}}
\def\blm{\begin{lem}}
	\def\elm{\end{lem}}
\def\bes{\begin{eqnarray*}}
	\def\ees{\end{eqnarray*}}
\begin{document}
\title{
{Estimates on Learning Rates for Multi-Penalty Distribution Regression}
 }\vspace{2mm}
\date{June 2020 (first version)}
\author{Zhan Yu\thanks{Corresponding author}\\ \small Department of Mathematics, Hong Kong Baptist University\\ \small 224 Waterloo Road, Kowloon Tong, Hong Kong\\ \small  Email: zhanyu@hkbu.edu.hk; mathyuzhan@gmail.com  
	\and Daniel W. C. Ho\\ \small Department of Mathematics, City University of Hong Kong\\ 
	\small 83 Tat Chee Avenue. Kowloon Tong, Hong Kong\\
	\small  Email: madaniel@cityu.edu.hk
}


\maketitle

	\begin{abstract}
	This paper is concerned with functional learning by utilizing two-stage
	sampled distribution regression. We study a multi-penalty regularization
	algorithm for distribution regression in the framework of learning theory.
	The algorithm aims at regressing to real-valued outputs from probability
	measures. The theoretical analysis of distribution regression is far from
	maturity and quite challenging since only second-stage samples are observable
	in practical settings. In our algorithm, to transform information of distribution
	samples, we embed the distributions to a reproducing kernel Hilbert space
	$\mathcal{H}_{K}$ associated with Mercer kernel $K$ via mean embedding
	technique. One of the primary contributions of this work is the introduction
	of a novel multi-penalty regularization algorithm, which is able to capture
	more potential features of distribution regression. Optimal learning rates
	of the algorithm are obtained under mild conditions. The work also derives
	learning rates for distribution regression in the hard learning scenario
	$f_{\rho }\notin \mathcal{H}_{K}$, which has not been explored in the existing
	literature. Moreover, we propose a new distribution-regression-based distributed
	learning algorithm to face large-scale data or information challenges arising
	from distribution data. The optimal learning rates are derived for the
	distributed learning algorithm. By providing new algorithms and showing
	their learning rates, the work improves the existing literature in various
	aspects.
\end{abstract}
%
%
%
%
\textbf{Keywords:} learning theory, distribution regression, distributed learning, integral operator, multi-penalty regularization, learning rate

%
\section{Introduction}
\label{sec1}

In the era of big data, functional data or matrix-valued data have become
increasingly prevalent in practical applications within machine learning
and statistics. Instead of scalar data settings, these types of data present
more challenges and difficulties in handling intricate information. As
a result, there is a need for improved regression methods to effectively
address these issues. The development of a suitable regression scheme for
these types of problems has become a highly desirable goal. Recently, a
regression method called distribution regression has emerged as an effective
tool to handle data defined on some appropriate Banach spaces (\cite{fang},
\cite{dr2}, \cite{twostage}, \cite{only}). Specifically, the input data
are (probability) distributions on a compact metric space
$\widetilde{X}$. The proposed method contains two sampling stages to learn
the regressor from the distributions to the real-valued outputs. The first-stage
sample is made up of distributions, and the second-stage sample is drawn
according to these distributions. For the first stage, we define the data
set as
$\widetilde{D}=\{(x_{i},y_{i})\}_{i=1}^{|D|}\subset X\times Y$, in which
$|D|$ is the cardinality of $\widetilde{D}$, and each pair
$(x_{i},y_{i})$ is i.i.d. sampled from a meta distribution. $X$ is the
input space of probability measures on $\widetilde{X}$, and
$Y=\mathbb R$ is the output space equipped with the standard Euclidean
metric. For the second stage, the elements in the sample set
$\widehat{D}=\big\{(\{x_{i,s}\}_{s=1}^{d_{i}},y_{i})\big\}_{i=1}^{|D|}$
are obtained from the distributions $\{x_{i}\}_{i=1}^{|D|}$ accordingly,
where $x_{i,j}\in \widetilde{X}$. Many important machine learning and statistical
operations have been found to be directly related to the distribution regression
scheme mentioned above, for example, the multi-instance learning and point
estimation problems without analytical solution.

In this paper, we consider a mean-embedding-based ridge regression method
for distribution regression. Let $H=H(\widetilde{K})$ be a reproducing
kernel Hilbert space (RKHS) with
$\widetilde{K}:\widetilde{X}\times \widetilde{X}\rightarrow \mathbb R$
as the reproducing kernel. Let $(\widetilde{X},\mathcal{F})$ be a measurable
space with $\mathcal{F}$ being a Borel $\sigma $-algebra on
$\widetilde{X}$. Denote the set of Borel probability measures on
$(\widetilde{X},\mathcal{F})$ by $\mathcal{M}_{1}(\mathcal{F})$. Then the
\emph{mean embedding} of a distribution
$x\in \mathcal{M}_{1}(\mathcal{F})$ to an element $\mu _{x}$ of RKHS
$H$ is given by
%
\begin{equation}
\nonumber
\mu _{x}=\int _{\widetilde{X}}\widetilde{K}(\cdot ,\xi )dx(\xi ).
\end{equation}
In practical applications, a class of kernels of great interest are the characteristic kernels (e.g. \cite{fang}, \cite{only}), of which the mean embedding maps are injective.  Some well-known universal kernels are shown to be characteristic such as Gaussian RBF kernel, exponential kernel and binomial kernel. Denote the set of the mean embeddings by
$X_{\mu}=\{\mu _{x}:x\in \mathcal{M}_{1}(\mathcal{F})\}\subseteq H$ and the mean embeddings of $\widetilde{D}$ to $X_{\mu}$ by
$D=\{(\mu _{x_{i}},y_{i})\}_{i=1}^{|D|}$. Let $\rho $ be the $\mu $-induced
probability measure on the product space $Z=X_{\mu}\times Y$. The regression
function with respect to the measure $\rho $ is defined by
%
\begin{equation}
f_{\rho}(\mu _{x})=\int _{Y}yd\rho (y|\mu _{x}), \ \mu _{x}\in X_{\mu},
\label{eq1.1}
\end{equation}
in which $\rho (\cdot |\mu _{x})$ is the conditional probability measure
of $\rho $ induced at $\mu _{x}\in X_{\mu}$. $f_{\rho}$ is just the minimizer
of the least square problem
%
\begin{equation}
\nonumber
\mathcal{E}(f)=\int _{Z}(f(\mu _{x})-y)^{2}d\rho .
\end{equation}
Generally, the measure $\rho $ is unknown. In the scenario of distribution
regression, the distributions $\{x_{i}\}_{i=1}^{|D|}$ are still unknown,
we are only able to access their information by the random sample
\begin{equation*}
\widehat{D}=\big\{(\{x_{i,s}\}_{s=1}^{d_{i}},y_{i})\big\}_{i=1}^{|D|}
\end{equation*}
with size $d_{i}\in \mathbb N$, $i=1,2,...,|D|$ respectively.  In the
context of distribution regression, in a reproducing
kernel Hilbert space $(\mathcal{H}_{K},\|\cdot \|_{K})$ associated with
a Mercer kernel $K:X_{\mu}\times X_{\mu}\rightarrow \mathbb R$, the conventional regularization approach
follows the functional optimization scheme
%
\begin{equation}
f_{\widehat{D},\lambda }=\arg \min _{f\in \mathcal{H}_{K}}\bigg\{
\frac{1}{|D|}\sum _{i=1}^{|D|}\big(f(\mu _{\hat{x}_{i}})-y_{i}\big)^{2}+
\lambda \big\|f\big\|_{K}^{2}\bigg\},
\label{alone}
\end{equation}
in which
$
\hat{x}_{i}=\frac{1}{d_{i}}\sum _{s=1}^{d_{i}}\delta _{x_{i,s}}
$
serves as the empirical distribution determined by the observable quantity
$\widehat{D}=\big\{(\{x_{i,s}\}_{s=1}^{d_{i}},y_{i})\big\}_{i=1}^{|D|}$,
$\mu _{\hat{x}_{i}}=\frac{1}{d_{i}}\sum _{s=1}^{d_{i}}\widetilde{K}(
\cdot ,x_{i,s})$ is the mean embedding of $\hat{x}_{i}$, and $\lambda >0$ is the regularization
parameter. {\eqref{alone}} is essentially a Tikhonov regularized scheme in
RKHS. It is an extension of the one-stage kernel ridge regression scheme.

In recent years, there has been extensive research focused on the setting
of one-stage-sampling regularized least squares algorithm (\cite{s1},
\cite{s2}, \cite{multiguo}, \cite{s3}, \cite{sbsecond}, \cite{sb3},
\cite{mt5}, \cite{s4}, \cite{s5}, \cite{s6}). In the one-stage setting,
learning rates of regularized least squares schemes have been thoroughly
investigated using kernel method in learning theory (for kernel methods,
refer to e.g., \cite{s1}, \cite{s3}, \cite{s4}). However, there is still a need for
further exploration of regularized least squares algorithms with two sampling
stages. The existing theoretical analysis on learning rates for distribution
regression is only provided in \cite{fang} and \cite{only}. These works
demonstrate that the optimal minimax learning rates for Algorithm {\eqref{alone}} can be obtained through an integral operator approach under
certain mild conditions. It can be observed that, the above studies on
distribution regression are still confined to a single-penalty category,
where only one regularization term appears in Algorithm {\eqref{alone}} with
a corresponding single parameter $\lambda $. On the other hand, the multi-penalty
regularization, as a stable and robust method in one stage-sampling kernel
ridge regression setting (e.g.~\cite{multi1}, \cite{mt1},
\cite{multiguo}, \cite{mt2}, \cite{mt3}, \cite{mt4}, \cite{mt5},
\cite{mt6}), has been demonstrated to possess many advantages over single-penalty
regularization. In contrast to single penalty regularization, it can incorporate
any prior information into additional penalties. It thus can simultaneously
include various features in regularized solutions, such as boundedness,
monotonicity, and smoothness. With these merits, the multi-penalty regularization
scheme has been widely used in a variety of inspiring applications like
image reconstruction, earth gravity potential reconstruction, option pricing
models, data detection, and sparsity analysis. In the context of learning
theory, the learning rates of multi-penalty Tikhonov regularization have
been primarily studied in \cite{multi1} and \cite{multiguo}. However, the
exploration of the multi-penalty regularization scheme in the literature
on distribution regression remains incomplete and requires further investigation.
Inspired by the aforementioned advantages of the multi-penalty regularization
scheme, we aim to explore its performance in a distribution regression
setting to capture more potential features of distribution regression.

In this paper, we investigate a more general framework in two-stages distribution
regression by considering the novel multi-penalty regularization scheme:
%
\begin{equation}
f_{\widehat{D},\lambda _{1},\lambda _{2}}=\arg \min _{f\in
	\mathcal{H}_{K}}\bigg\{\frac{1}{|D|}\sum _{i=1}^{|D|}\big(f(\mu _{
	\hat{x}_{i}})-y_{i}\big)^{2}+\lambda _{1}\big\|f\big\|_{K}^{2}+
\lambda _{2}\big\|V_{D}f\big\|_{K}^{2}\bigg\}.
\label{algorithm}
\end{equation}
In {\eqref{algorithm}},
$V_{D}:\mathcal{H}_{K}\rightarrow \mathcal{H}_{K}$ is a bounded linear
operator which may depend on the data set $\widetilde{D}$ and its related first-stage
mean embedding set $D$. The goal of distribution regression in this paper
is to learn the regression function by utilizing Algorithm {\eqref{algorithm}} based on
the given training samples
$\widehat{D}=\big\{(\{x_{i,j}\}_{j=1}^{d_{i}},y_{i})\big\}_{i=1}^{|D|}$
with $x_{i,1},x_{i,2},...,x_{i,d_{i}}\sim x_{i}$ (i.i.d.). This paper aims
to investigate the learning rates of multi-penalty distribution regression {\eqref{algorithm}}. By deriving optimal learning rates for the proposed
multi-penalty distribution regression scheme via an integral operator approach,
we improve the results in the literature. On the other hand, in the real
world, with the development of data mining, large-scale data are always
collected in various application domains, including financial engineering,
medicine, business analysis, personal social network, sensor network, and
monitoring. In these applications, sensitive data, such as personal data,
are always trained in machine learning for different requirements. Hence,
it is essential to protect data privacy. In recent years, distributed learning
has been shown to be a powerful strategy for tackling privacy-preserving
problems (e.g. \cite{s2}, \cite{multiguo}, \cite{s3}, \cite{sbsecond}, \cite{sb3}, \cite{shi2019}). On the other hand, the unprecedentedly large data size and complexity
of distribution samples would always raise the difficulties of computing
in the distribution regression approach. Large-scale data would add unpredictable
storage burdens and memory capacity for a single machine. Meanwhile, it
would take a huge amount of time for a single machine to process scientific
computing on distribution regression. Motivated by these challenges, and
building upon the proposed multi-penalty distribution regression algorithm,
we introduce a novel distributed learning method to address these difficulties
in the realm of distribution regression. The distributed learning method
with multi-penalty distribution regression in this paper is based on a
divide-and-conquer approach. Specifically, for the given data set
$\widetilde{D}$, our distributed learning algorithm starts with partitioning
the mean-embedding data set $D$ (related to $\widetilde{D}$) and its associated
$\widehat{D}$ into $m$ disjoint sub-datasets $\{D_{j}\}_{j=1}^{m}$ and
$\{\widehat{D}_{j}\}_{j=1}^{m}$ with corresponding disjoint union
\begin{eqnarray*}
D=\bigcup _{i=1}^{m}D_{j}, \ \widehat{D}=\bigcup _{i=1}^{m}
\widehat{D}_{j}.
\end{eqnarray*}
Subsequently, we assign the corresponding local mean-embedding data set
$D_{j}$ and its associated second-stage local sample set
$\widehat{D}_{j}$ to one machine (processor) to produce a local estimator
$f_{\widehat{D}_{j},\lambda _{1},\lambda _{2}}$ in RKHS by the multi-penalty
distribution regression scheme {\eqref{algorithm}}. Once these local estimators
are communicated to a central processor, the global estimator
$\overline{f_{\widehat{D},\lambda _{1},\lambda _{2}}}$ is synthesized by
the central processor through the following averaging process
%
\begin{equation}
\overline{f_{\widehat{D},\lambda _{1},\lambda _{2}}}=\sum _{j=1}^{m}
\frac{|D_{j}|}{|D|}f_{\widehat{D}_{j},\lambda _{1},\lambda _{2}}
\label{disal}
\end{equation}
using the local estimators
$\left \{f_{\widehat{D}_{j},\lambda _{1},\lambda _{2}}\right \}_{j=1}^{m}$.
For Algorithm {\eqref{disal}}, the learning theory analysis for
$\overline{f_{\widehat{D},\lambda _{1},\lambda _{2}}}$ is conducted through
an integral operator approach.

We summarize some main contributions of the work. In this work, a novel
multi-penalty regularization scheme is proposed for distribution regression.
By incorporating additional penalties, Algorithm {\eqref{algorithm}} becomes
more flexible to capture features of regressor when regressing from distribution
samples. By integral operator approach, we study the learning rates of
Algorithm {\eqref{algorithm}} under different restrictive conditions on regression
function $f_{\rho}$. Optimal learning rates are derived when
$f_{\rho}\in \mathcal{H}_{K}$. In kernel-based learning theory, when the
target function does not belong to the underlying kernel space, the corresponding
problem is often referred to as a hard learning problem (\cite{hard}).
In this paper, we also provide rigorous analysis and derive learning rates
for the hard learning case $f_{\rho}\notin \mathcal{H}_{K}$. To the best
of our knowledge, this paper is the first work to study distribution regression
in a hard learning scenario. The learning rates in this paper improve the
existing achievable results. Finally, based on Algorithm {\eqref{algorithm}}, a novel multi-penalty distribution regression-based
distributed learning algorithm {\eqref{disal}} is proposed. Optimal rates
are achieved for this distributed learning algorithm. We will compare the
main results in this work with those in the literature in Section~\ref{sec3}.

\section{Notations, assumptions and main results}
\label{sec2}

We assume that, throughout the paper, there exists a constant $M>0$ such
that $|y|\leq M$ almost surely. $\widetilde{K}$ and $K$ are bounded Mercer
kernel (symmetric, continuous, positive semidefinite) with bounds
$B_{\widetilde{K}}$ and $\kappa $:
%
\begin{equation}
B_{\widetilde{K}}=\sup _{v\in \widetilde{X}}\widetilde{K}(v,v)<
\infty , \ \kappa =\sup _{\mu _{u}\in X_{\mu}}\sqrt{K(\mu _{u},\mu _{u})}<
\infty .
\label{bddker}
\end{equation}
Suppose that $\alpha \in (0,1]$ and $L>0$. Denote
$\mathcal{L}(Y,\mathcal{H}_{K})$ as the Banach space of the bounded linear
operators from space $Y$ to $\mathcal{H}_{K}$. Denote
$K_{\mu _{x}}=K(\mu _{x},\cdot )$, $\mu _{x}\in X_{\mu}$. We treat
$K_{\mu _{x}}$ as an element of $\mathcal{L}(Y,\mathcal{H}_{K})$ by defining
the linear mapping
%
\begin{equation}
\nonumber
K_{\mu _{x}}(y)=yK_{\mu _{x}}, \  y\in Y.
\end{equation}
We assume that the mapping
$K_{(\cdot )}:X_{\mu}\rightarrow \mathcal{L}(Y,\mathcal{H}_{K})$ is
$(\alpha ,L)$-H\"older continuous in the following sense
%
\begin{equation}
\big\|K_{\mu _{x}}-K_{\mu _{y}}\big\|_{\mathcal{L}(Y,\mathcal{H}_{K})}
\leq L\big\|\mu _{x}-\mu _{y}\big\|_{H}^{\alpha }, \ \forall (\mu _{x},
\mu _{y})\in X_{\mu}\times X_{\mu}.
\label{eq2.1}
\end{equation}
Due to the fact in \cite{only}, the set of mean embeddings $X_{\mu}$ is
a separable compact set of the continuous function space defined on $\widetilde{X}$. Denote
$\rho _{X_{\mu}}$ as the marginal distribution of $\rho $ on
$X_{\mu}$. Let $L_{\rho _{X_{\mu}}}^{2}$ be the Hilbert space of square-integrable
functions defined on $X_{\mu}$. For $f\in L_{\rho _{X_{\mu}}}^{2}$, denote
the $L_{\rho _{X_{\mu}}}^{2}$-norm of $f$ by
%
\begin{equation}
\nonumber
\big\|f\big\|_{\rho}=\big\|f\big\|_{L_{\rho _{X_{\mu}}}^{2}}=
\big\langle f,f\big\rangle _{\rho _{X_{\mu}}}^{1/2}=\bigg(\int _{X_{
		\mu}}\big|f(\mu _{x})\big|^{2}d\rho _{X_{\mu}}(\mu _{x})\bigg)^{1/2}.
\end{equation}
Define the integral operator $L_{K}$ on $L_{\rho _{X_{\mu}}}^{2}$ associated
with the Mercer kernel
$K: X_{\mu}\times X_{\mu}\rightarrow \mathbb R$ as
%
\begin{equation}
L_{K}(f)=\int _{X_{\mu}}K_{\mu _{x}}f(\mu _{x})d\rho _{X_{\mu}}, \ f
\in L_{\rho _{X_{\mu}}}^{2}.
\label{eq2.2}
\end{equation}
Because the set of mean embeddings $X_{\mu}$ is compact and $K$ is a Mercer
kernel, we know $L_{K}$ is a positive compact operator on
$L_{\rho _{X_{\mu}}}^{2}$. Then for any $r>0$, its $r$-th power
$L_{K}^{r}$ is well-defined according to spectral theorem in functional
calculus. Throughout the paper, we assume the following
\emph{regularity condition} for the regression function $f_{\rho}$:
%
\begin{equation}
f_{\rho}=L_{K}^{r}(g_{\rho}) \ \text{for some} \ g_{\rho}\in L_{\rho _{X_{
			\mu}}}^{2}, \  r>0.
\label{regularity}
\end{equation}
The above assumption means that the regression function lies in the range
of operator $L_{K}^{r}$, the special case $r=1/2$ corresponds to
$f_{\rho}\in \mathcal{H}_{K}$. We use the \emph{effective dimension} $\mathcal{N}(\lambda _{1})$ to measure
the capacity of $\mathcal{H}_{K}$ with respect to the measure
$\rho _{X_{\mu}}$, which is defined to be the trace of the operator
$(\lambda _{1}I+L_{K})^{-1}L_{K}$, that is
%
\begin{equation}
\mathcal{N}(\lambda _{1})=\text{Tr}((\lambda _{1}I+L_{K})^{-1}L_{K}),
\ \lambda _{1}>0.
\label{2.5}
\end{equation}
For the effective dimension $\mathcal{N}(\lambda _{1})$, we need the following
capacity assumption, which focuses on the $\beta $-rate of
$\mathcal{N}(\lambda _{1})$: there exists a constant
$\mathcal{C}_{0}$ such that for any $\lambda _{1}>0$,
%
\begin{equation}
\mathcal{N}(\lambda _{1})\leq \mathcal{C}_{0}\lambda _{1}^{-\beta},
\  \text{for some} \ 0<\beta \leq 1.
\label{cap}
\end{equation}

In this paper, we assume that the sample set
$D=\{(\mu _{x_{i}},y_{i})\}_{i=1}^{|D|}$ is drawn independently according
to the probability measure $\rho $. Sample
$\{x_{i,s}\}_{s=1}^{d_{i}}$ is drawn independently according to probability
distribution $x_{i}$ for $i=1,2,...,|D|$. For the operator $V_{D}$, throughout the paper, we assume that there is
a constant $c_{V}>0$ independent of the data set such that
%
\begin{equation}
\big\|V_{D}^{T}V_{D}\big\|\leq c_{V} \ \ \ \ \ \
\text{a.s.},\
\label{L}
\end{equation}
in which $V_{D}^{T}$ denotes the adjoint operator of $V_{D}$ and
$\big\|\cdot \big\|$ is the operator norm. Since
$\big \|V_{D}\big \|^{2}= \big\|V_{D}^{T}V_{D}\big\|$, therefore an equivalent
assumption 
\begin{equation}
\nonumber
\big \|V_{D}\big \|\leq \sqrt{c_{V}}, \ \ \ \ \ \
\text{a.s.},
\end{equation}
can also be used
in this paper. In the following, we denote the quantity
$\mathcal{B}_{|D|,\lambda _{1}}$ and
$\mathcal{B}_{|D|,\lambda _{1}}'$ as
%
\begin{equation}
\mathcal{B}_{|D|,\lambda _{1}}=\frac{2\kappa }{\sqrt{|D|}}\bigg(
\frac{\kappa }{\sqrt{|D|\lambda _{1}}}+\sqrt{\mathcal{N}(\lambda _{1})}
\bigg),
\label{b}
\end{equation}
%
\begin{equation}
\mathcal{B}_{|D|,\lambda _{1}}'=\frac{1}{|D|\sqrt{\lambda _{1}}}+
\frac{\sqrt{\mathcal{N}(\lambda _{1})}}{\sqrt{|D|}}.
\label{bp}
\end{equation}
It can be observed that $\mathcal{B}_{|D|,\lambda _{1}}$ and
$\mathcal{B}_{|D|,\lambda _{1}}'$ only differ in $\kappa $-scaling sense.
$\mathcal{B}_{|D|,\lambda _{1}}$ and
$\mathcal{B}_{|D|,\lambda _{1}}'$ will be used in error upper bound representation
and operator estimates in subsequent analysis.

Consider the estimator $f_{\widehat{D},\lambda _{1},\lambda _{2}}$  defined in Algorithm {\eqref{algorithm}}. We present our main results on error estimate between
$f_{\widehat{D},\lambda _{1},\lambda _{2}}$ and $f_{\rho}$ as detailed
in the following theorems. These results offer estimates on the expected
difference between $f_{\widehat{D},\lambda _{1},\lambda _{2}}$ and
$f_{\rho}$ in $L_{\rho _{X_{\mu}}}^{2}$-norm with the expectation taken
for both $D$ and $\widehat{D}$. The first theorem presents a general error
estimate without imposing decaying restrictions on the effective dimension
$\mathcal{N}(\lambda _{1})$.
%
\begin{thm}%
\label{thm1}
Assume that the regularity condition {\eqref{regularity}} holds with
$1/2\leq r\leq 1$, $|y|\leq M$ holds almost surely, the operator boundedness
condition {\eqref{L}} holds and the mapping
$K_{(\cdot )}:X_{\mu}\rightarrow \mathcal{L}(Y,\mathcal{H}_{K})$ is
$(\alpha ,L)$-H\"older continuous with $\alpha \in (0,1]$ and $L>0$. If the regularization parameters
$\lambda _{1}, \lambda _{2}\in (0,1)$ satisfy the relation
$2c_{V}\lambda _{2}=\lambda _{1}^{2r}$ and
$\widetilde{d}$ satisfies
$\frac{1}{\widetilde{d}^{\frac{\alpha }{2}}}=\frac{1}{|D|}\sum _{i=1}^{|D|}
\frac{1}{d_{i}^{\alpha /2}}$, then we have
\begin{eqnarray}
	\nonumber
	&&E\bigg[\big\|f_{\widehat{D},\lambda _{1},\lambda _{2}}-f_{\rho}
	\big\|_{\rho}\bigg]
	\\
	\nonumber
	&&\leq 2\big(2\sqrt{2}(2+\sqrt{\pi})^{\frac{1}{2}}\kappa L
	\frac{2^{\frac{\alpha +2}{2}}B_{\widetilde{K}}^{\frac{\alpha }{2}}}{\lambda _{1}\widetilde{d}^{\frac{\alpha }{2}}}+2
	\big)\bigg(
	\frac{2\mathcal{B}_{|D|,\lambda _{1}}}{\sqrt{\lambda _{1}}}+1\bigg)^{2}
	\frac{2^{\frac{\alpha }{2}}B_{\widetilde{K}}^{\frac{\alpha }{2}}}{\lambda _{1}^{\frac{1}{2}}\widetilde{d}^{\frac{\alpha }{2}}}
	\Bigg[(2+\sqrt{\pi})^{\frac{1}{2}}LM(2\Gamma (3)+\log ^{2}2)
	\\
	\nonumber
	&&\ +2(2\Gamma (5)+\log ^{4}2)^{\frac{1}{2}}\Bigg(2\sqrt{6}(2\Gamma (5)+
	\log ^{4}2)^{\frac{1}{2}}\frac{M}{\kappa }
	\frac{\mathcal{B}_{|D|,\lambda _{1}}}{\sqrt{\lambda _{1}}}\bigg(
	\frac{2\mathcal{B}_{|D|,\lambda _{1}}}{\sqrt{\lambda _{1}}}+1\bigg)+
	\sqrt{3}(\lambda _{1}+\lambda _{2}c_{V})\times
	\\
	\nonumber
	&&\ \ \lambda _{1}^{r-\frac{3}{2}}2^{r-\frac{1}{2}}\|g_{\rho}\|_{\rho}(2
	\Gamma (4r-1)+\log ^{4r-2}2)^{\frac{1}{2}}\bigg(
	\frac{2\mathcal{B}_{|D|,\lambda _{1}}}{\sqrt{\lambda _{1}}}+1\bigg)^{2r-1}+
	\sqrt{3}\kappa ^{r-\frac{1}{2}}\|g_{\rho}\|_{\rho}\Bigg)\Bigg]
	\\
	\nonumber
	&&\ \ +4(2\Gamma (3)+\log ^{2}2)^{\frac{1}{2}}(2\Gamma (5)+\log ^{4}2)^{
		\frac{1}{2}}\bigg(
	\frac{2\mathcal{B}_{|D|,\lambda _{1}}}{\sqrt{\lambda _{1}}}+1\bigg)^{2}
	\frac{M}{\kappa }\mathcal{B}_{|D|,\lambda _{1}}
	\\
	\nonumber
	&&\ \ +(2\Gamma (2r+1)+\log ^{2r}2)2^{r}(\lambda _{1}^{r}+\lambda _{1}^{r-1}
	\lambda _{2}c_{V})\|g_{\rho}\|_{\rho}\bigg(
	\frac{2\mathcal{B}_{|D|,\lambda _{1}}}{\sqrt{\lambda _{1}}}+1\bigg)^{2r}.
\end{eqnarray}
\end{thm}
The next result establishes the explicit learning rates of multi-penalty
distribution regression after we further assume the decaying capacity condition
on effective dimension
$\mathcal{N}(\lambda _{1})\leq \mathcal{C}_{0}\lambda _{1}^{-\beta}$,
$\forall \lambda _{1}>0$, for some $\beta \in (0,1]$. The following result on minimax
optimal learning rates for multi-penalty distribution regression algorithm {\eqref{algorithm}} holds.
%
\begin{thm}%
\label{thm2}
Assume that the regularity condition {\eqref{regularity}} holds with
$1/2\leq r\leq 1$, $|y|\leq M$ holds almost surely, the capacity condition {\eqref{cap}} holds with index $\beta \in (0,1]$, the operator boundedness
condition {\eqref{L}} holds and the mapping
$K_{(\cdot )}:X_{\mu}\rightarrow \mathcal{L}(Y,\mathcal{H}_{K})$ is
$(\alpha ,L)$-H\"older continuous with $\alpha \in (0,1]$ and $L>0$. If
we choose the regularization parameters
$\lambda _{1}=|D|^{-\frac{1}{2r+\beta }}$,
$\lambda _{2}=\frac{1}{2c_{V}}|D|^{-\frac{2r}{2r+\beta }}$ and second-stage
sample size
$d_{1}=d_{2}=\cdots =d_{|D|}=|D|^{\frac{1+2r}{\alpha (2r+\beta )}}$, then
we have
%
\begin{equation}
	E\bigg[\big\|f_{\widehat{D},\lambda _{1},\lambda _{2}}-f_{\rho}\big\|_{
		\rho}\bigg]=\mathcal{O}\big(|D|^{-\frac{r}{2r+\beta }}\big).
	\label{eq2.9}
\end{equation}
\end{thm}
The above results handle standard setting $r\in [1/2,1]$, which corresponds
to $f_{\rho}\in \mathcal{H}_{K}$. The following two results are concerned
with error bounds and learning rates in the hard learning scenario
$f_{\rho}\notin \mathcal{H}_{K}$. The investigation of this scenario does
not appear in the literature on distribution regression. The following
error bound holds without capacity assumption
on effective dimension $\mathcal{N}(\lambda _{1})$.
%
\begin{thm}%
\label{thm3}
Assume that the regularity condition {\eqref{regularity}} holds with
$0< r< 1/2$, $|y|\leq M$ holds almost surely, the operator boundedness
condition {\eqref{L}} holds and the mapping
$K_{(\cdot )}:X_{\mu}\rightarrow \mathcal{L}(Y,\mathcal{H}_{K})$ is
$(\alpha ,L)$-H\"older continuous with $\alpha \in (0,1]$ and $L>0$. If the regularization parameters
$\lambda _{1}, \lambda _{2}\in (0,1)$ satisfy the relation
$2c_{V}\lambda _{2}=\lambda _{1}$ and
$\widetilde{d}$ satisfies
$\frac{1}{\widetilde{d}^{\frac{\alpha }{2}}}=\frac{1}{|D|}\sum _{i=1}^{|D|}
\frac{1}{d_{i}^{\alpha /2}}$, then we have
%
\begin{eqnarray}
	\nonumber
	&&E\bigg[\big\|f_{\widehat{D},\lambda _{1},\lambda _{2}}-f_{\rho}
	\big\|_{\rho}\bigg]
	\\
	\nonumber
	&&\leq 2\bigg(2\sqrt{2}(2+\sqrt{\pi})^{\frac{1}{2}}\kappa L
	\frac{2^{\frac{\alpha +2}{2}}B_{\widetilde{K}}^{\frac{\alpha }{2}}}{\lambda _{1}\widetilde{d}^{\frac{\alpha }{2}}}+2
	\bigg)(2+\sqrt{\pi})^{\frac{1}{2}}LM
	\frac{2^{\frac{\alpha }{2}}B_{\widetilde{K}}^{\frac{\alpha }{2}}}{\lambda _{1}^{\frac{1}{2}}\widetilde{d}^{\frac{\alpha }{2}}}(2
	\Gamma (3)+\log ^{2}2)\bigg(
	\frac{2\mathcal{B}_{|D|,\lambda _{1}}}{\sqrt{\lambda _{1}}}+1\bigg)^{2}
	\\
	\nonumber
	&&+\bigg(2\sqrt{2}(2+\sqrt{\pi})^{\frac{1}{2}}\kappa L
	\frac{2^{\frac{\alpha +2}{2}}B_{\widetilde{K}}^{\frac{\alpha }{2}}}{\lambda _{1}\widetilde{d}^{\frac{\alpha }{2}}}+2
	\bigg)\bigg(
	\frac{2\mathcal{B}_{|D|,\lambda _{1}}}{\sqrt{\lambda _{1}}}+1\bigg)^{2}(2
	\Gamma (5)+\log ^{4}2)^{\frac{1}{2}}\kappa L(2+\sqrt{\pi})^{
		\frac{1}{2}}2^{\frac{\alpha +2}{2}}B_{\widetilde{K}}^{
		\frac{\alpha }{2}}\times
	\\
	\nonumber
	&&\Bigg(2\sqrt{3}(2\Gamma (9)+\log ^{8}2)^{\frac{1}{4}}(2\Gamma (5)+
	\log ^{4}2)^{\frac{1}{4}}\Big(
	\frac{2\mathcal{B}_{|D|,\lambda _{1}}}{\sqrt{\lambda _{1}}}+1\Big)^{2}
	\Big[2M(\kappa +1)
	\frac{1}{\lambda _{1}^{\frac{1}{2}}\widetilde{d}^{\frac{\alpha }{2}}}
	\frac{1}{\sqrt{\lambda _{1}}}\mathcal{B}_{|D|,\lambda _{1}}'
	\\
	\nonumber
	&&+2(\kappa ^{2}+\kappa )\big \|g_{\rho}\big \|_{\rho}
	\frac{\lambda _{1}^{r-1}}{\lambda _{1}^{\frac{1}{2}}\widetilde{d}^{\frac{\alpha }{2}}}
	\mathcal{B}_{|D|,\lambda _{1}}'\Big]+2\sqrt{3}\big \|g_{\rho}\big \|_{
		\rho}
	\frac{\lambda _{1}^{r-\frac{1}{2}}}{\lambda _{1}^{\frac{1}{2}}\widetilde{d}^{\frac{\alpha }{2}}}
	\Bigg)
	\\
	\nonumber
	&&+4(2\Gamma (5)+\log ^{4}2)^{\frac{1}{2}}(2\Gamma (3)+\log ^{2}2)^{
		\frac{1}{2}}\bigg(
	\frac{2\mathcal{B}_{|D|,\lambda _{1}}}{\sqrt{\lambda _{1}}}+1\bigg)^{2}
	\Big[M(\kappa +1)\mathcal{B}_{|D|,\lambda _{1}}'+\|g_{\rho}\|_{\rho}(\kappa ^{2}+\kappa )\lambda _{1}^{r-
		\frac{1}{2}}\mathcal{B}_{|D|,\lambda _{1}}'\Big]
	\\
	&& +2\|g_{\rho}\|_{\rho}c_{V}
	\lambda _{2}\lambda _{1}^{r-1}+\lambda _{1}^{r}\|g_{\rho}\|_{\rho}.
	\label{eq2.10}
\end{eqnarray}
\end{thm}

When the capacity condition for $\mathcal{N}(\lambda _{1})$ holds, for
$r\in (0,1/2)$, by assigning new $\lambda _{1}$, $\lambda _{2}$,
$d_{1}, d_{2},..., d_{|D|}$, we derive following learning rates for the
case $f_{\rho}\notin \mathcal{H}_{K}$.
%
\begin{thm}%
\label{thm4}
Assume that the regularity condition {\eqref{regularity}} holds with
$0< r<1/2$ and $|y|\leq M$ almost surely. Assume the capacity condition {\eqref{cap}} holds with index $\beta \in (0,1]$, the operator boundedness
condition {\eqref{L}} holds and the mapping
$K_{(\cdot )}:X_{\mu}\rightarrow \mathcal{L}(Y,\mathcal{H}_{K})$ is
$(\alpha ,L)$-H\"older continuous with $\alpha \in (0,1]$ and $L>0$. If we
choose the regularization parameters
$\lambda _{1}=|D|^{-\frac{1}{1+\beta }}$,
$\lambda _{2}=\frac{1}{2c_{V}}|D|^{-\frac{1}{1+\beta }}$ and choose
$d_{1}=d_{2}=\cdots =d_{|D|}=|D|^{\frac{2}{\alpha (1+\beta )}}$, then we
have
%
\begin{equation}
	E\bigg[\big\|f_{\widehat{D},\lambda _{1},\lambda _{2}}-f_{\rho}\big\|_{
		\rho}\bigg]=\mathcal{O}\big(|D|^{-\frac{r}{1+\beta }}\big).
	\label{eq2.11}
\end{equation}
\end{thm}

For the multi-penalty distribution regression-based distributed learning
algorithm {\eqref{disal}}, under a mild restriction on the machine number
$m$, we derive the following result on optimal learning rate of the estimator
$\overline{f_{\widehat{D},\lambda _{1},\lambda _{2}}}$ generated from our
new distributed learning scheme of distribution regression in {\eqref{disal}}.
%
\begin{thm}%
\label{disrate}
Assume that the regularity condition {\eqref{regularity}} holds with
$1/2\leq r\leq 1$, $|y|\leq M$ almost surely, the capacity condition {\eqref{cap}} holds and the operator boundedness
condition {\eqref{L}} holds with a constant $c_{V}>0$ independent of data sets. The mapping
$K_{(\cdot )}:X_{\mu}\rightarrow \mathcal{L}(Y,\mathcal{H}_{K})$ is assumed
to be $(\alpha ,L)$-H\"older continuous with $\alpha \in (0,1]$ and
$L>0$. 
If the sample size of the local machines satisfies $|D_{j}|=|D|/m$ for
$j=1,2,...,m$, the penalties satisfy
$\lambda _{1}=|D|^{-\frac{1}{2r+\beta }}$,
$\lambda _{2}=\frac{1}{2c_{V}}|D|^{-\frac{2r}{2r+\beta }}$, the second-stage
sample sizes are taken as $d=|D|^{\frac{1+2r}{\alpha (2r+\beta )}}$ and total number
$m$ of local machines satisfies
%
\begin{equation}
	\nonumber
	m\leq |D|^{\frac{2r-1}{2r+\beta }},
\end{equation}
then we have
%
\begin{equation}
	E\Big[\big\|\overline{f_{\widehat{D},\lambda _{1},\lambda _{2}}}-f_{
		\rho}\big\|_{\rho}\Big]=\mathcal{O}(|D|^{-\frac{r}{2r+\beta }}).
	\label{eq2.12}
\end{equation}
\end{thm}

The theoretical result in {Theorem~\ref{disrate}} indicates that our new
proposed distributed estimator
$\overline{f_{\widehat{D},\lambda _{1},\lambda _{2}}}$ can achieve optimal
learning rates of $\mathcal{O}(|D|^{-\frac{r}{2r+\beta }})$. Hence our
theoretical result provides a satisfactory guarantee for the nice learning
ability of $\overline{f_{\widehat{D},\lambda _{1},\lambda _{2}}}$. Moreover,
{Theorem~\ref{disrate}} provides the possibility that, when handling massive
distribution samples in practical computation, we can divide the large-scale
distribution data set $D$ into several subsets to reduce the huge computational
burden. Hence these facts indicate the great potential of our distributed
estimator $\overline{f_{\widehat{D},\lambda _{1},\lambda _{2}}}$ in practical
computation field on handling massive distribution data in the big data
era.\looseness=-1

\section{Related work and discussions}
\label{sec3}

We make some comparisons between this work and those in the existing literature.
In the past decade, studies on classical regularized least squares regression
are on the way toward maturity. Various approaches are utilized to analyze
the minimax learning rates of regularized least squares regression. Recently,
studies have just turned to the learning rates of distribution regression.
Theoretical investigation on distribution regression is very limited. To
the best of our knowledge, the only existing works on learning theory analysis
of learning rates of distribution regression scheme {\eqref{alone}} are contained
in \cite{fang} and \cite{only}. In {Theorem~\ref{thm1}} and {Theorem~\ref{thm2}}, we show that an optimal learning rate of
%
\begin{equation}
\nonumber
E\bigg[\big\|f_{\widehat{D},\lambda _{1},\lambda _{2}}-f_{\rho}\big\|_{
	\rho}\bigg]=\mathcal{O}\big(|D|^{-\frac{r}{2r+\beta }}\big)
\end{equation}
for multi-penalty distribution regression is obtained for
$\frac{1}{2}\leq r\leq 1$. The results improve the work of
\cite{only} and \cite{fang} to a more general setting. Specifically, the
optimal rates cover the case of \cite{only} with $r=1/2$, in contrast to
the suboptimal rate of
$E[\|f_{\widehat{D},\lambda }-f_{\rho}\|^{2}]=\mathcal{O}(|D|^{-
\frac{2}{5}})$. Also, the rates coincide with the optimal rate in
\cite{fang}, and this work improves the result in \cite{fang} to a more
general setting by considering the additional penalty with parameter
$\lambda _{2}$ and a bounded operator $V_{D}$ in its multi-penalty regularization
framework. It can also be observed that the regression scheme of
\cite{fang} can be treated as a degenerate case of the scheme {\eqref{algorithm}} with $\lambda _{2}=0$. As we do not impose coercive conditions
on operator $V_{D}$ except for its boundedness, Algorithm {\eqref{algorithm}} presents potential flexibility in handling distribution
regression. In contrast to previous works in the literature of distribution
regression, due to the introduction of an additional new penalty induced
by the operator $V_{D}$, difficulties arise when we need to handle both
first-stage operator estimates and second-stage estimates since the operator
$V_{D}$ has already participated in the operator analysis in each part
where the operator decomposition process needs to be carried out. To face the analysis challenges
arising from the additional penalty induced by the operator $V_{D}$, we
have rigorously introduced the quantities
$\mathcal{A}_{D,\lambda _{1},\lambda _{2},V_{D}}$ and
$\Omega _{D,\lambda _{1},\lambda _{2},V_{D}}$ below in order to address the influence
of $V_{D}$ and perform operator decomposition successfully in the current
setting.

On the other hand, all existing results in distribution regression literature
consider the case when the regression function
$f_{\rho}\in \mathcal{H}_{K}$. In kernel-based learning theory, when the
target function does not live in the underlying kernel space, the corresponding
problem is often referred to as a hard learning problem (see, e.g.
\cite{hard}) which is very hot in recent studies in the literature. In
kernel ridge regression, the hard learning problems have been intensively
studied very recently (e.g. \cite{lrrc2020}, \cite{hard},
\cite{shi2019}, \cite{sf2020}). One of the reasons for investigating hard
learning scenario is that, the assumption that $f_{\rho}$ lies precisely
in $\mathcal{H}_{K}$ is quite restrictive in many practical applications.
However, for $f_{\rho}$ satisfying the regularity condition
$r\in (0,1/2)$, namely $f_{\rho}\notin \mathcal{H}_{K}$, which belongs
to the hard learning scenario, the learning theory has not been established
for distribution regression and the convergence analysis on min-max learning
rates has not been carried out. Hence it is interesting to ask whether
the multi-penalty distribution regression estimator
$f_{\widehat{D},\lambda _{1},\lambda _{2}}$ still maintains a nice learning
performance in the hard learning setting. In {Theorem~\ref{thm3}} and {Theorem~\ref{thm4}} of this work, we have answered the question by carrying out
learning rate analysis and improved the analyzable regularity range from
[1/2,1] for the standard setting $f_{\rho}\in \mathcal{H}_{K}$ (\cite{fang},
\cite{only}) to $r\in (0,1/2)$ for hard learning setting
$f_{\rho}\notin \mathcal{H}_{K}$. The theoretical results of {Theorem~\ref{thm3}} and {Theorem~\ref{thm4}} fill the gap of the study of hard learning
problem in distribution regression.

Let us describe some core differences in theoretical approaches between
the current work and previous work \cite{fang}. As a starting point of
convergence analysis, the previous work \cite{fang}, in fact, employed
an error decomposition in terms of
\begin{equation*}
f_{\widehat{D}, \lambda }-f_{\rho}=(f_{\widehat{D}, \lambda }-f_{D,
	\lambda } )+\left (f_{D, \lambda }-f_{\rho}\right )
\end{equation*}
in which $f_{D,\lambda }$ is the first-stage counterpart of
$f_{\widehat{D},\lambda }$. For the hard learning scenario
$f_{\rho}\notin \mathcal{H}_{K}$, one of the difficulties is that such
a type of error decomposition approach fails in the tough settings when
$f_{\rho}$ does not lie in the RKHS $\mathcal{H}_{K}$ since we can not
directly handle $f_{D, \lambda }-f_{\rho}$ via operator representation
and the corresponding underlying bounds for estimating
$f_{\widehat{D}, \lambda }-f_{D, \lambda }$ also lose. In this work, to
handle such a tough setting, we in fact, essentially carry out a further
new two-stage error decomposition
\begin{equation*}
(f_{\widehat{D},\lambda _{1},\lambda _{2}}-f_{D,\lambda _{1},\lambda _{2}})+(f_{D,
	\lambda _{1},\lambda _{2}}-f_{\lambda _{1}})+(f_{\lambda _{1}}-f_{
	\rho})
\end{equation*}
for the estimator $f_{\widehat{D},\lambda _{1},\lambda _{2}}$ which is
generated from our more general multi-penalty regularized regression scheme {\eqref{algorithm}}, one of the technical novelties of this type two-stage
error decomposition is to use the crucial data-free minimizer
$f_{\lambda _{1}}$ (associated with the first regularization parameter
$\lambda _{1}$) defined by
$f_{\lambda _{1}}=\arg \min _{f\in \mathcal{H}_{K}}\{\|f-f_{\rho}\|_{L_{
	\rho _{X_{\mu}}}^{2}}^{2}+\lambda _{1}\|f\|_{K}^{2}\}$ as a key bridge
to realize core estimates in both the first-stage representation and the
second-stage representation (also see discussions in {Remark~\ref{decomposition_discuss}} and {Remark~\ref{secondstage_discuss}}). Fortunately,
we also observe that the introduction of $f_{\lambda _{1}}$ also performs
very well when handling some main terms hidden in the estimates of the
second-stage
\begin{equation*}
f_{\widehat{D},\lambda _{1},\lambda _{2}}-f_{D,\lambda _{1},\lambda _{2}}
\end{equation*}
(for example, see {Proposition~\ref{rfd}} and the proof of {Theorem~\ref{thm4}}) where $f_{D,\lambda _{1},\lambda _{2}}$ is defined in {\eqref{fuzhuf}}. These nice performances just right result in a satisfactory
mini-max rate of $\mathcal{O}\big(|D|^{-\frac{r}{1+\beta }}\big)$ ($r
\in (0,1/2)$) for
$f_{\widehat{D},\lambda _{1},\lambda _{2}}-f_{D,\lambda _{1},\lambda _{2}}$
after using our selection rule for regularization parameters
$\lambda _{1}$, $\lambda _{2}$ (see analysis in the proof of {Theorem~\ref{thm4}}). The detailed analysis has been carried out in Section~\ref{nonstandardcase} (Also see {Remark~\ref{decomposition_discuss}} and
{Remark~\ref{secondstage_discuss}}). In addition, due to the multi-penalty
nature of our algorithm, we need further take the influence of the operator
$V_{D}$ that induces the additional penalty into consideration in each
operator decomposition process. In contrast to previous works, the participation
of $V_{D}$ raises the difficulties of the operator decomposition processes
and makes the operator norm estimates in both the first and second stage
more intricate. To overcome the difficulties arising from the operator
$V_{D}$, as mentioned above, we have rigorously introduced the new quantities
$\mathcal{A}_{D,\lambda _{1},\lambda _{2},V_{D}}$ and
$\Omega _{D,\lambda _{1},\lambda _{2},V_{D}}$ and successfully realize
the operator decomposition process in our setting.

Another contribution of this paper is that, based on a divide-and-conquer
approach, we present a novel two-stage multi-penalty distribution-regression-based
distributed learning algorithm. It can be witnessed that, among the existing
works in the literature on distribution regression, there is no powerful
method for handling large-scale or massive distribution data problems.
The theoretical result is also lacking and not established. The existing
models of distribution regression mainly focus on the single-machine model.
When the data scale of the distribution samples is extremely large, their
drawback becomes obvious since the single machine can not perform efficiently
when handling massive distribution data. However, an appropriate distributed
learning scheme for two-stage distribution regression has not been proposed.
One of the difficulties in deriving results of learning rates is that a
well-defined two-stage distributed distribution regression estimator is
still lacking. Another challenge is that a suitable two-stage error decomposition
for the estimator has not yet been established either before. In learning
theory, establishing an appropriate error decomposition related to generalization
error is often a crucial step to obtaining the desired learning rates of
the estimator. The current work fills the gap by proposing the distributed
learning estimator
$\overline{f_{\widehat{D},\lambda _{1},\lambda _{2}}}$ (defined in {\eqref{disal}}) for handling massive distribution data and providing a novel
two-stage decomposition for
\begin{equation*}
\overline{f_{\widehat{D},\lambda _{1},\lambda _{2}}}- f_{\widehat{D},
	\lambda _{1},\lambda _{2}}
\end{equation*}
via operator representation (See {Proposition~\ref{two-stage_distributed_decomposition}}). The optimal rates are obtained
for this method in {Theorem~\ref{disrate}}. The mini-max rates coincide with
the rates we establish in non-distributed settings. Therefore the new regression
method presents the advantages of reducing the computational burden on
computing time, storage bottleneck, and memory requirements over a single
machine when processing the distribution regression algorithm. Distributed
learning algorithms have been extensively studied recently (e.g.,
\cite{s2,s3,sbsecond,sb3}). From these existing methods, it can be observed
that the distributed learning method of handling distribution or functional
data is still unexplored. By presenting the new distributed learning algorithm
with multi-penalty distribution regression and proving its optimal rates,
this paper provides an effective distributed learning method for handling
distribution data. Based on the above discussions and the satisfactory
theoretical result in {Theorem~\ref{disrate}}, the new proposed multi-penalty
distribution-regression-based algorithm possesses great potential for handling
large-scale distribution data in modern computational science and statistical
science.

\section{Analysis and estimates on learning rates}
\label{sec4}

In this section, we provide the analysis and proofs for the main results
of {Theorems~\ref{thm1}, \ref{thm2}, \ref{thm3}, \ref{thm4}}. In order to estimate the expected
learning rates of the algorithm, the subsequent estimates for our multi-penalty
distribution regression estimator
$f_{\widehat{D},\lambda _{1},\lambda _{2}}$ are based on the following
basic two-stage error decomposition
%
\begin{equation}
\|f_{\widehat{D},\lambda _{1},\lambda _{2}}-f_{\rho}\|_{\rho}\leq \|f_{
	\widehat{D},\lambda _{1},\lambda _{2}}-f_{D,\lambda _{1},\lambda _{2}}+f_{D,
	\lambda _{1},\lambda _{2}}-f_{\rho }\|_{\rho }\leq \|f_{\widehat{D},
	\lambda _{1},\lambda _{2}}-f_{D,\lambda _{1},\lambda _{2}}\|_{\rho }+
\|f_{D,\lambda _{1},\lambda _{2}}-f_{\rho }\|_{\rho },
\label{eq4.1}
\end{equation}
and a further two-stage error decomposition
%
\begin{equation}
\|f_{\widehat{D},\lambda _{1},\lambda _{2}}-f_{\rho}\|_{\rho}\leq \|f_{
	\widehat{D},\lambda _{1},\lambda _{2}}-f_{D,\lambda _{1},\lambda _{2}}
\|_{\rho}+\|f_{D,\lambda _{1},\lambda _{2}}-f_{\lambda _{1}}\|_{\rho}+
\|f_{\lambda _{1}}-f_{\rho}\|_{\rho}
\label{eq4.2}
\end{equation}
where
\begin{equation*}
f_{\lambda _{1}}=\arg \min _{f\in \mathcal{H}_{K}}\bigg\{\|f-f_{\rho}\|_{L_{
		\rho _{X_{\mu}}}^{2}}^{2}+\lambda _{1}\|f\|_{K}^{2}\bigg\}.
\end{equation*}
In the above decomposition, $f_{D,\lambda _{1},\lambda _{2}}$ is the first-stage
counterpart of $f_{\widehat{D},\lambda _{1},\lambda _{2}}$ which is the
minimizer of the following classical multi-penalty regression scheme associated
with the first-stage sample $D=\{(\mu _{x_{i}},y_{i})\}_{i=1}^{|D|}$,
%
\begin{equation}
f_{D,\lambda _{1},\lambda _{2}}=\arg \min _{f\in \mathcal{H}_{K}}
\bigg\{\frac{1}{|D|}\sum _{i=1}^{|D|}\big(f(\mu _{x_{i}})-y_{i}\big)^{2}+
\lambda _{1}\big\|f\big\|_{K}^{2}+\lambda _{2}\big\|V_{D} f\big\|_{K}^{2}
\bigg\}.
\label{fuzhuf}
\end{equation}
This quantity serves as an important bridge in subsequent analysis.

To make a preliminary preparation, we define the sampling operator
$S_{D}:\mathcal{H}_{K}\rightarrow \mathbb R^{|D|}$ associated with the
first-stage sample as
%
\begin{equation}
\nonumber
S_{D}f=(f(\mu _{x_{1}}),f(\mu _{x_{2}}),...,f(\mu _{x_{|D|}}))^{T},
\ f\in \mathcal{H}_{K}.
\end{equation}
The adjoint operator $S_{D}^T:\mathbb R^{|D|}\rightarrow \mathcal{H}_{K}$ is given by
%
\begin{equation}
\nonumber
S_{D}^{T}\mathbf{c}_{D}=\frac{1}{|D|}\sum _{i=1}^{|D|}c_{i}K_{\mu _{x_{i}}},
\ \mathbf{c}_{D}=(c_{1},c_{2},...,c_{|D|})^T\in \mathbb R^{|D|}.
\end{equation}
Then we can define the first-stage empirical operator $L_{K,D}$ of
$L_{K}$ as
%
\begin{equation}
\nonumber
L_{K,D}(f)=S_{D}^{T}S_{D}(f)=\frac{1}{|D|}\sum _{i=1}^{|D|}f(\mu _{x_{i}})K_{
	\mu _{x_{i}}}=\frac{1}{|D|}\sum _{i=1}^{|D|}\langle K_{\mu _{x_{i}}},f
\rangle _{K}K_{\mu _{x_{i}}}, \ f\in \mathcal{H}_{K}.
\end{equation}
We also define the sampling operator $\hat{S}_{D}:\mathcal{H}_{K}\rightarrow \mathbb R^{|D|}$ associated with the
second-stage sample as
%
\begin{equation}
\nonumber
\hat{S}_{D}f=(f(\mu _{\hat{x}_{1}}),f(\mu _{\hat{x}_{2}}),...,f(\mu _{
	\hat{x}_{|D|}}))^{T}, \ f\in \mathcal{H}_{K}.
\end{equation}
Its adjoint operator $\hat{S}_{D}^{T}:\mathbb R^{|D|}\rightarrow \mathcal{H}_{K}$ is given by
%
\begin{equation}
\nonumber
\hat{S}_{D}^{T}\mathbf{c}_{D}=\frac{1}{|D|}\sum _{i=1}^{|D|}c_{i}K_{
	\mu _{\hat{x}_{i}}}, \ \mathbf{c}_{D}=(c_{1},c_{2},...,c_{|D|})^T\in
\mathbb R^{|D|}.
\end{equation}
Then the empirical operator of $L_{K,D}$ can be defined accordingly by
using the second-stage sample $\widehat{D}$ {as follows}
%
\begin{equation}
L_{K,\widehat{D}}(f)=\hat{S}_{D}^{T}\hat{S}_{D}(f)=\frac{1}{|D|}\sum _{i=1}^{|D|}f(
\mu _{\hat{x}_{i}})K_{\mu _{\hat{x}_{i}}}=\frac{1}{|D|}\sum _{i=1}^{|D|}
\langle K_{\mu _{\hat{x}_{i}}},f\rangle _{K}K_{\mu _{\hat{x}_{i}}},
\ f\in \mathcal{H}_{K}.
\label{lkd}
\end{equation}
With these notations, after taking Fr\'echet derivative on the right hand sides of
Algorithm {\eqref{algorithm}} and its first-stage associate {\eqref{fuzhuf}}, it is easy to see the following representation holds,
\begin{equation*}
f_{\widehat{D},\lambda _{1},\lambda _{2}}=(L_{K,\widehat{D}}+\lambda _{1}I+
\lambda _{2}V_{D}^{T}V_{D})^{-1}\hat{S}_{D}^{T}y_{D},
\end{equation*}
\begin{equation*}
f_{D,\lambda _{1},\lambda _{2}}=(L_{K,D}+\lambda _{1}I+\lambda _{2}V_{D}^{T}V_{D})^{-1}S_{D}^{T}y_{D},
\end{equation*}
with $y_D=(y_1,y_2,...,y_{|D|})^T\in\mathbb R^{|D|}$. In the following, we use $E_{\mathbf{z}^{|D|}}[\cdot ]$ to denote the expectation
w.r.t. $\mathbf{z}^{|D|}=\{z_{i}=(\mu _{x_{i}},y_{i})\}_{i=1}^{|D|}$. Use
$E_{\mathbf{x}^{\mathbf{d},|D|}|\mathbf{z}^{|D|}}$ to denote the conditional
expectation w.r.t. sample
$\big\{\{x_{i,s}\}_{s=1}^{d_{i}}\big\}_{i=1}^{|D|}$ conditioned on
$\{z_{1},z_{2},...,z_{|D|}\}$. Namely
%
\begin{equation}
\nonumber
E_{\mathbf{z}^{|D|}}[\cdot ]=E_{\{(\mu _{x_{i}},y_{i})\}_{i=1}^{|D|}}[
\cdot ], \ E_{\mathbf{x}^{\mathbf{d},|D|}|\mathbf{z}^{|D|}}[\cdot ]=E_{
	\{\{x_{i,s}\}_{s=1}^{d_{i}}\}_{i=1}^{|D|}\big|\{z_{i}\}_{i=1}^{|D|}}[
\cdot ].
\end{equation}
Following (13) in \cite{twostage} and Theorem 15 in \cite{unify}, we know
that for any $i=1,2,...,|D|$,
%
\begin{equation}
\nonumber
\big \|\mu _{\hat{x}_{i}}-\mu _{x_{i}}\big \|_{H}\leq
\frac{(1+\sqrt{\delta })\sqrt{2B_{\widetilde{K}}}}{\sqrt{d_{i}}}
\end{equation}
with probability $1-e^{-\delta}$. Set
$\xi =\big \|\mu _{\hat{x}_{i}}-\mu _{x_{i}}\big \|_{H}^{2\alpha }$ with
$\alpha \in (0,1]$. The boundedness condition on kernel
$\widetilde{K}$ implies
$\xi \in [0,2^{2\alpha }B_{\widetilde{K}}^{\alpha }]$ almost surely. Then
it follows that
$E[\xi ]=\int _{0}^{\infty}\text{Prob}_{\{x_{i,s}\}_{s=1}^{d_{i}}|x_{i}}(
\xi >t)dt=\int _{0}^{2^{2\alpha }B_{\widetilde{K}}^{\alpha }}
\text{Prob}_{\{x_{i,s}\}_{s=1}^{d_{i}}|x_{i}}(\big \|\mu _{\hat{x}_{i}}-
\mu _{x_{i}}\big \|_{H}>t^{\frac{1}{2\alpha }})dt$. Then a variable change
of $t=(2B_{\widetilde{K}}/ d_{i})(1+\sqrt{\delta})^{2\alpha }$ implies
that
%
\begin{equation}
\nonumber
E_{\{x_{i,s}\}_{s=1}^{d_{i}}|x_{i}}\Big[\big \|\mu _{\hat{x}_{i}}-
\mu _{x_{i}}\big \|_{H}^{2\alpha }\Big]\leq (2+\sqrt{\pi})
\frac{2^{\alpha }B_{\widetilde{K}}^{\alpha }}{d_{i}^{\alpha }}.
\end{equation}
Then we know that
%
\begin{eqnarray}
\nonumber
&&\Big\{E_{\mathbf{x}^{\mathbf{d},|D|}|\mathbf{z}^{|D|}}\Big[\big \|
\hat{S}_{D}^{T}y_{D}-S_{D}y_{D}\big \|_{K}^{2}\Big]\Big\}^{
	\frac{1}{2}}=\Big\{E_{\mathbf{x}^{\mathbf{d},|D|}|\mathbf{z}^{|D|}}
\Big[\Big\| \frac{1}{|D|}\sum _{i=1}^{|D|}y_{i}(K_{\mu _{\hat{x}_{i}}}-K_{
	\mu _{x_{i}}})\Big\|_{K}^{2}\Big]\Big\}^{\frac{1}{2}}
\\
\nonumber
&&\leq \Big\{E_{\mathbf{x}^{\mathbf{d},|D|}|\mathbf{z}^{|D|}}\Big[
\frac{1}{|D|} \sum _{i=1}^{|D|}\big|y_{i}\big|\big\|K_{\mu _{\hat{x}_{i}}}-K_{
	\mu _{x_{i}}}\big\|_{\mathcal{L}(Y,\mathcal{H}_{K})}^{2}\Big]\Big\}^{
	\frac{1}{2}}\leq M\Big\{E_{\mathbf{x}^{\mathbf{d},|D|}|\mathbf{z}^{|D|}}
\Big[\frac{1}{|D|} \sum _{i=1}^{|D|}L^{2}\big \|\mu _{\hat{x}_{i}}-
\mu _{x_{i}}\big \|_{H}^{2\alpha }\Big]\Big\}^{\frac{1}{2}}
\\
&&\leq ML\frac{1}{|D|}\sum _{i=1}^{|D|} \Big\{E_{\mathbf{x}^{
		\mathbf{d},|D|}|\mathbf{z}^{|D|}} \Big[\big \|\mu _{\hat{x}_{i}}-\mu _{x_{i}}
\big \|_{H}^{2\alpha }\Big]\Big\}^{\frac{1}{2}}\leq (2+\sqrt{\pi})^{
	\frac{1}{2}}ML\frac{1}{|D|}\sum _{i=1}^{|D|}
\frac{2^{\frac{\alpha }{2}}B_{\widetilde{K}}^{\frac{\alpha }{2}}}{d_{i}^{\frac{\alpha }{2}}},
\label{s1sd}
\end{eqnarray}
in which the first inequality follows from the convex inequality of norm
$\|\cdot \|_{K}$ and the fact that
$\|yK_{\mu _{x}}\|_{K}\leq \|y\|_{Y}\|K_{\mu _{x}}\|_{\mathcal{L}(Y,
\mathcal{H}_{K})}$ for any $y\in Y$, $\mu _{x}\in X_{\mu}$. Specifically,
$\|\cdot \|_{Y}=|\cdot |$ when $Y=\mathbb R$ in our setting. The second
inequality follows from the H\"older assumption of mapping
$K_{(\cdot )}$. Recalling the structure of $L_{K,\widehat{D}}$ defined
in {\eqref{lkd}} and boundedness condition of kernel $K$ in {\eqref{bddker}}, using H\"older condition again to $K_{(\cdot )}$, we know
%
\begin{equation}
\Big\{E_{\mathbf{x}^{\mathbf{d},|D|}|\mathbf{z}^{|D|}}\Big[\big \|L_{K,
	\widehat{D}}-L_{K,D}\big \|^{2}\Big]\Big\}^{\frac{1}{2}}\leq \kappa L(2+
\sqrt{\pi})^{\frac{1}{2}}\frac{1}{|D|}\sum _{i=1}^{|D|}
\frac{2^{\frac{\alpha +2}{2}}B_{\widetilde{K}}^{\frac{\alpha }{2}}}{d_{i}^{\frac{\alpha }{2}}}.
\label{s1ld}
\end{equation}

In subsequent analysis, we need the following lemma (\cite{multiguo}) of one-stage learning theory to handle the terms involving the information of additional
penalty.

\begin{lem}%
\label{guolem}
If the regularization parameters
$\lambda _{1},\lambda _{2}\in (0,1)$ satisfy
$2c_{V}\lambda _{2}=\lambda _{1}^{\max \{2r,1\}}$ for $r\in (0,1]$, then
the following norm bound holds almost surely:
%
\begin{equation}
	\Big\|(\lambda _{1}I+L_{K})(\lambda _{1}I+L_{K}+\lambda _{2}V_{D}^{T}V_{D})^{-1}
	\Big\|\leq 2.
	\label{eq4.7}
\end{equation}
\end{lem}
In following proofs, we denote
%
\begin{equation}
\nonumber
\mathcal{A}_{D,\lambda _{1},\lambda _{2},V_{D}}=\big\|(\lambda _{1}I+L_{K}+
\lambda _{2}V_{D}^{T}V_{D})(\lambda _{1}I+L_{K,D}+\lambda _{2}V_{D}^{T}V_{D})^{-1}
\big\|.
\end{equation}
%

\subsection{Analysis of the error bounds and rates when $r\in [1/2,1]$}
\label{sec4.1}

Now we come to the proof of {Theorem~\ref{thm1}} and {Theorem~\ref{thm2}}, we begin with the
following error estimate for
$\big\|f_{D,\lambda _{1},\lambda _{2}}-f_{\rho}\big\|_{\rho}$.
%
\begin{pro}%
\label{ro1}
Assume that $|y|\leq M$ and {\eqref{L}} hold almost surely. Let the regularity
condition {\eqref{regularity}} hold with some index $r$ satisfying
$1/2\leq r\leq 1$. If $\lambda _{1},\lambda _{2}\in (0,1)$ satisfy
$2c_{V}\lambda _{2}=\lambda _{1}^{2r}$. Then there holds almost surely
\begin{eqnarray}
	\nonumber
	&&\big\|f_{D,\lambda _{1},\lambda _{2}}-f_{\rho}\big\|_{\rho}\leq 2
	\mathcal{A}_{D,\lambda _{1},\lambda _{2},V_{D}}\big\|(\lambda _{1}I+L_{K})^{-
		\frac{1}{2}}(S_{D}^{T}y_{D}-L_{K,D}f_{\rho})\big\|_{K}
	\\
	\nonumber
	&&\quad \quad \quad \quad \quad \quad \quad \quad \ +(\lambda _{1}^{r}+
	\lambda _{1}^{r-1}\lambda _{2}c_{V})2^{r}\mathcal{A}_{D,\lambda _{1},
		\lambda _{2},V_{D}}^{r}\big\|g_{\rho}\big\|_{\rho}.
\end{eqnarray}
\end{pro}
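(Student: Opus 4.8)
The plan is to derive a closed-form expression for $f_{D,\la_1,\la_2}$ as the solution of a regularized normal equation, then split $f_{D,\la_1,\la_2}-f_{\rho}$ into a sample-error term and an approximation-error term, and bound each using the operator $\dd$ together with Lemma~\ref{guolem}. First I would note that the functional minimized in \eqref{fuzhuf} is strictly convex and quadratic in $f$, so setting the Fréchet derivative to zero gives the explicit formula
\be
\nono f_{D,\la_1,\la_2}=(\la_1 I+L_{K,D}+\la_2 V_D^T V_D)^{-1}S_D^T y_D,
\ee
where $S_D^T y_D=\f{1}{|D|}\sum_{i=1}^{|D|}y_i K_{\mu_{x_i}}$. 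Writing $A_D=\la_1 I+L_{K,D}+\la_2 V_D^T V_D$ and $A=\la_1 I+L_K+\la_2 V^T V$ (the latter only formally, since what we really use is $\dd=\|A A_D^{-1}\|$ with $V=V_D$), I would rewrite
\be
\nono f_{D,\la_1,\la_2}-f_{\rho}=A_D^{-1}(S_D^T y_D-L_{K,D}f_{\rho})+A_D^{-1}(L_{K,D}f_{\rho}-A_D f_{\rho}),
\ee
and observe that the second bracket equals $-(\la_1 I+\la_2 V_D^T V_D)f_{\rho}$, so the approximation term is $-A_D^{-1}(\la_1 I+\la_2 V_D^T V_D)f_{\rho}$.

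For the sample-error term I would insert $(\la_1 I+L_K)^{1/2}(\la_1 I+L_K)^{-1/2}$ and factor
\be
\nono A_D^{-1}(S_D^T y_D-L_{K,D}f_{\rho})=A_D^{-1}(\la_1 I+L_K)^{1/2}\cdot(\la_1 I+L_K)^{-1/2}(S_D^T y_D-L_{K,D}f_{\rho}),
\ee
so that, after passing to the $\|\cdot\|_{\rho}$-norm via the standard inequality $\|L_K^{1/2}h\|_{\rho}=\|h\|_K$ and controlling $\|L_K^{1/2}A_D^{-1}(\la_1 I+L_K)^{1/2}\|\le\|(\la_1 I+L_K)A_D^{-1}\|=\dd$ (using $L_K^{1/2}\preceq(\la_1 I+L_K)^{1/2}$), this term is bounded by $\dd\,\|(\la_1 I+L_K)^{-1/2}(S_D^T y_D-L_{K,D}f_{\rho})\|_K$; the factor $2$ in the statement should come from one more application of $\|(\la_1 I+L_K)^{1/2}A_D^{-1/2}\|\le\dd^{1/2}$-type splitting or simply from absorbing a constant, and I expect the precise routing of the square roots to be the fussiest bookkeeping. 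For the approximation term I would use the regularity condition \eqref{regularity}, $f_{\rho}=L_K^r g_{\rho}$, and the assumption $\|V_D^T V_D\|\le c_V$ to write $\|(\la_1 I+\la_2 V_D^T V_D)f_{\rho}\|_{\rho}\le(\la_1+\la_2 c_V)\|L_K^r g_{\rho}\|_{\rho}$, then bound $\|L_K^{1/2}A_D^{-1}L_K^r\|$ by $\dd^{r}$ times $\la_1^{r-1}$-type factors. The key tool here is an interpolation/power inequality of the form $\|L_K^{r}(\la_1 I+L_K+\la_2 V_D^T V_D)^{-1}\|\le\dd^{r}\la_1^{r-1}$ for $1/2\le r\le 1$, which follows by writing $L_K^{r}A_D^{-1}=(L_K A_D^{-1})^{?}\cdots$ and applying the Cordes inequality $\|P^s Q^s\|\le\|PQ\|^s$ for commuting or positive operators with $s\in[0,1]$; combined with $2c_V\la_2=\la_1^{2r}$ this yields the exponent $\la_1^{r}+\la_1^{r-1}\la_2 c_V$ and the $2^r$ constant (since $\dd$ will be compared to a bound involving $2$ via Lemma~\ref{guolem} in the next stage).

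The main obstacle I anticipate is getting the fractional powers of $L_K$, $A_D$, and the combination $(\la_1 I+\la_2 V_D^T V_D)$ to interleave correctly: $V_D^T V_D$ does not commute with $L_K$, so one cannot diagonalize simultaneously, and the Cordes/Heinz-type inequality must be invoked in a form valid for non-commuting positive operators. I would handle this by always keeping $(\la_1 I+L_K)^{1/2}$ or $\dd$ as the ``transfer'' factor and only taking powers of operators that are genuinely comparable, namely $L_K$ versus $\la_1 I+L_K$, so that the non-commutativity is quarantined inside $\dd$ and never needs to be resolved. A secondary check is that all these operator bounds hold \emph{almost surely} — this is automatic since \eqref{L} is an almost-sure hypothesis and $L_{K,D}$, $V_D$ are measurable functions of the first-stage sample, so no expectation is taken at this stage and the bound is pathwise.
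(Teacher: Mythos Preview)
Your approach is the paper's approach: closed-form $f_{D,\la_1,\la_2}=A_D^{-1}S_D^Ty_D$ with $A_D=\la_1 I+L_{K,D}+\la_2 V_D^TV_D$, then split $f_{D,\la_1,\la_2}-f_\rho$ into a sample-error piece $A_D^{-1}(S_D^Ty_D-L_{K,D}f_\rho)$ and approximation pieces $-\la_1 A_D^{-1}L_K^r g_\rho$ and $-\la_2 A_D^{-1}V_D^TV_D L_K^r g_\rho$ (the paper keeps these last two separate as $I_2,I_3$), and bound each in $\|\cdot\|_\rho$ via $\|h\|_\rho=\|L_K^{1/2}h\|_K$ (your identity is written in the wrong direction).

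Two points need correcting. First, $\|(\la_1 I+L_K)A_D^{-1}\|$ is \emph{not} $\dd$; by definition $\dd=\|(\la_1 I+L_K+\la_2 V_D^TV_D)A_D^{-1}\|$, and the penalty term does not vanish. The paper's route from $(\la_1 I+L_K)^{1/2}$ to $A_D^{-1}$ inserts the \emph{population} operator $A=\la_1 I+L_K+\la_2 V_D^TV_D$ (true $L_K$, not $L_{K,D}$): one gets $\|(\la_1 I+L_K)^{1/2}A^{-1/2}\|\le\sqrt{2}$ from Lemma~\ref{guolem} plus Cordes, and $\|A^{1/2}A_D^{-1/2}\|\,\|A_D^{-1/2}A^{1/2}\|\le\dd$; the two $\sqrt{2}$'s on either side of $A_D^{-1}$ give the factor~$2$. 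Second, for the approximation term the $\la_1^{r-1}$ does not come from an interpolation bound on $L_K^rA_D^{-1}$ but simply from $\|A_D^{r-1}\|\le\la_1^{r-1}$ (since $r\le1$ and the bottom eigenvalue of $A_D$ is at least $\la_1$); after extracting $A_D^{r-1}$, the residual $A_D^{1/2-r}L_K^{r-1/2}$ is handled by the same insert-$A$ trick to produce $2^{r-1/2}\dd^{r-1/2}$, which combined with the leading $2^{1/2}\dd^{1/2}$ gives $2^r\dd^r$. Your instinct to quarantine the non-commutativity inside $\dd$ is exactly right; just route through $A$, not $A_D$, whenever you peel off $(\la_1 I+L_K)^{\pm1/2}$.
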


\begin{proof}
Using the fact that for any $h\in L_{\rho _{X_{\mu}}}^{2}$,
$\| h\|_{\rho}=\|L_{K}^{1/2}h\|_{K}$, we split
$\big \|f_{D,\lambda _{1},\lambda _{2}}-f_{\rho}\big \|_{\rho}$ into three
parts as
\begin{eqnarray}
	\nonumber
	&&\big \|f_{D,\lambda _{1},\lambda _{2}}-f_{\rho}\big \|_{\rho}
	\\
	\nonumber
	&&=\bigg \|L_{K}^{1/2}\bigg\{(\lambda _{1}I+L_{K,D}+\lambda _{2}V_{D}^{T}V_{D})^{-1}S_{D}^{T}y_{D}-(
	\lambda _{1}I+L_{K,D}+\lambda _{2}V_{D}^{T}V_{D})^{-1}(\lambda _{1}I+L_{K,D}+
	\lambda _{2}V_{D}^{T}V_{D})f_{\rho}\bigg\}\bigg \|_{K}
	\\
	\nonumber
	&&=\bigg \|L_{K}^{1/2}\bigg\{(\lambda _{1}I+L_{K,D}+\lambda _{2}V_{D}^{T}V_{D})^{-1}(S_{D}^{T}y_{D}-L_{K,D}f_{
		\rho}-\lambda _{1}f_{\rho}-\lambda _{2}V_{D}^{T}V_{D}f_{\rho})\bigg\}
	\bigg \|_{K}
	\\
	\nonumber
	&&\leq \mathcal{T}_{1}+\mathcal{T}_{2}+\mathcal{T}_{3},
\end{eqnarray}
where
\begin{eqnarray*}
	&&\mathcal{T}_{1}=\big \|L_{K}^{1/2}(\lambda _{1}I+L_{K,D}+\lambda _{2}V_{D}^{T}V_{D})^{-1}(S_{D}^{T}y_{D}-L_{K,D}f_{
		\rho})\big \|_{K},
	\\
	&&\mathcal{T}_{2}=\lambda _{1}\big \|L_{K}^{1/2}(\lambda _{1}I+L_{K,D}+
	\lambda _{2}V_{D}^{T}V_{D})^{-1}L_{K}^{r}g_{\rho}\big \|_{K},
	\\
	&&\mathcal{T}_{3}= \big \|L_{K}^{1/2}(\lambda _{1}I+L_{K,D}+\lambda _{2}V_{D}^{T}V_{D})^{-1}
	\lambda _{2}V_{D}^{T}V_{D}f_{\rho}\big \|_{K}.
\end{eqnarray*}
We estimate $\mathcal{T}_{1}$, $\mathcal{T}_{2}$, and
$\mathcal{T}_{3}$ respectively, as follows, we have almost surely,
\begin{eqnarray}
	\nonumber
	&&\mathcal{T}_{1}=\big \|L_{K}^{1/2}(\lambda _{1}I+L_{K,D}+\lambda _{2}V_{D}^{T}V_{D})^{-1}(S_{D}^{T}y_{D}-L_{K,D}f_{
		\rho})\big \|_{K}
	\\
	\nonumber
	&&\leq \big \|L_{K}^{\frac{1}{2}}(L_{K}+\lambda _{1}I)^{-\frac{1}{2}}
	\big \|\big \|(L_{K}+\lambda _{1}I)^{\frac{1}{2}}(\lambda _{1}I+L_{K}+
	\lambda _{2}V_{D}^{T}V_{D})^{-\frac{1}{2}}\big \|\big \|(\lambda _{1}I+L_{K}+
	\lambda _{2}V_{D}^{T}V_{D})^{\frac{1}{2}}
	\\
	\nonumber
	&&(\lambda _{1}I+L_{K,D}+\lambda _{2}V_{D}^{T}V_{D})^{-\frac{1}{2}}
	\big \|\big \|(\lambda _{1}I+L_{K,D}+\lambda _{2}V_{D}^{T}V_{D})^{-
		\frac{1}{2}}(\lambda _{1}I+L_{K}+\lambda _{2}V_{D}^{T}V_{D})^{
		\frac{1}{2}}\big \|
	\\
	\nonumber
	&&\big \|(\lambda _{1}I+L_{K}+\lambda _{2}V_{D}^{T}V_{D})^{-
		\frac{1}{2}}(\lambda _{1}I+L_{K})^{\frac{1}{2}}\big \|\big \|(
	\lambda _{1}I+L_{K})^{-\frac{1}{2}}(S_{D}^{T}y_{D}-L_{K,D}f_{\rho})
	\big \|_{K}
	\\
	\nonumber
	&&\leq 2\mathcal{A}_{D,\lambda _{1},\lambda _{2},V_{D}}\big \|(
	\lambda _{1}I+L_{K})^{-\frac{1}{2}}(S_{D}^{T}y_{D}-L_{K,D}f_{\rho})
	\big \|_{K},
\end{eqnarray}
in which we have used
$\|L_{K}^{1/2}(L_{K}+\lambda _{1}I)^{-1/2}\|\leq 1$, {Lemma~\ref{guolem}}, and the fact that, for any positive self-adjoint operators
$T_{1}$ and $T_{2}$ on a Hilbert space and $s\in [0,1]$,
$\|T_{1}^{s}T_{2}^{s}\|\leq \|T_{1}T_{2}\|^{s}$. For
$\mathcal{T}_{2}$, we have
\begin{eqnarray}
	\nonumber
	&&\mathcal{T}_{2}=\lambda _{1}\big \|L_{K}^{1/2}(\lambda _{1}I+L_{K,D}+
	\lambda _{2}V_{D}^{T}V_{D})^{-1}L_{K}^{r}g_{\rho}\big \|_{K}
	\\
	\nonumber
	&&\leq \lambda _{1}\big \|L_{K}^{\frac{1}{2}}(L_{K}+\lambda _{1}I)^{-
		\frac{1}{2}}\big \|\big \|(L_{K}+\lambda _{1}I)^{\frac{1}{2}}(
	\lambda _{1}I+L_{K}+\lambda _{2}V_{D}^{T}V_{D})^{-\frac{1}{2}}\big \|
	\big \|(\lambda _{1}I+L_{K}+\lambda _{2}V_{D}^{T}V_{D})^{\frac{1}{2}}
	\\
	\nonumber
	&&\ \ (\lambda _{1}I+L_{K,D}+\lambda _{2}V_{D}^{T}V_{D})^{-
		\frac{1}{2}}\big \|\big \|(\lambda _{1}I+L_{K,D}+\lambda _{2}V_{D}^{T}V_{D})^{-
		\frac{1}{2}}(\lambda _{1}I+L_{K,D}+\lambda _{2}V_{D}^{T}V_{D})^{r-
		\frac{1}{2}}\big \|
	\\
	\nonumber
	&&\ \ \big \|(\lambda _{1}I+L_{K,D}+\lambda _{2}V_{D}^{T}V_{D})^{-r+
		\frac{1}{2}}(\lambda _{1}I+L_{K}+\lambda _{2}V_{D}^{T}V_{D})^{r-
		\frac{1}{2}}\big \|
	\\
	\nonumber
	&&\ \ \big \|(\lambda _{1}I+L_{K}+\lambda _{2}V_{D}^{T}V_{D})^{-r+
		\frac{1}{2}}(\lambda _{1}I+L_{K})^{r-\frac{1}{2}}\big \|
	\\
	\nonumber
	&&\ \ \big \|(\lambda _{1}I+L_{K})^{-r+\frac{1}{2}}L_{K}^{r-
		\frac{1}{2}}\big \|\big \|L_{K}^{1/2}g_{\rho}\big \|_{K}
	\\
	\nonumber
	&&\leq \lambda _{1}^{r}\mathcal{A}_{D,\lambda _{1},\lambda _{2},V_{D}}^{r}
	\cdot 2^{r}\big \|g_{\rho}\big \|_{\rho},
\end{eqnarray}
where we have used the fact that
\begin{eqnarray*}
	&&\big \|(\lambda _{1}I+L_{K,D}+\lambda _{2}V_{D}^{T}V_{D})^{-
		\frac{1}{2}}(\lambda _{1}I+L_{K,D}+\lambda _{2}V_{D}^{T}V_{D})^{r-
		\frac{1}{2}}\big \|
	\\
	&&=\big \|(\lambda _{1}I+L_{K,D}+\lambda _{2}V_{D}^{T}V_{D})^{r-1}
	\big \|\leq (\lambda _{1}+\lambda _{D}^{2}+\lambda _{2}\lambda _{V_{D}}^{2})^{r-1}
	\leq \lambda _{1}^{r-1}
\end{eqnarray*}
in which
\begin{equation*}
	\lambda _{D}=\inf _{f\in \mathcal{H}_{K}}
	\frac{\|S_{D}f\|_{l^{2}}}{\|f\|_{K}},\ \lambda _{V_{D}}=\inf _{f\in
		\mathcal{H}_{K}}\frac{\|V_{D}f\|_{K}}{\|f\|_{K}}.
\end{equation*}
This fact can be treated as a direct result of Proposition 3.1 in
\cite{multi1}. Note that $V_{D}$ is a bounded operator on
$\mathcal{H}_{K}$, then following a similar procedure with the estimate on
$\mathcal{T}_{2}$, we obtain
\begin{eqnarray}
	\nonumber
	&& \mathcal{T}_{3}\leq \lambda _{1}^{r-1}\lambda _{2}c_{V}2^{r}
	\mathcal{A}_{D,\lambda _{1},\lambda _{2},V_{D}}^{r}\big \|g_{\rho}
	\big \|_{\rho}.
\end{eqnarray}
Combining the three estimates, the desired bound follows.
\end{proof}
The following lemma is used to prove the main results. It is based on techniques
of Neumann expansion \cite{multiguo} and the second-order decomposition
\cite{sbsecond} for invertible operators on Banach space, namely, for any
invertible operators $A$ and $B$,
%
\begin{eqnarray}
&& A^{-1}-B^{-1}=B^{-1}(B-A)A^{-1}(B-A)B^{-1}+B^{-1}(B-A)B^{-1}\\
\label{sd1}
&& \ \ \ \ \ \ \ \ \ \ \ \ \ \ \ \ =B^{-1}(B-A)B^{-1}(B-A)A^{-1}+B^{-1}(B-A)B^{-1}.
\label{sd2}
\end{eqnarray}

\begin{lem}%
\label{le2}
Let the sample set $D$ be drawn independently according to the probability
measure $\rho $. If $\lambda _{1},\lambda _{2}\in (0,1)$ satisfy
$2c_{V}\lambda _{2}=\lambda _{1}^{\max \{2r,1\}}$ for $r\in (0,1]$. Then
we have,
%
\begin{eqnarray}
	&&E_{\mathbf{z}^{|D|}}\bigg[\mathcal{A}_{D,\lambda _{1},\lambda _{2},V_{D}}^{s}
	\bigg]\leq \big(2\Gamma (2s+1)+\log ^{2s}2\big)\bigg(
	\frac{2\mathcal{B}_{|D|,\lambda _{1}}}{\sqrt{\lambda _{1}}}+1\bigg)^{2s},
	\ s\geq 0;\\
	\label{l21}
		&&E_{\mathbf{z}^{|D|}}\bigg[\big \|(\lambda _{1}I+L_{K})^{-
		\frac{1}{2}}(S_{D}^{T}y_{D}-L_{K,D}f_{\rho})\big \|_{K}^{s}\bigg]
	\leq \big(2\Gamma (s+1)+\log ^{s}2\big)\bigg(\frac{2M}{\kappa }
	\mathcal{B}_{|D|,\lambda _{1}}\bigg)^{s},\ s\geq 1.
	\label{l22}
\end{eqnarray}
\end{lem}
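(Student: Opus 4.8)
The plan is to derive both bounds from concentration estimates on the operator-valued and vector-valued quantities appearing inside, combined with a moment-generating-function trick to convert high-probability bounds into bounds on arbitrary moments $s$. First I would recall (from the references \cite{only}, \cite{fang}, or the integral operator literature) the two basic one-stage concentration inequalities: for $\Delta_1 := \|(\la_1 I + L_K)^{-1/2}(L_K - L_{K,D})(\la_1 I + L_K)^{-1/2}\|$ and $\Delta_2 := \|(\la_1 I + L_K)^{-1/2}(S_D^T y_D - L_{K,D} f_\rho)\|_K$, one has bounds of the form $\Delta_1 \le \b_{|D|,\la_1}/\sqrt{\la_1}$ and $\Delta_2 \le (2M/\kk)\,\b_{|D|,\la_1}$ each holding with confidence $1 - e^{-t}$ after replacing $\b_{|D|,\la_1}$ by a $\sqrt{t}$- or $t$-scaled version (Bernstein for Hilbert-space-valued random variables gives the characteristic $\frac{\log(2/\delta)}{\sqrt{|D|}}(\cdots) + \sqrt{\frac{\log(2/\delta)}{|D|}\huaN(\la_1)}$ shape, which is exactly $\b_{|D|,\la_1}$ up to the logarithmic factor). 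These are standard and I would cite them rather than reprove them.

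Next, for \eqref{l21} I would relate $\dd$ to $\Delta_1$. Write $B := \la_1 I + L_K + \la_2 \ii^T\ii$ and $A := \la_1 I + L_{K,D} + \la_2 \ii^T\ii$, so $\dd = \|B A^{-1}\|$. A Neumann-series argument (as in \cite{multiguo}) gives, on the event where $\|B^{-1/2}(B-A)B^{-1/2}\| = \|B^{-1/2}(L_K - L_{K,D})B^{-1/2}\| \le 1/2$, the bound $\|B A^{-1}\| \le (1 - \|B^{-1/2}(L_K-L_{K,D})B^{-1/2}\|)^{-1} \le 2$; and since $B \succeq \la_1 I + L_K$ up to the factor controlled by Lemma \ref{guolem}, one has $\|B^{-1/2}(L_K - L_{K,D})B^{-1/2}\| \le \|(\la_1 I + L_K)^{-1/2}(L_K - L_{K,D})(\la_1 I + L_K)^{-1/2}\| \cdot \|(\la_1 I + L_K)^{1/2} B^{-1/2}\|^2 \le \Delta_1$ (the last operator norm is $\le 1$ because $\la_2 \ii^T \ii \succeq 0$). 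Hence $\dd \le \frac{1}{1-\Delta_1}$ whenever $\Delta_1 < 1$, and more crudely $\dd \le 2\Delta_1 + 1$ always (using $\|BA^{-1}\| \le 1 + \|(B-A)A^{-1}\| \le 1 + \|(B-A)(\la_1 I)^{-1}\|$-type crude bounds to cover the complementary event, where one pays an extra factor). Combining the high-probability bound $\Delta_1 \le \sqrt{t}\,\b_{|D|,\la_1}/\sqrt{\la_1}$ (valid with confidence $1 - 2e^{-t}$) gives $\dd \le 2\sqrt{t}\,\b_{|D|,\la_1}/\sqrt{\la_1} + 1$ on that event, so $\dd^s \le \big(2\sqrt{t}\,\b_{|D|,\la_1}/\sqrt{\la_1} + 1\big)^s \le \big(\tfrac{2\b_{|D|,\la_1}}{\sqrt{\la_1}} + 1\big)^s (1 + \sqrt{t})^s$ for $t \ge 1$. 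Integrating the tail, $E[\dd^s] \le \big(\tfrac{2\b_{|D|,\la_1}}{\sqrt{\la_1}}+1\big)^s \int_0^\infty \mathrm{Prob}(\dd^s > u)\,du$, and the standard computation $E[(1+\sqrt{t})^s] \le \int_0^\infty \cdots$ with the $2e^{-t}$ tail produces the explicit constant $2\Gamma(2s+1) + \log^{2s}2$ (the $\log^{2s}2$ term comes from the region $t \le \log 2$ where the tail bound is vacuous, and $2\Gamma(2s+1)$ from the Gamma integral after the substitution matching $(1+\sqrt t)^s \le 2^s \max\{1, t^{s/2}\}$).

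For \eqref{l22} the mechanism is identical but one notch simpler: there is no Neumann expansion, just the direct high-probability bound $\Delta_2 \le \sqrt{t}\,(2M/\kk)\,\b_{|D|,\la_1}$ with confidence $1 - 2e^{-t}$ (Bernstein again, noting $\|y_i K_{\mu_{x_i}}\|_K \le M\kk$ and the second moment is controlled by $M^2 \huaN(\la_1)$-type quantities). Then $E[\Delta_2^s] \le (2M/\kk)^s \b_{|D|,\la_1}^s \int_0^\infty \mathrm{Prob}(\cdots)$, and the same tail integral yields $2\Gamma(s+1) + \log^s 2$ — note the exponent on $\Gamma$ and $\log$ is $s$ here rather than $2s$ precisely because $\Delta_2$ scales like $t$-to-the-first-power rather than $\sqrt{t}$ times something (more precisely, $\Delta_2 \le \sqrt t\,(\cdots)$, so $\Delta_2^s \le t^{s/2}(\cdots)$, and the integral $\int t^{s/2} e^{-t} dt = \Gamma(s/2+1)$... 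I should double-check which scaling the cited Bernstein bound actually gives, since the statement's exponent $s$ versus $2s$ hinges on whether $\b_{|D|,\la_1}$ already absorbed a $t$ or a $\sqrt t$).

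The main obstacle I anticipate is bookkeeping the exact power of the confidence parameter $t$ that each Bernstein-type estimate carries, and matching it to the precise constants $2\Gamma(2s+1)+\log^{2s}2$ versus $2\Gamma(s+1)+\log^s2$ — the factor-of-2 in the exponent must come out right, and this depends on the precise form of the one-stage concentration lemmas being invoked (whether the $\sqrt{|D|\la_1}$ term scales with $\log(2/\delta)$ or $\sqrt{\log(2/\delta)}$). A secondary subtlety is handling the complementary (bad) event in \eqref{l21} where $\Delta_1 \ge 1$ and the Neumann series diverges: one needs a deterministic fallback bound on $\dd$ (e.g. $\dd \le 1 + \la_1^{-1}\|(L_K - L_{K,D})\|$, bounded a.s. by a constant times $\la_1^{-1}$) and must check that its contribution, weighted by the exponentially small probability $e^{-t}$ with $t \gtrsim 1/\Delta_1^2 \gtrsim |D|\la_1/\kk^2$, is absorbed into the stated constant — this is where most cited proofs are terse, and I would spell it out.
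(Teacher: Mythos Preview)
Your overall architecture (high-probability concentration bound, then tail integration via $E[\xi]=\int_0^\infty\text{Prob}(\xi>t)\,dt$) matches the paper's, but two concrete points are off and they are exactly the ones controlling the constants you flag as uncertain.

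First, the bound on $\dd$ is not linear in the concentration quantity. The paper does not re-derive it; it quotes Proposition~4.3 of \cite{multiguo}, which gives directly
\[
\text{Prob}\!\left\{\dd\le\Big(\tfrac{2\b_{|D|,\la_1}}{\sqrt{\la_1}}\log\tfrac{2}{\delta}+1\Big)^{\!2}\right\}\ge 1-\delta,
\]
a \emph{squared} expression with $\log(2/\delta)$ appearing \emph{linearly}. Your claim ``$\dd\le 2\Delta_1+1$ always'' is not justified: the crude fallback $\|BA^{-1}\|\le 1+\|(B-A)(\la_1 I)^{-1}\|$ involves $\|L_K-L_{K,D}\|/\la_1$, not $\Delta_1$, so the linear form never materializes. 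The square is precisely why the tail integral produces $2\Gamma(2s+1)+\log^{2s}2$: after setting $\xi=\dd^s$ one has $\text{Prob}(\xi>t)\le 2\exp\{-t^{1/(2s)}/C\}$ with $C=\tfrac{2\b_{|D|,\la_1}}{\sqrt{\la_1}}+1$, and the substitution $x=t^{1/(2s)}/C$ gives the $\Gamma(2s+1)$ directly.

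Second, the Bernstein-type input for \eqref{l22} (from \cite{s1}) is
\[
\big\|(\la_1I+L_K)^{-\frac12}(S_D^Ty_D-L_{K,D}f_\rho)\big\|_K\le \tfrac{2M}{\kk}\b_{|D|,\la_1}\log\tfrac{2}{\delta},
\]
again linear in $\log(2/\delta)$, not $\sqrt{\log(2/\delta)}$. (The quantity $\b_{|D|,\la_1}$ already packages both Bernstein terms, and the confidence factor multiplies it once.) The same tail-integration then gives $2\Gamma(s+1)+\log^s 2$; the $s$ versus $2s$ discrepancy between \eqref{l21} and \eqref{l22} is entirely due to the square in the $\dd$ bound, not to any $\sqrt t$ scaling.

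Finally, your ``bad event'' worry is unnecessary: since the quoted bound on $\dd$ holds for every $\delta\in(0,1)$, one simply inverts it to a tail probability on $[C^{2s}\log^{2s}2,\infty)$ and integrates; no deterministic fallback is needed.
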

\begin{proof}
From Proposition 4.3 in \cite{multiguo}, we know, for any
$\delta \in (0,1)$, there holds
%
\begin{equation}
	\text{Prob}\bigg\{\mathcal{A}_{D,\lambda _{1},\lambda _{2},V_{D}}\leq
	\bigg(
	\frac{2\mathcal{B}_{|D|,\lambda _{1}}\log \frac{2}{\delta }}{\sqrt{\lambda _{1}}}+1
	\bigg)^{2}\bigg\}\geq 1-\delta .
	\label{eq4.12}
\end{equation}
Then it is easy to see, for any $\delta \in (0,2)$,
%
\begin{equation}
	\text{Prob}\bigg\{\mathcal{A}_{D,\lambda _{1},\lambda _{2},V_{D}}\leq
	\bigg(
	\frac{2\mathcal{B}_{|D|,\lambda _{1}}\log \frac{4}{\delta }}{\sqrt{\lambda _{1}}}+1
	\bigg)^{2}\bigg\}\geq 1-\frac{\delta }{2}.
	\label{eq4.13}
\end{equation}
Denote the random variable
$\xi =\mathcal{A}_{D,\lambda _{1},\lambda _{2},V_{D}}^{s}$, perform the
variable substitution
$t=\bigg(
\frac{2\mathcal{B}_{|D|,\lambda _{1}}\log \frac{2}{\delta }}{\sqrt{\lambda _{1}}}+1
\bigg)^{2s}\log ^{2s}\frac{4}{\delta }$ with $s\geq 0$, we know, for
$t>\bigg(\frac{2\mathcal{B}_{|D|,\lambda _{1}}}{\sqrt{\lambda _{1}}}+1
\bigg)^{2s}\log ^{2s}2$,
%
\begin{equation}
	\text{Prob}\bigg(\xi >t\bigg)=\text{Prob}\bigg(\xi ^{\frac{1}{s}}>t^{
		\frac{1}{s}}\bigg)\leq \frac{\delta }{2}=2\exp \bigg\{-
	\frac{t^{\frac{1}{2s}}}{\frac{2\mathcal{B}_{|D|,\lambda _{1}}}{\sqrt{\lambda _{1}}}+1}
	\bigg\}.
	\label{eq4.14}
\end{equation}
Then by using the formula
$E_{\mathbf{z}^{|D|}}[\xi ]=\int _{0}^{\infty}\text{Prob}(\xi >t)dt$, it
follows that
\begin{eqnarray}
	\nonumber
	&&E_{\mathbf{z}^{|D|}}[\xi ]\leq \bigg(
	\frac{2\mathcal{B}_{|D|,\lambda _{1}}}{\sqrt{\lambda _{1}}}+1\bigg)^{2s}
	\log ^{2s}2+\int _{\big(
		\frac{2\mathcal{B}_{|D|,\lambda _{1}}}{\sqrt{\lambda _{1}}}+1\big)^{2s}
		\log ^{2s}2}^{\infty}2\exp \bigg\{-
	\frac{t^{\frac{1}{2s}}}{\frac{2\mathcal{B}_{|D|,\lambda _{1}}}{\sqrt{\lambda _{1}}}+1}
	\bigg\}dt
	\\
	\nonumber
	&&\ \ \ \ \ \ \leq \bigg(
	\frac{2\mathcal{B}_{|D|,\lambda _{1}}}{\sqrt{\lambda _{1}}}+1\bigg)^{2s}
	\log ^{2s}2+4s\bigg(
	\frac{2\mathcal{B}_{|D|,\lambda _{1}}}{\sqrt{\lambda _{1}}}+1\bigg)^{2s}
	\int _{\log 2}^{\infty}e^{-x}x^{2s-1}dx
	\\
	\nonumber
	&&\ \ \ \ \ \ \leq \big(2\Gamma (2s+1)+\log ^{2s}2\big)\bigg(
	\frac{2\mathcal{B}_{|D|,\lambda _{1}}}{\sqrt{\lambda _{1}}}+1\bigg)^{2s}.
\end{eqnarray}
For the second inequality, note that \cite{s1} implies that, for any
$0<\delta <1$, with probability of at least $1-\delta $,
%
\begin{equation}
	\big \|(\lambda _{1}I+L_{K})^{-\frac{1}{2}}(S_{D}^{T}y_{D}-L_{K,D}f_{
		\rho})\big \|_{K}\leq \frac{2M}{\kappa }\mathcal{B}_{|D|,\lambda _{1}}
	\log \frac{2}{\delta }.
	\label{eq4.15}
\end{equation}
Then the same procedure of proving the first inequality implies the result.
\end{proof}

\begin{pro}
\label{prop2}
Assume that $|y|\leq M$ and {\eqref{L}} hold almost surely. Assume the regularity
condition {\eqref{regularity}} holds with some index $1/2\leq r\leq 1$. If
$\lambda _{1},\lambda _{2}\in (0,1)$ satisfy
$2c_{V}\lambda _{2}=\lambda _{1}^{2r}$. Then we have,
%
\begin{eqnarray}
	\nonumber
	&&E_{\mathbf{z}^{|D|}}\bigg[\big \|f_{D,\lambda _{1},\lambda _{2}}-f_{
		\rho}\big \|_{\rho}\bigg]\leq 4(2\Gamma (3)+\log ^{2}2)^{\frac{1}{2}}(2
	\Gamma (5)+\log ^{4}2)^{\frac{1}{2}}\bigg(
	\frac{2\mathcal{B}_{|D|,\lambda _{1}}}{\sqrt{\lambda _{1}}}+1\bigg)^{2}
	\frac{M}{\kappa }\mathcal{B}_{|D|,\lambda _{1}}
	\\
	&&\ \ \ \ \ \ \ \ \ \ \ \ \ \ \ \ \ +(2\Gamma (2r+1)+\log ^{2r}2)2^{r}(
	\lambda _{1}^{r}+\lambda _{1}^{r-1}\lambda _{2}c_{V})\|g_{\rho}\|_{
		\rho}\bigg(
	\frac{2\mathcal{B}_{|D|,\lambda _{1}}}{\sqrt{\lambda _{1}}}+1\bigg)^{2r}.
	\label{1main}
\end{eqnarray}
\end{pro}
\begin{proof}
Starting from {Proposition~\ref{ro1}}, taking expectations on both sides
and using Schwarz inequality, we have
\begin{eqnarray}
	\nonumber
	&&E_{\mathbf{z}^{|D|}}\bigg[\big \|f_{D,\lambda _{1},\lambda _{2}}-f_{
		\rho}\big\|_{\rho}\bigg]\leq 2E_{\mathbf{z}^{|D|}}\bigg[\mathcal{A}_{D,
		\lambda _{1},\lambda _{2},V_{D}}\big\|(\lambda _{1}I+L_{K})^{-
		\frac{1}{2}}(S_{D}^{T}y_{D}-L_{K,D}f_{\rho})\big\|_{K}\bigg]
	\\
	\nonumber
	&&\ \ \ \ \ \quad \quad \quad \quad \quad \quad \quad \quad \ +(
	\lambda _{1}^{r}+\lambda _{1}^{r-1}\lambda _{2}c_{V})2^{r}\big\|g_{
		\rho}\big\|_{\rho}E_{\mathbf{z}^{|D|}}\bigg[\mathcal{A}_{D,\lambda _{1},
		\lambda _{2},V_{D}}^{r}\bigg].
	\\
	\nonumber
	&&\quad \quad \quad \quad \quad \quad \quad \quad \ \leq 2\bigg\{E_{
		\mathbf{z}^{|D|}}\bigg[\big\|(\lambda _{1}I+L_{K})^{-\frac{1}{2}}(S_{D}^{T}y_{D}-L_{K,D}f_{
		\rho})\big\|_{K}^{2}\bigg]\bigg\}^{1/2}\bigg\{E_{\mathbf{z}^{|D|}}
	\bigg[\mathcal{A}_{D,\lambda _{1},\lambda _{2},V_{D}}^{2}\bigg]
	\bigg\}^{1/2}
	\\
	\nonumber
	&&\ \ \ \ \ \quad \quad \quad \quad \quad \quad \quad \ +(\lambda _{1}^{r}+
	\lambda _{1}^{r-1}\lambda _{2}c_{V})2^{r}\big\|g_{\rho}\big\|_{\rho}E_{
		\mathbf{z}^{|D|}}\bigg[\mathcal{A}_{D,\lambda _{1},\lambda _{2},V_{D}}^{r}
	\bigg].
	\\
	\nonumber
	&&\ \ \ \  \quad \quad \quad \quad \quad \quad \quad \leq 2(2\Gamma (3)+
	\log ^{2}2)^{1/2}\frac{2M}{\kappa }\mathcal{B}_{|D|,\lambda _{1}}(2
	\Gamma (5)+\log ^{4}2)^{1/2}\bigg(
	\frac{2\mathcal{B}_{|D|,\lambda _{1}}}{\sqrt{\lambda _{1}}}+1\bigg)^{2}
	\\
	\nonumber
	&&\quad \quad \quad \quad \quad \quad \quad \quad \quad +(\lambda _{1}^{r}+
	\lambda _{1}^{r-1}\lambda _{2}c_{V})2^{r}\big \|g_{\rho}\big \|_{\rho}(2
	\Gamma (2r+1)+\log ^{2r}2)\bigg(
	\frac{2\mathcal{B}_{|D|,\lambda _{1}}}{\sqrt{\lambda _{1}}}+1\bigg)^{2r},
\end{eqnarray}
in which the last inequality follows from {\eqref{l22}} with index
$s=2$ and {\eqref{l21}} with index $s=r$ in {Lemma~\ref{le2}}.
\end{proof}

Denote
%
\begin{equation}
\Omega _{D,\lambda _{1},\lambda _{2},V_{D}}=\big \|L_{K}^{1/2}(L_{K,
	\widehat{D}}+\lambda _{1}I+\lambda _{2}V_{D}^{T}V_{D})^{-1}\big \|.
\label{eq4.17}
\end{equation}
Now we start to bound
$\|f_{\widehat{D},\lambda _{1},\lambda _{2}}-f_{D,\lambda _{1},
\lambda _{2}}\|_{\rho}$ part in error decomposition of
$\|f_{\widehat{D},\lambda _{1},\lambda _{2}}-f_{\rho}\|_{\rho}$.
%
\begin{pro}%
\label{pro3}
There holds almost surely
%
\begin{equation}
	\big \|f_{\widehat{D},\lambda _{1},\lambda _{2}}-f_{D,\lambda _{1},
		\lambda _{2}}\big \|_{\rho}\leq \Omega _{D,\lambda _{1},\lambda _{2},V_{D}}
	\bigg(\big \|\hat{S}_{D}^{T}y_{D}-S_{D}^{T}y_{D}\big \|_{K}+\big \|L_{K,D}-L_{K,
		\widehat{D}}\big \|\big \|f_{D,\lambda _{1},\lambda _{2}}\big \|_{K}
	\bigg).
	\label{mbdd}
\end{equation}
\end{pro}

\begin{proof}
Recalling the representations of
$f_{\widehat{D},\lambda _{1},\lambda _{2}}$ in {\eqref{algorithm}} and
$f_{D,\lambda _{1},\lambda _{2}}$ in {\eqref{fuzhuf}}, we have
\begin{eqnarray}
	\nonumber
	&&f_{\widehat{D},\lambda _{1},\lambda _{2}}-f_{D,\lambda _{1},
		\lambda _{2}}
	\\
	\nonumber
	&&= (L_{K,\widehat{D}}+\lambda _{1}I+\lambda _{2}V_{D}^{T}V_{D})^{-1}
	\hat{S}_{D}^{T}y_{D}-(L_{K,D}+\lambda _{1}I+\lambda _{2}V_{D}^{T}V_{D})^{-1}S_{D}^{T}y_{D}
	\\
	\nonumber
	&&=(L_{K,\widehat{D}}+\lambda _{1}I+\lambda _{2}V_{D}^{T}V_{D})^{-1}(
	\hat{S}_{D}^{T}y_{D}-S_{D}^{T}y_{D})
	\\
	\nonumber
	&&\ \ \ +\bigg[(L_{K,\widehat{D}}+\lambda _{1}I+\lambda _{2}V_{D}^{T}V_{D})^{-1}-(L_{K,D}+
	\lambda _{1}I+\lambda _{2}V_{D}^{T}V_{D})^{-1}\bigg]S_{D}^{T}y_{D}
	\\
	\nonumber
	&&=(L_{K,\widehat{D}}+\lambda _{1}I+\lambda _{2}V_{D}^{T}V_{D})^{-1}(
	\hat{S}_{D}^{T}y_{D}-S_{D}^{T}y_{D})
	\\
	\nonumber
	&&\ \ \ +(L_{K,\widehat{D}}+\lambda _{1}I+\lambda _{2}V_{D}^{T}V_{D})^{-1}(L_{K,D}-L_{K,
		\widehat{D}})f_{D,\lambda _{1},\lambda _{2}},
\end{eqnarray}
in which we have used the fact that, for any invertible operator $A$ and
$B$, $A^{-1}-B^{-1}=A^{-1}(B-A)B^{-1}$, with
$A=L_{K,\widehat{D}}+\lambda _{1}I+\lambda _{2}V_{D}^{T}V_{D}$ and
$B=L_{K,D}+\lambda _{1}I+\lambda _{2}V_{D}^{T}V_{D}$. Take
$L_{\rho _{X_{\mu}}}^{2}$-norm on both sides of the above equality and
use the triangle formula, it follows that
\begin{eqnarray}
	\nonumber
	&&\big \|f_{\widehat{D},\lambda _{1},\lambda _{2}}-f_{D,\lambda _{1},
		\lambda _{2}}\big \|_{\rho}\leq \big \|L_{K}^{1/2}(L_{K,\widehat{D}}+
	\lambda _{1}I+\lambda _{2}V_{D}^{T}V_{D})^{-1}(\hat{S}_{D}^{T}y_{D}-S_{D}^{T}y_{D})
	\big \|_{K}
	\\
	\nonumber
	&&\ \ \ \ \ \ \ \ \ \ \ \ \ \ \ \ \ \ \ \ \ \ \ \ \ \ \ \ \ \ \ \ \  +
	\big \|L_{K}^{1/2}(L_{K,\widehat{D}}+\lambda _{1}I+\lambda _{2}V_{D}^{T}V_{D})^{-1}(L_{K,D}-L_{K,
		\widehat{D}})f_{D,\lambda _{1},\lambda _{2}}\big \|_{K}
	\\
	\nonumber
	&&\quad \quad \quad \quad \quad \quad \quad \quad \quad \quad \leq
	\big \|L_{K}^{1/2}(L_{K,\widehat{D}}+\lambda _{1}I+\lambda _{2}V_{D}^{T}V_{D})^{-1}
	\big \|\big \|\hat{S}_{D}^{T}y_{D}-S_{D}^{T}y_{D}\big \|_{K}
	\\
	\nonumber
	&&\quad \quad \quad \quad \quad \quad \quad \quad \quad \quad \quad +
	\big \|L_{K}^{1/2}(L_{K,\widehat{D}}+\lambda _{1}I+\lambda _{2}V_{D}^{T}V_{D})^{-1}
	\big \|\big \|L_{K,D}-L_{K,\widehat{D}}\big \|\big \|f_{D,\lambda _{1},
		\lambda _{2}}\big \|_{K}
	\\
	\nonumber
	&&\ \ \ \quad \quad \quad \quad \quad \quad \quad \quad \quad =
	\Omega _{D,\lambda _{1},\lambda _{2},V_{D}}\bigg(\big \|\hat{S}_{D}^{T}y_{D}-S_{D}^{T}y_{D}
	\big \|_{K}+\big \|L_{K,D}-L_{K,\widehat{D}}\big \|\big \|f_{D,
		\lambda _{1},\lambda _{2}}\big \|_{K}\bigg).\qedhere
\end{eqnarray}
\end{proof}
The following result provides an estimate for
$\Omega _{D,\lambda _{1},\lambda _{2},V_{D}}$.
%
\begin{lem}%
\label{omega}
Let $D$ be a sample drawn independently according to measure $\rho $, and
$\{x_{i,s}\}_{s=1}^{d_{i}}$ be a sample independently drawn according to
distribution $x_{i}$, $i=1,2,...,|D|$. Let the regularity condition {\eqref{regularity}} hold with some index $r\in (0,1]$ and the mapping
$K_{(\cdot )}:X_{\mu}\rightarrow \mathcal{L}(Y,\mathcal{H}_{K})$ be
$(\alpha ,L)$-H\"older continuous with $\alpha \in (0,1]$ and $L>0$. If
$\lambda _{1},\lambda _{2}\in (0,1)$ satisfy
$2c_{V}\lambda _{2}=\lambda _{1}^{\max \{2r,1\}}$, then we have
\begin{eqnarray}
	\nonumber
	&&\bigg\{E_{\mathbf{x}^{\mathbf{d},|D|}|\mathbf{z}^{|D|}}\big[\Omega _{D,
		\lambda _{1},\lambda _{2},V_{D}}^{2}\big]\bigg\}^{\frac{1}{2}}\leq
	\bigg(\frac{2\sqrt{2}}{\lambda _{1}^{3/2}}(2+\sqrt{\pi})^{\frac{1}{2}}\kappa L
	\frac{1}{|D|}\sum _{i=1}^{|D|}
	\frac{2^{\frac{\alpha +2}{2}}B_{\widetilde{K}}^{\frac{\alpha }{2}}}{d_{i}^{\frac{\alpha }{2}}}
	\bigg)\mathcal{A}_{D,\lambda _{1},\lambda _{2},V_{D}}
	\\
	\nonumber
	&&\quad \quad \quad \quad \quad \quad \quad \quad \quad \quad \quad
	\quad \quad \quad +\frac{2}{\lambda _{1}^{\frac{1}{2}}}\mathcal{A}_{D,
		\lambda _{1},\lambda _{2},V_{D}}^{1/2}.
\end{eqnarray}
\end{lem}
\begin{proof}
Divide $\Omega _{D,\lambda _{1},\lambda _{2},V_{D}}$ into two parts and
use triangle inequality of operator norm as follows,
\begin{eqnarray}
	\nonumber
	&&\Omega _{D,\lambda _{1},\lambda _{2},V_{D}}\leq \bigg \|L_{K}^{1/2}
	\bigg\{(L_{K,\widehat{D}}+\lambda _{1}I+\lambda _{2}V_{D}^{T}V_{D})^{-1}-(L_{K,D}+
	\lambda _{1}I+\lambda _{2}V_{D}^{T}V_{D})^{-1}\bigg\}\bigg \|
	\\
	\nonumber
	&&\ \ \ \ \ \ \ \ \ \ \ \ \ \ \ \ +\bigg \|L_{K}^{1/2}(L_{K,D}+
	\lambda _{1}I+\lambda _{2}V_{D}^{T}V_{D})^{-1}\bigg \|.
\end{eqnarray}
The first term is estimated as follows
\begin{eqnarray}
	\nonumber
	&&\bigg \|L_{K}^{1/2}\bigg\{(L_{K,\widehat{D}}+\lambda _{1}I+\lambda _{2}V_{D}^{T}V_{D})^{-1}-(L_{K,D}+
	\lambda _{1}I+\lambda _{2}V_{D}^{T}V_{D})^{-1}\bigg\}\bigg \|
	\\
	\nonumber
	&&=\bigg \|L_{K}^{1/2}(L_{K,D}+\lambda _{1}I+\lambda _{2}V_{D}^{T}V_{D})^{-1}(L_{K,D}-L_{K,
		\widehat{D}})(L_{K,\widehat{D}}+\lambda _{1}I+\lambda _{2}V_{D}^{T}V_{D})^{-1}
	\bigg \|
	\\
	\nonumber
	&&\leq \big \|L_{K}^{1/2}(L_{K}+\lambda _{1}I)^{-\frac{1}{2}}\big \|
	\big \|(\lambda _{1}I+L_{K})^{\frac{1}{2}}(L_{K}+\lambda _{1}I+
	\lambda _{2}V_{D}^{T}V_{D})^{-\frac{1}{2}}\big \|\big \|(L_{K}+
	\lambda _{1}I+\lambda _{2}V_{D}^{T}V_{D})^{\frac{1}{2}}
	\\
	\nonumber
	&&\ \ (L_{K,D}+\lambda _{1}I+\lambda _{2}V_{D}^{T}V_{D})^{-
		\frac{1}{2}}\big \|\big \|(L_{K,D}+\lambda _{1}I+\lambda _{2}V_{D}^{T}V_{D})^{-
		\frac{1}{2}}(L_{K}+\lambda _{1}I+\lambda _{2}V_{D}^{T}V_{D})^{
		\frac{1}{2}}\big \|
	\\
	\nonumber
	&&\ \ \big \|(L_{K}+\lambda _{1}I+\lambda _{2}V_{D}^{T}V_{D})^{-
		\frac{1}{2}}(L_{K}+\lambda _{1}I)^{\frac{1}{2}}\big \|\big \|(L_{K}+
	\lambda _{1}I)^{-\frac{1}{2}}\big \|\big \|L_{K,D}-L_{K,\widehat{D}}
	\big \|
	\\
	\nonumber
	&&\ \ \big \|(L_{K,\widehat{D}}+\lambda _{1}I+\lambda _{2}V_{D}^{T}V_{D})^{-1}
	\big \|
	\\
	\nonumber
	&&\leq \frac{2}{\lambda _{1}^{3/2}}\mathcal{A}_{D,\lambda _{1},
		\lambda _{2},V_{D}}\big \|L_{K,D}-L_{K,\widehat{D}}\big \|,
\end{eqnarray}
where we have used
\begin{equation*}
	\big \|L_{K}^{1/2}(L_{K}+\lambda _{1}I)^{-1/2}\big \|\leq 1,
\end{equation*}
\begin{equation*}
	\big \|(L_{K}+\lambda _{1}I)^{-1/2}\big \|\leq
	\frac{1}{\sqrt{\lambda _{1}}},
\end{equation*}
\begin{equation*}
	\big \|(L_{K,\widehat{D}}+\lambda _{1}I+\lambda _{2}V_{D}^{T}V_{D})^{-1}
	\big \|\leq \frac{1}{\lambda _{1}},
\end{equation*}
and the fact that for two self-adjoint operators $T_{1}$, $T_{2}$, there
holds $\|T_{1}T_{2}\|=\|T_{2}T_{1}\|$. The second term is estimated {as follows,}
\begin{eqnarray}
	\nonumber
	&&\bigg \|L_{K}^{1/2}(L_{K,D}+\lambda _{1}I+\lambda _{2}V_{D}^{T}V_{D})^{-1}
	\bigg \|
	\\
	\nonumber
	&&\leq \big \|L_{K}^{1/2}(\lambda _{1}I+L_{K})^{-\frac{1}{2}}\big \|
	\big \|(\lambda _{1}I+L_{K})^{\frac{1}{2}}(L_{K}+\lambda _{1}I+
	\lambda _{2}V_{D}^{T}V_{D})^{-\frac{1}{2}}\big \|\big \|(L_{K}+
	\lambda _{1}I+\lambda _{2}V_{D}^{T}V_{D})^{\frac{1}{2}}
	\\
	\nonumber
	&&\ \ \ (L_{K,D}+\lambda _{1}I+\lambda _{2}V_{D}^{T}V_{D})^{-
		\frac{1}{2}}\big \|\big \|(L_{K,D}+\lambda _{1}I+\lambda _{2}V_{D}^{T}V_{D})^{-
		\frac{1}{2}}\big \|
	\\
	\nonumber
	&& \leq \frac{\sqrt{2}}{\sqrt{\lambda _{1}}}\mathcal{A}_{D,\lambda _{1},
		\lambda _{2},V_{D}}^{1/2},
\end{eqnarray}
in which we have used the fact
\begin{equation*}
	\big \|(L_{K,D}+\lambda _{1}I+\lambda _{2}V_{D}^{T}V_{D})^{-
		\frac{1}{2}}\big \|\leq \frac{1}{\sqrt{\lambda _{1}}}
\end{equation*}
which follows from the analysis above. Then we know from the above analysis that
%
\begin{equation}
	\nonumber
	\Omega _{D,\lambda _{1},\lambda _{2},V_{D}}\leq
	\frac{2}{\lambda _{1}^{3/2}}\mathcal{A}_{D,\lambda _{1},\lambda _{2},V_{D}}
	\big \|L_{K,D}-L_{K,\widehat{D}}\big \|+
	\frac{\sqrt{2}}{\sqrt{\lambda _{1}}}\mathcal{A}_{D,\lambda _{1},
		\lambda _{2},V_{D}}^{1/2}.
\end{equation}
This directly implies
%
\begin{equation}
	\nonumber
	\Omega _{D,\lambda _{1},\lambda _{2},V_{D}}^{2}\leq
	\frac{8}{\lambda _{1}^{3}}\mathcal{A}_{D,\lambda _{1},\lambda _{2},V_{D}}^{2}
	\big \|L_{K,D}-L_{K,\widehat{D}}\big \|^{2}+\frac{4}{\lambda _{1}}
	\mathcal{A}_{D,\lambda _{1},\lambda _{2},V_{D}}.
\end{equation}
Taking expectations on both sides and using {\eqref{s1ld}}, we obtain that
\begin{eqnarray}
	\nonumber
	&&\bigg\{E_{\mathbf{x}^{\mathbf{d},|D|}|\mathbf{z}^{|D|}}\big[\Omega _{D,
		\lambda _{1},\lambda _{2},V_{D}}^{2}\big]\bigg\}\leq
	\frac{8}{\lambda _{1}^{3}}\mathcal{A}_{D,\lambda _{1},\lambda _{2},V_{D}}^{2}E_{
		\mathbf{x}^{d,|D|}|\mathbf{z}^{|D|}}\bigg[\big \|L_{K,D}-L_{K,
		\widehat{D}}\big \|^{2}\bigg]+\frac{4}{\lambda _{1}}\mathcal{A}_{D,
		\lambda _{1},\lambda _{2},V_{D}}
	\\
	\nonumber
	&&\quad \quad \quad \quad \quad \quad \quad \quad \quad \quad \quad
	\ \ \leq \bigg(\frac{8}{\lambda _{1}^{3}}\kappa ^{2}(2+\sqrt{\pi})L^{2}
	\frac{1}{|D|^{2}}\sum _{i=1}^{|D|}
	\frac{2^{\alpha +2}B_{\widetilde{K}}^{\alpha }}{d_{i}^{\alpha }}
	\bigg)\mathcal{A}_{D,\lambda _{1},\lambda _{2},V_{D}}^{2}
	\\
	\nonumber
	&&\quad \quad \quad \quad \quad \quad \quad \quad \quad \quad \quad
	\ \ \quad +\frac{4}{\lambda _{1}}\mathcal{A}_{D,\lambda _{1},\lambda _{2},V_{D}}.
\end{eqnarray}
Finally, we have
%
\begin{equation}
	\nonumber
	\bigg\{E_{\mathbf{x}^{\mathbf{d},|D|}|\mathbf{z}^{|D|}}\big[\Omega _{D,
		\lambda _{1},\lambda _{2},V_{D}}^{2}\big]\bigg\}^{\frac{1}{2}}\leq
	\bigg(\frac{2\sqrt{2}}{\lambda _{1}^{3/2}}(2+\sqrt{\pi})^{\frac{1}{2}}\kappa L
	\frac{1}{|D|}\sum _{i=1}^{|D|}
	\frac{2^{\frac{\alpha +2}{2}}B_{\widetilde{K}}^{\frac{\alpha }{2}}}{d_{i}^{\frac{\alpha }{2}}}
	\bigg)\mathcal{A}_{D,\lambda _{1},\lambda _{2},V_{D}}+
	\frac{2}{\lambda _{1}^{\frac{1}{2}}}\mathcal{A}_{D,\lambda _{1},
		\lambda _{2},V_{D}}^{1/2}.\qedhere
\end{equation}
\end{proof}
Before coming to the proof of {Theorem~\ref{thm1}}, the following expected bound for
$\big \|f_{D,\lambda _{1},\lambda _{2}}\big \|_{K}$ in {\eqref{mbdd}} is
needed.
%
\begin{pro}
\label{prop4}
Suppose $|y|\leq M$ almost surely. Let the regularity condition {\eqref{regularity}} hold with some index $1/2\leq r\leq 1$. If
$\lambda _{1},\lambda _{2}\in (0,1)$ satisfy
$2c_{V}\lambda _{2}=\lambda _{1}^{2r}$, then we have
\begin{eqnarray}
	\nonumber
	&&\bigg\{E_{\mathbf{z}^{|D|}}\bigg[\big \|f_{D,\lambda _{1},\lambda _{2}}
	\big \|_{K}^{2}\bigg]\bigg\}^{1/2}\leq 2\sqrt{6}(2\Gamma (5)+\log ^{4}2)^{1/2}
	\frac{M}{\kappa }
	\frac{\mathcal{B}_{|D|,\lambda _{1}}}{\sqrt{\lambda _{1}}}\bigg(
	\frac{2\mathcal{B}_{|D|,\lambda _{1}}}{\sqrt{\lambda _{1}}}+1\bigg)
	\\
	\nonumber
	&&\ \ \ \ +\sqrt{3}(\lambda _{1}+\lambda _{2}c_{V})\lambda _{1}^{r-
		\frac{3}{2}}2^{r-\frac{1}{2}}\big \|g_{\rho}\big \|_{\rho}(2\Gamma (4r-1)+
	\log ^{4r-2}2)^{1/2}\bigg(
	\frac{2\mathcal{B}_{|D|,\lambda _{1}}}{\sqrt{\lambda _{1}}}+1\bigg)^{2r-1}+
	\sqrt{3}\kappa ^{r-\frac{1}{2}}\big \|g_{\rho}\big \|_{\rho}
\end{eqnarray}
where $\mathcal{B}_{|D|,\lambda _{1}}$ is defined as in {\eqref{b}}.
\end{pro}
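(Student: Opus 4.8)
The plan is to start from the closed form $f_{D,\la_1,\la_2}=(L_{K,D}+\la_1 I+\la_2\ii^T\ii)^{-1}S_D^Ty_D$ already obtained in the proof of Proposition~\ref{pro3}, and to split the data term around $f_{\rho}=L_K^rg_{\rho}$ into three pieces. With $J_1:=(L_{K,D}+\la_1 I+\la_2\ii^T\ii)^{-1}(S_D^Ty_D-L_{K,D}f_{\rho})$ and $J_2:=(L_{K,D}+\la_1 I+\la_2\ii^T\ii)^{-1}(\la_1 I+\la_2\ii^T\ii)f_{\rho}$, the identity $L_{K,D}f_{\rho}=(L_{K,D}+\la_1 I+\la_2\ii^T\ii)f_{\rho}-(\la_1 I+\la_2\ii^T\ii)f_{\rho}$ gives $f_{D,\la_1,\la_2}=J_1+f_{\rho}-J_2$, hence $\jj f_{D,\la_1,\la_2}\jj_K\le\jj J_1\jj_K+\jj f_{\rho}\jj_K+\jj J_2\jj_K$. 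Squaring, using $(a+b+c)^2\le3(a^2+b^2+c^2)$, and taking $E_{\mathbf{z}^{|D|}}$, it then suffices to bound $\big\{E_{\mathbf{z}^{|D|}}[\jj J_1\jj_K^2]\big\}^{1/2}$, the deterministic quantity $\jj f_{\rho}\jj_K$, and $\big\{E_{\mathbf{z}^{|D|}}[\jj J_2\jj_K^2]\big\}^{1/2}$ separately; these three contributions, each carrying the factor $\sqrt3$, will reproduce the first, third and second summand of the asserted bound.

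The middle term is immediate: since $r\ge1/2$ one has $f_{\rho}=L_K^{r-1/2}(L_K^{1/2}g_{\rho})$ with $r-1/2\in[0,1/2]$, and the isometry $\jj L_K^{1/2}h\jj_K=\jj h\jj_{\rho}$ together with the kernel bound gives $\jj f_{\rho}\jj_K=\jj L_K^{r-1/2}g_{\rho}\jj_{\rho}\le\kk^{r-1/2}\jj g_{\rho}\jj_{\rho}$, the third summand. For $J_1$ I would insert $(\la_1 I+L_K)^{1/2}$ and write $\jj J_1\jj_K\le\jj(L_{K,D}+\la_1 I+\la_2\ii^T\ii)^{-1}(\la_1 I+L_K)^{1/2}\jj\cdot\jj(\la_1 I+L_K)^{-1/2}(S_D^Ty_D-L_{K,D}f_{\rho})\jj_K$, and bound the operator factor, up to a universal constant, by $\la_1^{-1/2}\dd^{1/2}$, combining $\jj(L_{K,D}+\la_1 I+\la_2\ii^T\ii)^{-1/2}\jj\le\la_1^{-1/2}$, the operator inequality $\|T_1^sT_2^s\|\le\|T_1T_2\|^s$ for positive $T_1,T_2$ and $s\in[0,1]$ (which yields $\dd^{1/2}$ through $\jj(L_{K,D}+\la_1 I+\la_2\ii^T\ii)^{-1/2}(\la_1 I+L_K+\la_2\ii^T\ii)^{1/2}\jj\le\dd^{1/2}$), the contraction $\jj(\la_1 I+L_K+\la_2\ii^T\ii)^{-1/2}(\la_1 I+L_K)^{1/2}\jj\le1$, and Lemma~\ref{guolem}. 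Taking $E_{\mathbf{z}^{|D|}}$ of the squared bound, then Cauchy--Schwarz, and applying \eqref{l21} with $s=2$ to $E_{\mathbf{z}^{|D|}}[\dd^2]$ and \eqref{l22} with $s=4$ to the fourth moment of $\jj(\la_1 I+L_K)^{-1/2}(S_D^Ty_D-L_{K,D}f_{\rho})\jj_K$ yields the first summand.

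The delicate piece is $J_2$. First $\jj\la_1 I+\la_2\ii^T\ii\jj\le\la_1+\la_2c_{V}$ by \eqref{L}, so $\jj J_2\jj_K\le(\la_1+\la_2c_{V})\jj(L_{K,D}+\la_1 I+\la_2\ii^T\ii)^{-1}L_K^rg_{\rho}\jj_K$. Writing $L_K^rg_{\rho}=L_K^{r-1/2}(L_K^{1/2}g_{\rho})$ and inserting the operators $L_{K,D}+\la_1 I+\la_2\ii^T\ii,\ \la_1 I+L_K+\la_2\ii^T\ii,\ \la_1 I+L_K$ and finally $L_K$, I would peel off, in turn: $\jj(L_{K,D}+\la_1 I+\la_2\ii^T\ii)^{r-3/2}\jj\le\la_1^{r-3/2}$ (valid since $r-3/2\le0$ and the operator dominates $\la_1 I$); the factor $\dd^{r-1/2}$ from $\|T_1^sT_2^s\|\le\|T_1T_2\|^s$ with $s=r-1/2\in[0,1]$; the factor $2^{r-1/2}$ from Lemma~\ref{guolem} applied to $(\la_1 I+L_K)(\la_1 I+L_K+\la_2\ii^T\ii)^{-1}$; and the contraction $\jj(\la_1 I+L_K)^{-(r-1/2)}L_K^{r-1/2}\jj\le1$ (commuting functions of $L_K$). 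This gives $\jj J_2\jj_K\le(\la_1+\la_2c_{V})\la_1^{r-3/2}2^{r-1/2}\dd^{r-1/2}\jj g_{\rho}\jj_{\rho}$, and \eqref{l21} with $s=2r-1\in[0,1]$ for $E_{\mathbf{z}^{|D|}}[\dd^{2r-1}]$ produces exactly the second summand, finishing the estimate. The main obstacle throughout is this operator bookkeeping for $J_1$ and $J_2$: one has to insert precisely the right fractional powers of the three regularised operators and keep straight which comparisons are genuine contractions (commuting functions of $L_K$, or $\la_1 I+L_K\preceq\la_1 I+L_K+\la_2\ii^T\ii$ conjugated), which require the inequality $\|T_1^sT_2^s\|\le\|T_1T_2\|^s$ and hence the restriction $s\in[0,1]$, and which cost the constant $2$ of Lemma~\ref{guolem}; the hypotheses $r\ge1/2$ (so that $f_{\rho}\in\huaH_K$ and $r-1/2\in[0,1/2]$) and $2c_{V}\la_2=\la_1^{2r}$ (needed in Lemmas~\ref{guolem} and~\ref{le2}) enter at several of these steps.
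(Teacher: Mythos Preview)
Your proposal is correct and follows essentially the same route as the paper: the identity $f_{D,\la_1,\la_2}=J_1+f_{\rho}-J_2$ is exactly the paper's splitting $\jj f_{D,\la_1,\la_2}\jj_K\le\jj f_{D,\la_1,\la_2}-f_{\rho}\jj_K+\jj f_{\rho}\jj_K$ with $f_{D,\la_1,\la_2}-f_{\rho}=J_1-J_2$, and your operator insertions for $J_1$ and $J_2$, the use of $(a+b+c)^2\le3(a^2+b^2+c^2)$, Cauchy--Schwarz, and the moment bounds \eqref{l21}--\eqref{l22} with the same exponents ($s=2,4$ and $s=2r-1$) coincide with the paper's argument line by line. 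The only cosmetic difference is that the paper bounds $\jj(\la_1 I+L_K+\la_2\ii^T\ii)^{-1/2}(\la_1 I+L_K)^{1/2}\jj$ by $\sqrt{2}$ via Lemma~\ref{guolem}, whereas your contraction argument (from $\la_1 I+L_K\preceq\la_1 I+L_K+\la_2\ii^T\ii$) gives the sharper bound $1$; either choice reproduces the stated constants.
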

\begin{proof}
Dividing $f_{D,\lambda _{1},\lambda _{2}}$ into two parts and using the triangle
formula for RKHS norm, we have
\begin{eqnarray}
	\nonumber
	&&\big \|f_{D,\lambda _{1},\lambda _{2}}\big \|_{K}\leq \big \|f_{D,
		\lambda _{1},\lambda _{2}}-f_{\rho}\big \|_{K}+\big \|f_{\rho}\big \|_{K}
	\\
	\nonumber
	&&=\big \|(L_{K,D}+\lambda _{1}I+\lambda _{2}V_{D}^{T}V_{D})^{-1}(S_{D}^{T}y_{D}-L_{K,D}f_{
		\rho}-\lambda _{1}f_{\rho}-\lambda _{2}V_{D}^{T}V_{D}f_{\rho})\big \|_{K}+
	\big \|f_{\rho}\big \|_{K}
	\\
	\nonumber
	&&\leq \big \|(L_{K,D}+\lambda _{1}I+\lambda _{2}V_{D}^{T}V_{D})^{-1}(S_{D}^{T}y_{D}-L_{K,D}f_{
		\rho})\big \|_{K}
	\\
	\nonumber
	&&\ \ \  +(\lambda _{1}+\lambda _{2}c_{V})\big \|(L_{K,D}+\lambda _{1}I+
	\lambda _{2}V_{D}^{T}V_{D})^{-1}L_{K}^{r}g_{\rho}\big \|_{K}+\big \|f_{
		\rho}\big \|_{K}.
\end{eqnarray}
The first term is estimated as follows,
\begin{eqnarray}
	\nonumber
	&&\big \|(L_{K,D}+\lambda _{1}I+\lambda _{2}V_{D}^{T}V_{D})^{-1}(S_{D}^{T}y_{D}-L_{K,D}f_{
		\rho})\big \|_{K}
	\\
	\nonumber
	&&=\big \|(L_{K,D}+\lambda _{1}I+\lambda _{2}V_{D}^{T}V_{D})^{-
		\frac{1}{2}}\big \|\big \|(L_{K,D}+\lambda _{1}I+\lambda _{2}V_{D}^{T}V_{D})^{-
		\frac{1}{2}}(L_{K}+\lambda _{1}I+\lambda _{2}V_{D}^{T}V_{D})^{
		\frac{1}{2}}\big \|
	\\
	\nonumber
	&&\ \ \ \big \|(L_{K}+\lambda _{1}I+\lambda _{2}V_{D}^{T}V_{D})^{-
		\frac{1}{2}}(L_{K}+\lambda _{1}I)^{\frac{1}{2}}\big \|\big \|(L_{K}+
	\lambda _{1}I)^{-\frac{1}{2}}(S_{D}^{T}y_{D}-L_{K,D}f_{\rho})\big \|_{K}
	\\
	\nonumber
	&&\leq \frac{\sqrt{2}}{\sqrt{\lambda _{1}}}\mathcal{A}_{D,\lambda _{1},
		\lambda _{2},V_{D}}^{1/2}\big \|(L_{K}+\lambda _{1}I)^{-\frac{1}{2}}(S_{D}^{T}y_{D}-L_{K,D}f_{
		\rho})\big \|_{K}.
\end{eqnarray}
Recall that $1/2\leq r\leq 1$, then the second term is estimated {as follows,}
\begin{eqnarray}
	\nonumber
	&&(\lambda _{1}+\lambda _{2}c_{V})\big \|(L_{K,D}+\lambda _{1}I+
	\lambda _{2}V_{D}^{T}V_{D})^{-1}L_{K}^{r}g_{\rho}\big \|_{K}
	\\
	\nonumber
	&&\leq (\lambda _{1}+\lambda _{2}c_{V})\big \|(L_{K,D}+\lambda _{1}I+
	\lambda _{2}V_{D}^{T}V_{D})^{r-\frac{3}{2}}\big \|\big \|(L_{K,D}+
	\lambda _{1}I+\lambda _{2}V_{D}^{T}V_{D})^{-r+\frac{1}{2}}(L_{K}+
	\lambda _{1}I+\lambda _{2}V_{D}^{T}V_{D})^{r-\frac{1}{2}}\big \|
	\\
	\nonumber
	&&\ \ \ \big \|(L_{K}+\lambda _{1}I+\lambda _{2}V_{D}^{T}V_{D})^{-r+
		\frac{1}{2}}(L_{K}+\lambda _{1}I)^{r-\frac{1}{2}}\big \|\big \|(L_{K}+
	\lambda _{1}I)^{-r+\frac{1}{2}}L_{K}^{r-\frac{1}{2}}\big \|\big \|L_{K}^{1/2}g_{
		\rho}\big \|_{K}
	\\
	\nonumber
	&&\leq (\lambda _{1}+\lambda _{2}c_{V})\lambda _{1}^{r-\frac{3}{2}}2^{r-
		\frac{1}{2}}\big \|g_{\rho}\big \|_{\rho}\mathcal{A}_{D,\lambda _{1},
		\lambda _{2},V_{D}}^{r-\frac{1}{2}}.
\end{eqnarray}
Hence
\begin{eqnarray}
	\nonumber
	&&\big \|f_{D,\lambda _{1},\lambda _{2}}\big \|_{K}\leq
	\frac{\sqrt{2}}{\sqrt{\lambda _{1}}}\big \|(L_{K}+\lambda _{1}I)^{-
		\frac{1}{2}}(S_{D}^{T}y_{D}-L_{K,D}f_{\rho})\big \|_{K}\mathcal{A}_{D,
		\lambda _{1},\lambda _{2},V_{D}}^{1/2}
	\\
	\nonumber
	&&\ \ \ \ \ \ \ \ \ \ \ \ \ \ \ \ \ \ \ \ +(\lambda _{1}+\lambda _{2}c_{V})
	\lambda _{1}^{r-\frac{3}{2}}2^{r-\frac{1}{2}}\big \|g_{\rho}\big \|_{
		\rho}\mathcal{A}_{D,\lambda _{1},\lambda _{2},V_{D}}^{r-\frac{1}{2}}+
	\kappa ^{r-\frac{1}{2}}\big \|g_{\rho}\big \|_{\rho}.
\end{eqnarray}
After taking expectations, Jensen's inequality implies
\begin{eqnarray}
	\nonumber
	&&E_{\mathbf{z}^{|D|}}\bigg[\big \|f_{D,\lambda _{1},\lambda _{2}}
	\big \|_{K}^{2}\bigg]\leq \frac{6}{\lambda _{1}}E_{\mathbf{z}^{|D|}}
	\bigg[\big \|(L_{K}+\lambda _{1}I)^{-\frac{1}{2}}(S_{D}^{T}y_{D}-L_{K,D}f_{
		\rho})\big \|_{K}^{2}\mathcal{A}_{D,\lambda _{1},\lambda _{2},V_{D}}
	\bigg]
	\\
	\nonumber
	&&\quad \quad \quad \quad \quad \quad \quad \quad \quad \quad + 3(
	\lambda _{1}+\lambda _{2}c_{V})^{2}\lambda _{1}^{2r-3}2^{2r-1}\big \|g_{
		\rho}\big \|_{\rho}^{2}E_{\mathbf{z}^{|D|}}\bigg[\mathcal{A}_{D,
		\lambda _{1},\lambda _{2},V_{D}}^{2r-1}\bigg]+3\kappa ^{2r-1}\big \|g_{
		\rho}\big \|_{\rho}^{2}
	\\
	\nonumber
	&&\quad \quad \quad \quad \quad \quad \leq \frac{6}{\lambda _{1}}
	\Big\{E_{\mathbf{z}^{|D|}}\Big[\big \|(L_{K}+\lambda _{1}I)^{-
		\frac{1}{2}}(S_{D}^{T}y_{D}-L_{K,D}f_{\rho})\big \|_{K}^{4}\Big]
	\Big\}^{\frac{1}{2}}\Big\{E_{\mathbf{z}^{|D|}}\Big[\mathcal{A}_{D,\lambda _{1},\lambda _{2},V_{D}}^{2}
	\Big]\Big\}^{\frac{1}{2}}
	\\
	\nonumber
	&&\quad \quad \quad \quad \quad \quad \quad \quad +3(\lambda _{1}+
	\lambda _{2}c_{V})^{2}\lambda _{1}^{2r-3}2^{2r-1}\big \|g_{\rho}
	\big \|_{\rho}^{2}E_{\mathbf{z}^{|D|}}\bigg[\mathcal{A}_{D,\lambda _{1},
		\lambda _{2},V_{D}}^{2r-1}\bigg]+3\kappa ^{2r-1}\big \|g_{\rho}
	\big \|_{\rho}^{2}
	\\
	\nonumber
	&&\quad \quad \quad \quad \quad \quad \leq 24(2\Gamma (5)+\log ^{4}2)
	\frac{M^{2}}{\kappa ^{2}}
	\frac{\mathcal{B}_{|D|,\lambda _{1}}^{2}}{\lambda _{1}}\bigg(
	\frac{2\mathcal{B}_{|D|,\lambda _{1}}}{\sqrt{\lambda _{1}}}+1\bigg)^{2}
	\\
	\nonumber
	&&+3(\lambda _{1}+\lambda _{2}c_{V})^{2}\lambda _{1}^{2r-3}2^{2r-1}
	\big \|g_{\rho}\big \|_{\rho}^{2}(2\Gamma (4r-1)+\log ^{4r-2}2)\bigg(
	\frac{2\mathcal{B}_{|D|,\lambda _{1}}}{\sqrt{\lambda _{1}}}+1\bigg)^{4r-2}+3
	\kappa ^{2r-1}\big \|g_{\rho}\big \|_{\rho}^{2},
\end{eqnarray}
in which the second inequality follows from the Schwarz inequality, and
the third inequality follows from {Lemma~\ref{le2}} with $s=2$ and
$2r-1$ in {\eqref{l21}} and $s=4$ in {\eqref{l22}}. The desired result is obtained
after noting the fact
$\sqrt{a_{1}+a_{2}+a_{3}}\leq \sqrt{a_{1}}+\sqrt{a_{2}}+\sqrt{a_{3}}$ for
$a_{1},a_{2},a_{3}\geq 0$.
\end{proof}

\begin{proof}[Proof of {Theorem~\ref{thm1}}]
Return to {Proposition~\ref{pro3}}, after taking expectations on both sides, and
using Schwarz inequality, inequalities {\eqref{s1sd}},
{\eqref{s1ld}} and {Lemma~\ref{omega}}, we have
%
\begin{eqnarray}
	\nonumber
	&&E\bigg[\Omega _{D,\lambda _{1},\lambda _{2},V_{D}}\big \|\hat{S}_{D}^{T}y_{D}-S_{D}^{T}y_{D}
	\big \|\bigg]
	\\
	\nonumber
	&&\leq E_{\mathbf{z}^{|D|}}\bigg[\bigg\{E_{\mathbf{x}^{\mathbf{d},|D|}|
		\mathbf{z}^{|D|}}\big[\Omega _{D,\lambda _{1},\lambda _{2},V_{D}}^{2}
	\big]\bigg\}^{\frac{1}{2}}\bigg\{E_{\mathbf{x}^{\mathbf{d},|D|}|
		\mathbf{z}^{|D|}}\big[\big \|\hat{S}_{D}^{T}y_{D}-S_{D}^{T}y_{D}
	\big \|^{2}\big]\bigg\}^{\frac{1}{2}}\bigg]
	\\
	\nonumber
	&&\leq \bigg(\frac{2\sqrt{2}}{\lambda _{1}^{3/2}}(2+\sqrt{\pi})^{
		\frac{1}{2}}\kappa L
	\frac{2^{\frac{\alpha +2}{2}}B_{\widetilde{K}}^{\frac{\alpha }{2}}}{\widetilde{d}^{\frac{\alpha }{2}}}+
	\frac{2}{\lambda _{1}^{\frac{1}{2}}}\bigg)(2+\sqrt{\pi})^{\frac{1}{2}}LM
	\frac{2^{\frac{\alpha }{2}}B_{\widetilde{K}}^{\frac{\alpha }{2}}}{\widetilde{d}^{\frac{\alpha }{2}}}(2
	\Gamma (3)+\log ^{2}2)\bigg[\bigg(
	\frac{2\mathcal{B}_{|D|,\lambda _{1}}}{\sqrt{\lambda _{1}}}+1\bigg)^{2}
	\\
	\nonumber
	&&\quad +(2\Gamma (2)+\log 2)\bigg(
	\frac{2\mathcal{B}_{|D|,\lambda _{1}}}{\sqrt{\lambda _{1}}}+1\bigg)
	\bigg]
	\\
	&&\leq 2\bigg(\frac{2\sqrt{2}}{\lambda _{1}^{3/2}}(2+\sqrt{\pi})^{
		\frac{1}{2}}\kappa L
	\frac{2^{\frac{\alpha +2}{2}}B_{\widetilde{K}}^{\frac{\alpha }{2}}}{\widetilde{d}^{\frac{\alpha }{2}}}+
	\frac{2}{\lambda _{1}^{\frac{1}{2}}}\bigg)(2+\sqrt{\pi})^{\frac{1}{2}}LM
	\frac{2^{\frac{\alpha }{2}}B_{\widetilde{K}}^{\frac{\alpha }{2}}}{\widetilde{d}^{\frac{\alpha }{2}}}(2
	\Gamma (3)+\log ^{2}2)\bigg(
	\frac{2\mathcal{B}_{|D|,\lambda _{1}}}{\sqrt{\lambda _{1}}}+1\bigg)^{2}
	\label{bdd1}
\end{eqnarray}
and
%
\begin{eqnarray}
	\nonumber
	&&E\bigg[\Omega _{D,\lambda _{1},\lambda _{2},V_{D}}\big \|L_{K,D}-L_{K,
		\widehat{D}}\big \|\big \|f_{D,\lambda _{1},\lambda _{2}}\big \|_{K}
	\bigg]
	\\
	\nonumber
	&&\leq E_{\mathbf{z}^{|D|}}\bigg[E_{\mathbf{x}^{\mathbf{d},|D|}|
		\mathbf{z}^{|D|}}\Big[\Omega _{D,\lambda _{1},\lambda _{2},V_{D}}
	\big \|L_{K,D}-L_{K,\widehat{D}}\big \|\Big]\cdot \big \|f_{D,
		\lambda _{1},\lambda _{2}}\big \|_{K}\bigg]
	\\
	\nonumber
	&&\leq E_{\mathbf{z}^{|D|}}\bigg[\bigg\{E_{\mathbf{x}^{\mathbf{d},|D|}|
		\mathbf{z}^{|D|}}\Big[\Omega _{D,\lambda _{1},\lambda _{2},V_{D}}^{2}
	\Big]\bigg\}^{\frac{1}{2}}\bigg\{E_{\mathbf{x}^{\mathbf{d},|D|}|
		\mathbf{z}^{|D|}}\Big[\big \|L_{K,D}-L_{K,\widehat{D}}\big \|^{2}
	\Big]\bigg\}^{\frac{1}{2}}\big \|f_{D,\lambda _{1},\lambda _{2}}
	\big \|_{K}\bigg]
	\\
	\nonumber
	&&\leq \bigg(\frac{2\sqrt{2}}{\lambda _{1}^{3/2}}(2+\sqrt{\pi})^{
		\frac{1}{2}}\kappa L
	\frac{2^{\frac{\alpha +2}{2}}B_{\widetilde{K}}^{\frac{\alpha }{2}}}{\widetilde{d}^{\frac{\alpha }{2}}}+
	\frac{2}{\lambda _{1}^{\frac{1}{2}}}\bigg)\kappa L(2+\sqrt{\pi})^{
		\frac{1}{2}}
	\frac{2^{\frac{\alpha +2}{2}}B_{\widetilde{K}}^{\frac{\alpha }{2}}}{\widetilde{d}^{\frac{\alpha }{2}}}E_{
		\mathbf{z}^{|D|}}\Big[\big \|f_{D,\lambda _{1},\lambda _{2}}\big \|_{K}(
	\mathcal{A}_{D,\lambda _{1},\lambda _{2},V_{D}}+\mathcal{A}_{D,
		\lambda _{1},\lambda _{2},V_{D}}^{1/2})\Big]
	\\
	\nonumber
	&&\leq \bigg(\frac{2\sqrt{2}}{\lambda _{1}^{3/2}}(2+\sqrt{\pi})^{
		\frac{1}{2}}\kappa L
	\frac{2^{\frac{\alpha +2}{2}}B_{\widetilde{K}}^{\frac{\alpha }{2}}}{\widetilde{d}^{\frac{\alpha }{2}}}+
	\frac{2}{\lambda _{1}^{\frac{1}{2}}}\bigg)\kappa L(2+\sqrt{\pi})^{
		\frac{1}{2}}
	\frac{2^{\frac{\alpha +2}{2}}B_{\widetilde{K}}^{\frac{\alpha }{2}}}{\widetilde{d}^{\frac{\alpha }{2}}}
	\Big\{E_{\mathbf{z}^{|D|}}\Big[\big \|f_{D,\lambda _{1},\lambda _{2}}
	\big \|_{K}^{2}\Big]\Big\}^{\frac{1}{2}}\times
	\\
	\nonumber
	&&\bigg(\Big\{E_{\mathbf{z}^{|D|}}\Big[\mathcal{A}_{D,\lambda _{1},
		\lambda _{2},V_{D}}^{2}\Big]\Big\}^{\frac{1}{2}}+\Big\{E_{\mathbf{z}^{|D|}}
	\Big[\mathcal{A}_{D,\lambda _{1},\lambda _{2},V_{D}}\Big]\Big\}^{
		\frac{1}{2}}\bigg)
	\\
	\nonumber
	&&\leq 2\bigg(\frac{2\sqrt{2}}{\lambda _{1}^{3/2}}(2+\sqrt{\pi})^{
		\frac{1}{2}}\kappa L
	\frac{2^{\frac{\alpha +2}{2}}B_{\widetilde{K}}^{\frac{\alpha }{2}}}{\widetilde{d}^{\frac{\alpha }{2}}}+
	\frac{2}{\lambda _{1}^{\frac{1}{2}}}\bigg)\kappa L(2+\sqrt{\pi})^{
		\frac{1}{2}}
	\frac{2^{\frac{\alpha +2}{2}}B_{\widetilde{K}}^{\frac{\alpha }{2}}}{\widetilde{d}^{\frac{\alpha }{2}}}
	\Bigg[2\sqrt{6}(2\Gamma (5)+\log ^{4}2)^{\frac{1}{2}}
	\frac{M}{\kappa }\times
	\\
	\nonumber
	&&\ \ \frac{\mathcal{B}_{|D|,\lambda _{1}}}{\sqrt{\lambda _{1}}}
	\bigg(\frac{2\mathcal{B}_{|D|,\lambda _{1}}}{\sqrt{\lambda _{1}}}+1
	\bigg)+\sqrt{3}(\lambda _{1}+\lambda _{2}c_{V})\lambda _{1}^{r-
		\frac{3}{2}}2^{r-\frac{1}{2}}\big \|g_{\rho}\big \|_{\rho}(2\Gamma (4r-1)+
	\log ^{4r-2}2)^{\frac{1}{2}}\bigg(
	\frac{2\mathcal{B}_{|D|,\lambda _{1}}}{\sqrt{\lambda _{1}}}+1\bigg)^{2r-1}
	\\
	&&\ \ +\sqrt{3}\kappa ^{r-\frac{1}{2}}\big \|g_{\rho}\big \|_{\rho}
	\Bigg](2\Gamma (5)+\log ^{4}2)^{\frac{1}{2}}\bigg(
	\frac{2\mathcal{B}_{|D|,\lambda _{1}}}{\sqrt{\lambda _{1}}}+1\bigg)^{2}.
	\label{bdd2}
\end{eqnarray}
Combining {\eqref{bdd1}} and {\eqref{bdd2}} with {Proposition~\ref{pro3}} and
taking expectations on both sides of {\eqref{mbdd}}, we have
%
\begin{eqnarray}
	\nonumber
	&&E\Big[\big \|f_{\widehat{D},\lambda _{1},\lambda _{2}}-f_{D,
		\lambda _{1},\lambda _{2}}\big \|_{\rho}\Big]\leq 2\bigg(2\sqrt{2}(2+
	\sqrt{2})^{\frac{1}{2}}\kappa L
	\frac{2^{\frac{\alpha +2}{2}}B_{\widetilde{K}}^{\frac{\alpha }{2}}}{\lambda _{1}\widetilde{d}^{\frac{\alpha }{2}}}+2
	\bigg)\bigg(
	\frac{2\mathcal{B}_{|D|,\lambda _{1}}}{\sqrt{\lambda _{1}}}+1\bigg)^{2}
	\frac{2^{\frac{\alpha }{2}}B_{\widetilde{K}}^{\frac{\alpha }{2}}}{\lambda _{1}^{\frac{1}{2}}\widetilde{d}^{\frac{\alpha }{2}}}
	\times
	\\
	\nonumber
	&&\Bigg[(2+\sqrt{\pi})^{\frac{1}{2}}LM(2\Gamma (3)+\log ^{2}2)+2(2
	\Gamma (5)+\log ^{4}2)^{\frac{1}{2}}\Bigg(2\sqrt{6}(2\Gamma (5)+\log ^{4}2)^{
		\frac{1}{2}}\frac{M}{\kappa }
	\frac{\mathcal{B}_{|D|,\lambda _{1}}}{\sqrt{\lambda _{1}}}\bigg(
	\frac{2\mathcal{B}_{|D|,\lambda _{1}}}{\sqrt{\lambda _{1}}}+1\bigg)
	\\
	&&+\sqrt{3}(\lambda _{1}+\lambda _{2}c_{V})\lambda _{1}^{r-
		\frac{3}{2}}2^{r-\frac{1}{2}}\big \|g_{\rho}\big \|_{\rho}(2\Gamma (4r-1)+
	\log ^{4r-2}2)^{\frac{1}{2}}\bigg(
	\frac{2\mathcal{B}_{|D|,\lambda _{1}}}{\sqrt{\lambda _{1}}}+1\bigg)^{2r-1}+
	\sqrt{3}\kappa ^{r-\frac{1}{2}}\big \|g_{\rho}\big \|_{\rho}\Bigg)
	\Bigg].
	\label{yin}
\end{eqnarray}
Also, we have already obtained {\eqref{1main}}, noting that
$\big \|f_{\widehat{D},\lambda _{1},\lambda _{2}}-f_{\rho}\big \|_{
	\rho}\leq \big \|f_{\widehat{D},\lambda _{1},\lambda _{2}}-f_{D,
	\lambda _{1},\lambda _{2}}\big \|_{\rho}+\big \|f_{D,\lambda _{1},
	\lambda _{2}}-f_{\rho}\big \|_{\rho}$, we complete the proof.
\end{proof}

\begin{proof}[Proof of {Theorem~\ref{thm2}}]
Considering the selection of
\begin{equation*}
	\lambda _{1}=|D|^{-\frac{1}{2r+\beta }}, \lambda _{2}=
	\frac{1}{2c_{V}}|D|^{-\frac{2r}{2r+\beta }}, d_{1}=d_{2}=\cdots =d_{|D|}=|D|^{
		\frac{1+2r}{\alpha (2r+\beta )}},
\end{equation*}
and the capacity condition
$\mathcal{N}(\lambda _{1})\leq \mathcal{C}_{0}\lambda _{1}^{-\beta}$,
$\beta \in (0,1]$, we can obtain the following bounds hold,
%
\begin{equation}
	\frac{1}{|D|}\sum _{i=1}^{|D|}
	\frac{1}{\lambda _{1}^{\frac{1}{2}}d_{i}^{\frac{\alpha }{2}}}=|D|^{-
		\frac{\alpha }{2}\frac{1+2r}{\alpha (2r+\beta )}}|D|^{
		\frac{1}{2(2r+\beta )}}=|D|^{-\frac{r}{2r+\beta }},
	\label{e1}
\end{equation}
%
\begin{equation}
	\frac{1}{|D|}\sum _{i=1}^{|D|}
	\frac{1}{\lambda _{1}d_{i}^{\frac{\alpha }{2}}}=|D|^{
		\frac{1}{2r+\beta }}|D|^{-\frac{1+2r}{2(2r+\beta )}}=|D|^{
		\frac{2-1-2r}{2(2r+\beta )}}=|D|^{\frac{1-2r}{2(2r+\beta )}} \leq 1\
	\ (r\in [1/2,1]),
	\label{e2}
\end{equation}
%
\begin{equation}
	\nonumber
	\mathcal{B}_{|D|,\lambda _{1}}=\frac{2\kappa }{\sqrt{|D|}}\bigg(
	\frac{\kappa }{\sqrt{|D|\lambda _{1}}}+\sqrt{\mathcal{N}(\lambda _{1})}
	\bigg)\leq 2\kappa (\kappa +\sqrt{\mathcal{C}_{0}})|D|^{-
		\frac{r}{2r+\beta }},
\end{equation}
%
\begin{equation}
	\nonumber
	\frac{\mathcal{B}_{|D|,\lambda _{1}}}{\sqrt{\lambda _{1}}}\leq 2
	\kappa (\kappa +\sqrt{\mathcal{C}_{0}}),
\end{equation}
%
\begin{equation}
	\nonumber
	\frac{2\mathcal{B}_{|D|,\lambda _{1}}}{\sqrt{\lambda _{1}}}+1\leq 4
	\kappa (\kappa +\sqrt{\mathcal{C}_{0}})+1,
\end{equation}
%
\begin{equation}
	\nonumber
	(\lambda _{1}+\lambda _{2}c_{V})\lambda _{1}^{r-\frac{3}{2}}\leq (
	\lambda _{1}+\frac{\lambda _{1}^{2r}}{2})\lambda _{1}^{r-\frac{3}{2}}
	\leq \big(\frac{3}{2}\big)\lambda _{1}^{r-\frac{1}{2}}\leq
	\frac{3}{2},
\end{equation}
%
\begin{equation}
	\nonumber
	\lambda _{1}^{r}+\lambda _{1}^{r-1}\lambda _{2}c_{V}=\lambda _{1}^{r}+
	\frac{1}{2}\lambda _{1}^{3r-1}\leq \frac{3}{2}\lambda _{1}^{r}=
	\frac{3}{2}|D|^{-\frac{r}{2r+\beta }}.
\end{equation}
Substitute these bounds into {Theorem~\ref{thm1}}, we arrive at the following
bound
\begin{eqnarray}
	\nonumber
	&&E\bigg[\big\|f_{\widehat{D},\lambda _{1},\lambda _{2}}-f_{\rho}
	\big\|_{\rho}\bigg]
	\\
	\nonumber
	&&\leq \Big(2\sqrt{2}(2+\sqrt{\pi})^{\frac{1}{2}}\kappa L2^{
		\frac{\alpha +2}{2}}B_{\widetilde{K}}^{\frac{\alpha }{2}}+2\Big)(2+
	\sqrt{\pi})^{\frac{1}{2}}LM2^{\frac{\alpha }{2}}B_{\widetilde{K}}^{
		\frac{\alpha }{2}}(2\Gamma (3)+\log ^{2}2)\times \Big(4\kappa (
	\kappa +\sqrt{\mathcal{C}_{0}})+1\Big)^{2}|D|^{-\frac{r}{2r+\beta }}
	\\
	\nonumber
	&&+\Big(2\sqrt{2}(2+\sqrt{\pi})^{\frac{1}{2}}\kappa L2^{\frac{\alpha +2}{2}}B_{
		\widetilde{K}}^{\frac{\alpha }{2}}+2\Big)(2+\sqrt{\pi})^{\frac{1}{2}}
	\kappa L2^{\frac{\alpha +2}{2}}B_{\widetilde{K}}^{\frac{\alpha }{2}}
	\bigg[2\sqrt{6}(2\Gamma (5)+\log ^{4}2)^{\frac{1}{2}}M(2\kappa (
	\kappa +\sqrt{\mathcal{C}_{0}}))\times
	\\
	\nonumber
	&&\Big(4\kappa (\kappa +\sqrt{\mathcal{C}_{0}})+1\Big)+
	\frac{3\sqrt{3}}{2}2^{r-\frac{1}{2}}\|g_{\rho}\|_{\rho}(2\Gamma (4r-1)+
	\log ^{4r-1}2)^{\frac{1}{2}}\Big(4\kappa (\kappa +\sqrt{\mathcal{C}_{0}})+1
	\Big)^{2r-1}+\sqrt{3}\kappa ^{r-\frac{1}{2}}\|g_{\rho}\|_{\rho}\bigg]
	\times
	\\
	\nonumber
	&&(2\Gamma (5)+\log ^{4}2)^{\frac{1}{2}}\Big(4\kappa (\kappa +\sqrt{
		\mathcal{C}_{0}})+1\Big)^{2}|D|^{-\frac{r}{2r+\beta }}
	\\
	\nonumber
	&&+4(2\Gamma (3)+\log ^{2}2)^{\frac{1}{2}}(2\Gamma (5)+\log ^{4}2)
	\Big(4\kappa (\kappa +\sqrt{\mathcal{C}_{0}})+1\Big)^{2}
	\frac{M}{\kappa }2\kappa (\kappa +\sqrt{\mathcal{C}_{0}})|D|^{-
		\frac{r}{2r+\beta }}
	\\
	\nonumber
	&&+(2\Gamma (2r+1)+\log ^{2r}2)2^{r}\|g_{\rho}\|_{\rho}\Big(4\kappa (
	\kappa +\sqrt{\mathcal{C}_{0}})+1\Big)^{2r}\frac{3}{2}|D|^{-
		\frac{r}{2r+\beta }},
\end{eqnarray}
which directly implies the desired learning rate and completes the proof.
\end{proof}

\subsection{Analysis of the error bounds and rates when $r\in (0,1/2)$}
\label{nonstandardcase}

Now we turn to prove the result in the nonstandard setting
$r\in (0,1/2)$ that corresponds to $f_{\rho}\notin \mathcal{H}_{K}$. In
this section, we need the following norms for subsequent estimates
%
\begin{eqnarray}
&&\Pi _{D,\lambda _{1}}=\big \|(\lambda _{1}I+L_{K})^{-\frac{1}{2}}(S_{D}^{T}y_{D}-L_{K}f_{
	\rho})\big \|_{K},\\
\label{pldef}
&&\Xi _{D,\lambda _{1}}=\big \|(\lambda _{1}I+L_{K})^{-\frac{1}{2}}(L_{K}-L_{K,D})
\big \|.
\label{fldef}
\end{eqnarray}
For $D$ i.i.d. drawn from $\rho $, and $|y|\leq M$ almost surely and for
any $0<\delta <1$, \cite{s1} and \cite{sbsecond} imply that, with probability
at least $1-\delta $, there holds,
\begin{eqnarray}
\nonumber
&&\Pi _{D,\lambda _{1}}\leq 2M(\kappa +1)\mathcal{B}_{|D|,\lambda _{1}}'
\log \frac{2}{\delta },
\\
\nonumber
&&\Xi _{D,\lambda _{1}}\leq 2(\kappa ^{2}+\kappa )\mathcal{B}_{|D|,
	\lambda _{1}}'\log \frac{2}{\delta }.
\end{eqnarray}
Then, same procedures with the proof of {Lemma~\ref{le2}} imply the following
results.
%
\begin{lem}%
\label{belem}
Let the sample set $D$ be drawn independently according to probability
measure $\rho $. If $|y|\leq M$ almost surely, then we have,
\begin{eqnarray}
	\nonumber
	&&E_{\mathbf{z}^{|D|}}\Big[\Pi _{D,\lambda _{1}}^{s}\Big]\leq (2
	\Gamma (s+1)+\log ^{s}2)\Big(2M(\kappa +1)\mathcal{B}_{|D|,\lambda _{1}}'
	\Big)^{s},s\geq 1;
	\\
	\nonumber
	&&E_{\mathbf{z}^{|D|}}\Big[\Xi _{D,\lambda _{1}}^{s}\Big]\leq (2
	\Gamma (s+1)+\log ^{s}2)\Big(2(\kappa ^{2}+\kappa )\mathcal{B}_{|D|,
		\lambda _{1}}'\Big)^{s},s\geq 1,
\end{eqnarray}
in which $\mathcal{B}_{|D|,\lambda _{1}}'$ is defined as in {\eqref{bp}}.
\end{lem}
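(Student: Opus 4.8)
The plan is to repeat, essentially verbatim, the tail-integral argument used for Lemma~\ref{le2}, now fed with the two concentration inequalities (from \cite{s1} and \cite{sbsecond}) stated just above: for $D$ drawn i.i.d.\ from $\rho$ with $|y|\le M$ almost surely, one has for every $\delta\in(0,1)$, with probability at least $1-\delta$,
\[
\pl\le 2M(\kk+1)\b_{|D|,\la_1}'\log\f{2}{\delta},\qquad
\fl\le 2(\kk^2+\kk)\b_{|D|,\la_1}'\log\f{2}{\delta}.
\]
Writing $c_\Pi=2M(\kk+1)\b_{|D|,\la_1}'$ and $c_\Xi=2(\kk^2+\kk)\b_{|D|,\la_1}'$, it suffices to prove the following elementary fact: if $\xi\ge0$ is a random variable with $\mathrm{Prob}(\xi>c\log\f{2}{\delta})\le\delta$ for all $\delta\in(0,1)$ and some $c>0$, then $E[\xi^s]\le(2\Gamma(s+1)+\log^s2)c^s$ for every $s\ge1$; applying it with $(\xi,c)=(\pl,c_\Pi)$ and $(\xi,c)=(\fl,c_\Xi)$ gives the two claimed moment bounds.

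To prove the elementary fact I would set $\eta=\xi^s$ and use $E[\eta]=\int_0^\infty\mathrm{Prob}(\eta>t)\,dt$, splitting the integral at $t_0=c^s\log^s2$. On $[0,t_0]$ the integrand is at most $1$, contributing at most $c^s\log^s2$. For $t>t_0$ I solve $c\log\f{2}{\delta}=t^{1/s}$, i.e.\ $\delta=2e^{-t^{1/s}/c}$, which lies in $(0,1)$ precisely when $t>t_0$; hence $\mathrm{Prob}(\eta>t)=\mathrm{Prob}(\xi>t^{1/s})\le2e^{-t^{1/s}/c}$. The change of variables $x=t^{1/s}/c$ turns $\int_{t_0}^\infty 2e^{-t^{1/s}/c}\,dt$ into $2sc^s\int_{\log2}^\infty e^{-x}x^{s-1}\,dx\le 2sc^s\Gamma(s)=2\Gamma(s+1)c^s$, and summing the two contributions yields $E[\xi^s]\le(2\Gamma(s+1)+\log^s2)c^s$. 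Substituting $c_\Pi$ and $c_\Xi$ then completes the proof.

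There is no genuine obstacle here---the work is purely bookkeeping identical to that in Lemma~\ref{le2}---but two small points deserve care. First, the map $t\mapsto\delta=2e^{-t^{1/s}/c}$ is a bijection onto $(0,1)$ only for $t>t_0$, which is exactly why the split at $t_0$ is forced; for smaller $t$ one simply uses the trivial bound $\mathrm{Prob}(\cdot)\le1$. Second, enlarging the lower limit of the Gamma integral from $\log2$ to $0$ is legitimate since $e^{-x}x^{s-1}\ge0$, and $\Gamma(s)$ is finite for $s\ge1$ (indeed for any $s>0$); the restriction $s\ge1$ is only what is needed in the later applications of the lemma.
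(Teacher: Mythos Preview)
Your proposal is correct and is exactly the approach the paper takes: the paper's proof of this lemma is simply the sentence ``same procedure with the proof of Lemma~\ref{le2} implies following results,'' and your write-up spells out precisely that procedure applied to the two concentration bounds for $\pl$ and $\fl$. Nothing further is needed.
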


Now we prepare to prove the main results when $f_{\rho}$ {does not} lie in
$\mathcal{H}_{K}$. We use the following function
%
\begin{equation}
f_{\lambda _{1}}=\arg \min _{f\in \mathcal{H}_{K}}\{\|f-f_{\rho}\|_{L_{
		\rho _{X_{\mu}}}}^{2}+\lambda _{1}\|f\|_{K}^{2}\}
\label{eq4.26}
\end{equation}
which lies in $\mathcal{H}_{K}$ as a bridge to perform further operator
analysis. $f_{\lambda _{1}}$ in fact has the operator representation
$f_{\lambda _{1}}=(L_{K}+\lambda _{1}I)^{-1}L_{K}f_{\rho}$. We derive the
following estimate.
%
\begin{pro}%
\label{rx}
Assume that $|y|\leq M$ and {\eqref{L}} hold almost surely. Suppose the parameters
$\lambda _{1}, \lambda _{2}\in (0,1)$ satisfy
$2c_{V}\lambda _{2}=\lambda _{1}$. Then there holds almost surely
%
\begin{equation}
	\nonumber
	\max \bigg\{\big \|f_{D,\lambda _{1},\lambda _{2}}-f_{\lambda _{1}}
	\big \|_{\rho},\sqrt{\lambda _{1}}\big \|f_{D,\lambda _{1},\lambda _{2}}-f_{
		\lambda _{1}}\big \|_{K}\bigg\}\leq 2\mathcal{A}_{D,\lambda _{1},
		\lambda _{2},V_{D}}\Pi _{D,\lambda _{1}}+4\Xi _{D,\lambda _{1}}
	\mathcal{A}_{D,\lambda _{1},\lambda _{2},V_{D}}\big \|f_{\lambda _{1}}
	\big \|_{K}+\frac{2\lambda _{2}c_{V}}{\sqrt{\lambda _{1}}}\big \|f_{
		\lambda _{1}}\big \|_{K}.
\end{equation}
\end{pro}
\begin{proof}
Start from the following decomposition
\begin{eqnarray}
	\nonumber
	&&f_{D,\lambda _{1},\lambda _{2}}-f_{\lambda _{1}}=(L_{K,D}+\lambda _{1}I+
	\lambda _{2}V_{D}^{T}V_{D})^{-1}S_{D}^{T}y_{D}-(L_{K}+\lambda _{1}I)^{-1}L_{K}f_{
		\rho}
	\\
	\nonumber
	&&\quad \quad \quad \quad \quad \quad \ =(L_{K,D}+\lambda _{1}I+
	\lambda _{2}V_{D}^{T}V_{D})^{-1}(S_{D}^{T}y_{D}-L_{K}f_{\rho})
	\\
	\nonumber
	&&\quad \quad \quad \quad \quad \quad \ \ \ +\Big[(L_{K,D}+\lambda _{1}I+
	\lambda _{2}V_{D}^{T}V_{D})^{-1}-(L_{K}+\lambda _{1}I)^{-1}\Big]L_{K}f_{
		\rho}
	\\
	\nonumber
	&&\quad \quad \quad \quad \quad \quad \ =(L_{K,D}+\lambda _{1}I+
	\lambda _{2}V_{D}^{T}V_{D})^{-1}(S_{D}^{T}y_{D}-L_{K}f_{\rho})
	\\
	\nonumber
	&&\quad \quad \quad \quad \quad \quad \ \ \ +\Big[(L_{K,D}+\lambda _{1}I+
	\lambda _{2}V_{D}^{T}V_{D})^{-1}-(L_{K}+\lambda _{1}I+\lambda _{2}V_{D}^{T}V_{D})^{-1}
	\Big]L_{K}f_{\rho}
	\\
	\nonumber
	&&\quad \quad \quad \quad \quad \quad \ \ \ +\Big[(L_{K}+\lambda _{1}I+
	\lambda _{2}V_{D}^{T}V_{D})^{-1}-(L_{K}+\lambda _{1}I)^{-1}\Big]L_{K}f_{
		\rho}.
\end{eqnarray}
Then use the fact $\|h\|_{\rho}=\|L_{K}^{1/2}h\|_{K}$ for any
$h\in L_{\rho _{X_{\mu}}}^{2}$, we have
\begin{eqnarray}
	\nonumber
	&&\max \big\{\big \|f_{D,\lambda _{1},\lambda _{2}}-f_{\lambda _{1}}
	\big \|_{\rho},\sqrt{\lambda _{1}}\big \|f_{D,\lambda _{1},\lambda _{2}}-f_{
		\lambda _{1}}\big \|_{K}\big\}
	\\
	\nonumber
	&&\leq \Big\|(L_{K}+\lambda _{1}I)^{\frac{1}{2}}(L_{K,D}+\lambda _{1}I+
	\lambda _{2}V_{D}^{T}V_{D})^{-1}(S_{D}^{T}y_{D}-L_{K}f_{\rho})\Big\|_{K}
	\\
	\nonumber
	&&\ \ +\Big\|(L_{K}+\lambda _{1}I)^{\frac{1}{2}}(L_{K}+\lambda _{1}I+
	\lambda _{2}V_{D}^{T}V_{D})^{-1}(L_{K}-L_{K,D})(L_{K,D}+\lambda _{1}I+
	\lambda _{2}V_{D}^{T}V_{D})^{-1}L_{K}f_{\rho}\Big\|_{K}
	\\
	\nonumber
	&&\ \ +\Big\|(L_{K}+\lambda _{1}I)^{\frac{1}{2}}(L_{K}+\lambda _{1}I)^{-1}
	\lambda _{2}V_{D}^{T}V_{D}(L_{K}+\lambda _{1}I+\lambda _{2}V_{D}^{T}V_{D})^{-1}L_{K}f_{
		\rho}\Big\|_{K}
	\\
	\nonumber
	&&:=\widetilde{\mathcal{T}}_{1}+\widetilde{\mathcal{T}}_{2}+
	\widetilde{\mathcal{T}}_{3}.
\end{eqnarray}
We estimate $\widetilde{\mathcal{T}}_{1}$,
$\widetilde{\mathcal{T}}_{2}$, $\widetilde{\mathcal{T}}_{3}$ as follows
Decompose the operators in $\widetilde{\mathcal{T}}_{1}$, we have
\begin{eqnarray}
	\nonumber
	&&\widetilde{\mathcal{T}}_{1}=\Big\|(L_{K}+\lambda _{1}I)^{
		\frac{1}{2}}(L_{K}+\lambda _{1}I+\lambda _{2}V_{D}^{T}V_{D})^{-
		\frac{1}{2}}(L_{K}+\lambda _{1}I+\lambda _{2}V_{D}^{T}V_{D})^{
		\frac{1}{2}}
	\\
	\nonumber
	&&\ \ \ \ \ \ (L_{K,D}+\lambda _{1}I+\lambda _{2}V_{D}^{T}V_{D})^{-
		\frac{1}{2}}(L_{K,D}+\lambda _{1}I+\lambda _{2}V_{D}^{T}V_{D})^{-
		\frac{1}{2}}(L_{K}+\lambda _{1}I+\lambda _{2}V_{D}^{T}V_{D})^{
		\frac{1}{2}}
	\\
	\nonumber
	&&\ \ \ \ \ \ (L_{K}+\lambda _{1}I+\lambda _{2}V_{D}^{T}V_{D})^{-
		\frac{1}{2}}(L_{K}+\lambda _{1}I)^{\frac{1}{2}}(L_{K}+\lambda _{1}I)^{-
		\frac{1}{2}}(S_{D}^Ty_{D}-L_{K}f_{\rho})\Big\|_{K}
	\\
	\nonumber
	&&\ \ \ \leq \Big\|(L_{K}+\lambda _{1}I)^{\frac{1}{2}}(L_{K}+\lambda _{1}I+
	\lambda _{2}V_{D}^{T}V_{D})^{-\frac{1}{2}}\Big\|\Big\|(L_{K}+\lambda _{1}I+
	\lambda _{2}V_{D}^{T}V_{D})^{\frac{1}{2}}
	\\
	\nonumber
	&&\ \ \ \ \ \ (L_{K,D}+\lambda _{1}I+\lambda _{2}V_{D}^{T}V_{D})^{-
		\frac{1}{2}}\Big\|\Big\|(L_{K,D}+\lambda _{1}I+\lambda _{2}V_{D}^{T}V_{D})^{-
		\frac{1}{2}}(L_{K}+\lambda _{1}I+\lambda _{2}V_{D}^{T}V_{D})^{
		\frac{1}{2}}\Big\|
	\\
	\nonumber
	&&\ \ \ \ \ \ \Big\|(L_{K}+\lambda _{1}I+\lambda _{2}V_{D}^{T}V_{D})^{-
		\frac{1}{2}}(L_{K}+\lambda _{1}I)^{\frac{1}{2}}\Big\|\Big\|(L_{K}+
	\lambda _{1}I)^{-\frac{1}{2}}(S_{D}^Ty_{D}-L_{K}f_{\rho})\Big\|_{K}
	\\
	\nonumber
	&&\ \ \ \leq 2^{\frac{1}{2}}\mathcal{A}_{D,\lambda _{1},\lambda _{2},V_{D}}
	\cdot 2^{\frac{1}{2}}\Pi _{D,\lambda _{1}}=2\mathcal{A}_{D,\lambda _{1},
		\lambda _{2},V_{D}}\Pi _{D,\lambda _{1}}.
\end{eqnarray}
$\widetilde{\mathcal{T}}_{2}$ is estimated as follows,
\begin{eqnarray}
	\nonumber
	&&\widetilde{\mathcal{T}}_{2}\leq \Big \|(L_{K}+\lambda _{1}I)^{
		\frac{1}{2}}(L_{K}+\lambda _{1}I+\lambda _{2}V_{D}^{T}V_{D})^{-
		\frac{1}{2}}\Big \|\Big \|(L_{K}+\lambda _{1}I+\lambda _{2}V_{D}^{T}V_{D})^{-
		\frac{1}{2}}(L_{K}+\lambda _{1}I)^{\frac{1}{2}}\Big \|
	\\
	\nonumber
	&&\ \ \ \  \ \Big \|(L_{K}+\lambda _{1}I)^{-\frac{1}{2}}(L_{K}-L_{K,D})
	\Big \|\Big \|(L_{K,D}+\lambda _{1}I+\lambda _{2}V_{D}^{T}V_{D})^{-1}(L_{K}+
	\lambda _{1}I+\lambda _{2}V_{D}^{T}V_{D})\Big \|
	\\
	\nonumber
	&&\ \ \ \ \ \Big \|(L_{K}+\lambda _{1}I+\lambda _{2}V_{D}^{T}V_{D})^{-1}(L_{K}+
	\lambda _{1}I)\Big \|\Big \|(L_{K}+\lambda _{1}I)^{-1}L_{K}f_{\rho}
	\Big \|_{K}
	\\
	\nonumber
	&&\ \ \ \leq 2^{\frac{1}{2}}\cdot 2^{\frac{1}{2}}\cdot 2\Xi _{D,
		\lambda _{1}}\mathcal{A}_{D,\lambda _{1},\lambda _{2},V_{D}}\big \|f_{
		\lambda _{1}}\big \|_{K}=4\Xi _{D,\lambda _{1}}\mathcal{A}_{D,
		\lambda _{1},\lambda _{2},V_{D}}\big \|f_{\lambda _{1}}\big \|_{K}.
\end{eqnarray}
$\widetilde{\mathcal{T}}_{3}$ is estimated {as follows,}
\begin{eqnarray}
	\nonumber
	&&\widetilde{\mathcal{T}}_{3}\leq \Big \|(L_{K}+\lambda _{1}I)^{
		\frac{1}{2}}(L_{K}+\lambda _{1}I)^{-\frac{1}{2}}(L_{K}+\lambda _{1}I)^{-
		\frac{1}{2}}\lambda _{2}V_{D}^{T}V_{D}
	\\
	\nonumber
	&&\ \ \ \ \ \ (L_{K}+\lambda _{1}I+\lambda _{2}V_{D}^{T}V_{D})^{-1}(L_{K}+
	\lambda _{1}I)(L_{K}+\lambda _{1}I)^{-1}L_{K}f_{\rho}\Big \|_{K}
	\\
	\nonumber
	&&\ \ \ \leq \big \|(L_{K}+\lambda _{1}I)^{-\frac{1}{2}}\big \|
	\lambda _{2}c_{V}\big \|(L_{K}+\lambda _{1}I+\lambda _{2}V_{D}^{T}V_{D})^{-1}(L_{K}+
	\lambda _{1}I)\big \|\big \|(L_{K}+\lambda _{1}I)^{-1}L_{K}f_{\rho}
	\big \|_{K}
	\\
	\nonumber
	&&\ \ \ \leq \lambda _{2}c_{V}\frac{1}{\sqrt{\lambda _{1}}}2\big \|f_{
		\lambda _{1}}\big \|_{K}.
\end{eqnarray}
The desired result is obtained after combining the above three estimates
for $\widetilde{\mathcal{T}}_{1}$, $\widetilde{\mathcal{T}}_{2}$,
$\widetilde{\mathcal{T}}_{3}$.
\end{proof}

\begin{rmk}%
\label{decomposition_discuss}
Under the regularity condition $f_{\rho}=L_{K}^{r}(g_{\rho})$, $r\in (0,1/2)$ for some $g_{\rho}\in L_{\rho _{X_{\mu}}}^{2}$, the main difference with case $r\in [1/2,1]$ is that the regression function $f_{\rho}$ no longer lies in $\mathcal{H}_{K}$ any more. {Proposition~\ref{rx}} is derived to overcome the difficulties. It plays an important role in the following proofs. In fact, when $r\in (0,1/2)$, the previous estimates in {Proposition~\ref{ro1}} for $\left \|f_{D,\lambda _{1},\lambda _{2}}-f_{\rho}\right \|_{\rho}$ will fail for further deriving learning rates of $\|f_{\widehat{D},\lambda _{1},\lambda _{2}}-f_{\rho}\|_{\rho}$. The main gap and difficulty arise in previous second-term $\mathcal{T}_{2}$ and third-term $\mathcal{T}_{3}$ estimates. When $r\in (0,1/2)$, if we continue handling $\left \|f_{D,\lambda _{1},\lambda _{2}}-f_{\rho}\right \|_{\rho}$ and decomposing term $\mathcal{T}_{2}$ as that in the proof of {Proposition~\ref{ro1}}, it can be observed that the operator norms
\begin{equation*}\big \|(\lambda _{1}I+L_{K,D}+\lambda _{2}V_{D}^{T}V_{D})^{-r+\frac{1}{2}}(\lambda _{1}I+L_{K}+\lambda _{2}V_{D}^{T}V_{D})^{r-\frac{1}{2}}\big \|,\end{equation*}
\begin{equation*}\big \|(\lambda _{1}I+L_{K}+\lambda _{2}V_{D}^{T}V_{D})^{-r+\frac{1}{2}}(\lambda _{1}I+L_{K})^{r-\frac{1}{2}}\big \|,\end{equation*}
and
\begin{equation*}
	\big \|(\lambda _{1}I+L_{K})^{-r+\frac{1}{2}}L_{K}^{r-\frac{1}{2}}\big \|
\end{equation*}
do not possess effective upper bounds for further estimates. Hence, they can
not be used to derive learning rates of $\|f_{\widehat{D},\lambda
	_{1},\lambda _{2}}-f_{\rho}\|_{\rho}$ in the hard learning scenario where
$r\in (0,1/2)$. The previous work in \cite{fang} also suffered from similar
difficulty, and hence only learning rates related to standard regularity
index $r\in [1/2,1]$ can be derived. Till now, we have described one of the
main reasons why previous analysis fails in the hard learning scenario. To
overcome the difficulties, in {Proposition~\ref{rx}}, we introduce data-free
representation $f_{\lambda _{1}}$ and make estimates of the RKHS norm and
$L_{\rho _{X_{\mu}}}^{2}$-norm of $f_{D,\lambda _{1},\lambda _{2}}-f_{\lambda
	_{1}}$. Due to the nice representation of $f_{\lambda _{1}}$, the
decomposition for $\widetilde{\mathcal{T}}_{1}$,
$\widetilde{\mathcal{T}}_{2}$ and $\widetilde{\mathcal{T}}_{3}$ successfully
excludes the interference of the regularity index of $r$ in the operator
decomposition process. After using the operator decomposition for
$\widetilde{\mathcal{T}}_{1}$, $\widetilde{\mathcal{T}}_{2}$ and
$\widetilde{\mathcal{T}}_{3}$, the corresponding upper bounds have
essentially changed. We finally derive a mild bound in {Proposition~\ref{rx}},
which can be used to estimate learning rates further. The upper bound in
{Proposition~\ref{rx}} also influences the error bounds for the second-stage
estimate of $f_{\widehat{D},\lambda _{1},\lambda _{2}}-f_{D,\lambda
	_{1},\lambda _{2}}$, this fact can be witnessed throughout the rest of this
section (see {Proposition~\ref{rfd}} and the proof of {Theorem~\ref{thm3}}).
Furthermore, in contrast to previous work, to capture more potential features
of regularized distribution regression scheme, we have proposed a novel
multi-penalty distribution regression scheme and considered an additional
penalty induced by the operator $V_{D}$ that previous works on distribution
regression have not explored yet. The participation of the operator $V_{D}$
also increases the difficulty of the operator analysis process in the
estimates mentioned above compared with previous works \cite{fang},
\cite{only}. Compared with previous analyses in works \cite{fang},
\cite{only}, we have already introduced two new crucial norms $\mathcal{A}_{D,\lambda _{1},\lambda _{2},V_{D}}=\big\|(\lambda
_{1}I+L_{K}+\lambda _{2}V_{D}^{T}V_{D})(\lambda _{1}I+L_{K,D}+\lambda
_{2}V_{D}^{T}V_{D})^{-1}\big\|$ and $\Omega _{D,\lambda _{1},\lambda _{2},V_{D}}=\big \|L_{K}^{1/2}
(L_{K,\widehat{D}}+\lambda _{1}I+\lambda _{2}V_{D}^{T}V_{D})^{-1}\big \|$
to overcome the difficulties arising from the operator $V_{D}$. From the
analysis above, the two quantities successfully realize effective bounds for
operator decompositions in our new multi-penalty setting. Hence they overcame
the current complicated multi-penalty analysis environment and finally
produced satisfactory theoretical results on learning rates.
\end{rmk}
Based on the estimate in {Proposition~\ref{rx}}, the following expected error
bound is obtained.
%
\begin{pro}%
\label{r2b}
Assume that $|y|\leq M$ and {\eqref{L}} hold almost surely. Let the regularity
condition {\eqref{regularity}} hold with some index $0< r<1/2$. If
$\lambda _{1},\lambda _{2}\in (0,1)$ satisfy
$2c_{V}\lambda _{2}=\lambda _{1}$. Then we have,
%
\begin{eqnarray}
	\nonumber
	&&E_{\mathbf{z}^{|D|}}\bigg[\big \|f_{D,\lambda _{1},\lambda _{2}}-f_{
		\rho}\big \|_{\rho}\bigg]\leq 4(2\Gamma (5)+\log ^{4}2)^{\frac{1}{2}}(2
	\Gamma (3)+\log ^{2}2)^{\frac{1}{2}}\bigg(
	\frac{2\mathcal{B}_{|D|,\lambda _{1}}}{\sqrt{\lambda _{1}}}+1\bigg)^{2}
	\Big[M(\kappa +1)\mathcal{B}_{|D|,\lambda _{1}}'
	\\
	&&\ \ \ \ \ \ \ \ \ \ \ \ \ \ \ \ \ \ \ \ \ \ \ \ \ \ \ \ \ \ \ +\|g_{
		\rho}\|_{\rho}(\kappa ^{2}+\kappa )\lambda _{1}^{r-\frac{1}{2}}
	\mathcal{B}_{|D|,\lambda _{1}}'\Big]+2\|g_{\rho}\|_{\rho}c_{V}
	\lambda _{2}\lambda _{1}^{r-1}+\lambda _{1}^{r}\|g_{\rho}\|_{\rho}.
	\label{bdd3}
\end{eqnarray}
\end{pro}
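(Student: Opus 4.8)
The plan is to route the estimate through the deterministic regularized bridge $f_{\la_1}=(L_K+\la_1I)^{-1}L_Kf_\rho$ of \eqref{dfl}, which lies in $\huaH_K$ although $f_\rho$ does not, and to split
\[
\jj f_{D,\la_1,\la_2}-f_\rho\jj_\rho\le\jj f_{D,\la_1,\la_2}-f_{\la_1}\jj_\rho+\jj f_{\la_1}-f_\rho\jj_\rho,
\]
treating the two summands separately: the first is the genuinely random (sampling) part, the second a purely deterministic approximation error.

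For the approximation term, I would use the identity $f_{\la_1}-f_\rho=-\la_1(L_K+\la_1I)^{-1}f_\rho=-\la_1(L_K+\la_1I)^{-1}L_K^rg_\rho$ coming from the regularity condition \eqref{regularity}, together with the spectral-calculus inequality $\sup_{t\ge0}\la_1t^r/(t+\la_1)\le\la_1^r$ (valid for $0<r\le1$), to get $\jj f_{\la_1}-f_\rho\jj_\rho\le\la_1^r\jj g_\rho\jj_\rho$, which is exactly the last term of \eqref{bdd3}. The same spectral estimate, combined with the isometry $\jj h\jj_\rho=\jj L_K^{1/2}h\jj_K$ and the bound $\sup_{t\ge0}t^{r+1/2}/(t+\la_1)\le\la_1^{r-1/2}$ (using $r+\tfrac12\le1$), gives the auxiliary estimate $\jj f_{\la_1}\jj_K=\jj(L_K+\la_1I)^{-1}L_K^{r+1/2}g_\rho\jj_\rho\le\la_1^{r-1/2}\jj g_\rho\jj_\rho$, which I will substitute wherever $\jj f_{\la_1}\jj_K$ appears.

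For the sampling term, I would invoke Proposition \ref{rx} (its hypothesis $2c_V\la_2=\la_1$ is in force), which gives almost surely
\[
\jj f_{D,\la_1,\la_2}-f_{\la_1}\jj_\rho\le2\,\dd\,\pl+4\,\fl\,\dd\,\jj f_{\la_1}\jj_K+\f{2\la_2c_V}{\sqrt{\la_1}}\jj f_{\la_1}\jj_K,
\]
and then take $E_{\mathbf z^{|D|}}$. The two random products are split by the Cauchy--Schwarz inequality, with the deterministic factor $\jj f_{\la_1}\jj_K$ pulled out of the expectation; this reduces the task to inserting $\{E_{\mathbf z^{|D|}}[\dd^2]\}^{1/2}\le(2\Gamma(5)+\log^42)^{1/2}\big(\tfrac{2\b_{|D|,\la_1}}{\sqrt{\la_1}}+1\big)^2$ from \eqref{l21} in Lemma \ref{le2} with $s=2$ (applicable since $2c_V\la_2=\la_1=\la_1^{\max\{2r,1\}}$ for $r<\tfrac12$), together with $\{E_{\mathbf z^{|D|}}[\pl^2]\}^{1/2}\le(2\Gamma(3)+\log^22)^{1/2}\,2M(\kk+1)\b_{|D|,\la_1}'$ and $\{E_{\mathbf z^{|D|}}[\fl^2]\}^{1/2}\le(2\Gamma(3)+\log^22)^{1/2}\,2(\kk^2+\kk)\b_{|D|,\la_1}'$ from Lemma \ref{belem} with $s=2$, where $\b_{|D|,\la_1}'$ is as in \eqref{bp}. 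Using $\jj f_{\la_1}\jj_K\le\la_1^{r-1/2}\jj g_\rho\jj_\rho$ in the last summand turns $\f{2\la_2c_V}{\sqrt{\la_1}}\jj f_{\la_1}\jj_K$ into $2c_V\la_2\la_1^{r-1}\jj g_\rho\jj_\rho$. Assembling the three contributions and adding the approximation bound $\la_1^r\jj g_\rho\jj_\rho$ reproduces \eqref{bdd3}.

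The hard part is not in this proposition itself but has already been discharged in Proposition \ref{rx}: because $f_\rho\notin\huaH_K$ one cannot bound anything by $\jj f_\rho\jj_K$, so the whole argument must be anchored at the surrogate $f_{\la_1}$, and the delicate input is the almost-sure operator bound of Proposition \ref{rx}, which relies on the second-order decomposition \eqref{sd1}--\eqref{sd2} and the norm estimates built on Lemma \ref{guolem}. Given that bound and the moment estimates of Lemmas \ref{le2} and \ref{belem}, the remaining steps here are routine: Cauchy--Schwarz, substitution of the bound on $\jj f_{\la_1}\jj_K$, and collection of constants.
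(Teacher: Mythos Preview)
Your proposal is correct and follows essentially the same route as the paper: split through the bridge $f_{\la_1}$, bound the approximation part by $\la_1^r\|g_\rho\|_\rho$ via spectral calculus, invoke Proposition~\ref{rx} for the sampling part, then apply Cauchy--Schwarz together with Lemma~\ref{le2} (with $s=2$) and Lemma~\ref{belem} (with $s=2$), finally inserting $\|f_{\la_1}\|_K\le\la_1^{r-1/2}\|g_\rho\|_\rho$. One minor aside: Proposition~\ref{rx} in the paper is proved via a first-order resolvent identity rather than the second-order decomposition \eqref{sd1}--\eqref{sd2}, but since you only invoke the result this does not affect your argument.
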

\begin{proof}
Note that the following decomposition holds,
%
\begin{equation}
	E_{\mathbf{z}^{|D|}}\Big[\big \|f_{D,\lambda _{1},\lambda _{2}}-f_{
		\rho}\big \|_{\rho}\Big]\leq E_{\mathbf{z}^{|D|}}\Big[\big \|f_{D,
		\lambda _{1},\lambda _{2}}-f_{\lambda _{1}}\big \|_{\rho}\Big]+E_{
		\mathbf{z}^{|D|}}\Big[\big \|f_{\lambda _{1}}-f_{\rho}\big \|_{\rho}
	\Big].
	\label{eq4.28}
\end{equation}
The one stage estimate in \cite{s4} shows that
$\big \|f_{\lambda _{1}}-f_{\rho}\big \|_{\rho}\leq \lambda _{1}^{r}
\big \|g_{\rho}\big \|_{\rho}$. Hence, the second term is bounded {as follows,}
%
\begin{equation}
	E_{\mathbf{z}^{|D|}}\Big[\big \|f_{\lambda _{1}}-f_{\rho}\big \|_{
		\rho}\Big]\leq \lambda _{1}^{r}\big \|g_{\rho}\big \|_{\rho}.
	\label{eq4.29}
\end{equation}
We estimate the first term by taking expectations on both sides of {Proposition~\ref{rx}}. Since
\begin{equation*}
	\big \|f_{\lambda _{1}}\big \|_{K}=\big \|(\lambda _{1}I+L_{K})^{-1}L_{K}f_{
		\rho}\big \|_{K}\leq \big \|(\lambda _{1}I+L_{K})^{-1}L_{K}^{r+
		\frac{1}{2}}\big \|\big \|L_{K}^{1/2}g_{\rho}\big \|_{K}\leq \lambda _{1}^{r-
		\frac{1}{2}}\big \|g_{\rho}\big \|_{\rho},
\end{equation*}
by using Schwarz inequality and using {Lemma~\ref{belem}} with $s=2$, we
have
\begin{eqnarray*}
	\begin{aligned}
		E_{\mathbf{z}^{|D|}}\Big[\big \|f_{D,\lambda _{1},\lambda _{2}}-f_{
			\lambda _{1}}\big \|_{\rho}\Big]\leq &2E_{\mathbf{z}^{|D|}}\Big[
		\mathcal{A}_{D,\lambda _{1},\lambda _{2},V_{D}}\Pi _{D,\lambda _{1}}
		\Big]+4E_{\mathbf{z}^{|D|}}\Big[\Xi _{D,\lambda _{1}}\mathcal{A}_{D,
			\lambda _{1},\lambda _{2},V_{D}}\Big]\big \|f_{\lambda _{1}}\big \|_{K}+
		\frac{2\lambda _{2}c_{V}}{\sqrt{\lambda _{1}}}\big \|f_{\lambda _{1}}
		\big \|_{K}
		\\
		\leq &2\Big\{E_{\mathbf{z}^{|D|}}\Big[\mathcal{A}_{D,\lambda _{1},
			\lambda _{2},V_{D}}^{2}\Big]\Big\}^{\frac{1}{2}}\Big\{E_{\mathbf{z}^{|D|}}
		\Big[\Pi _{D,\lambda _{1}}^{2}\Big]\Big\}^{\frac{1}{2}}
		\\
		&+4\lambda _{1}^{r-\frac{1}{2}}\big \|g_{\rho}\big \|_{\rho}\Big\{E_{
			\mathbf{z}^{|D|}}\Big[\Xi _{D,\lambda _{1}}^{2}\Big]\Big\}^{
			\frac{1}{2}}\Big\{E_{\mathbf{z}^{|D|}}\Big[\mathcal{A}_{D,\lambda _{1},
			\lambda _{2},V_{D}}^{2}\Big]\Big\}^{\frac{1}{2}}+
		\frac{2\lambda _{2}c_{V}}{\sqrt{\lambda _{1}}}\lambda _{1}^{r-
			\frac{1}{2}}\big \|g_{\rho}\big \|_{\rho}
	\end{aligned}
\end{eqnarray*}
which can be further bounded by
\begin{eqnarray*}
	&& 2(2\Gamma (5)+\log ^{4}2)^{\frac{1}{2}}\bigg(
	\frac{2\mathcal{B}_{|D|,\lambda _{1}}}{\sqrt{\lambda _{1}}}+1\bigg)^{2}(2
	\Gamma (3)+\log ^{2}2)^{\frac{1}{2}}2M(\kappa +1)\mathcal{B}_{|D|,
		\lambda _{1}}'
	\\
	&&+4\lambda _{1}^{r-\frac{1}{2}}\big \|g_{\rho}\big \|_{\rho}(2
	\Gamma (3)+\log ^{2}2)^{\frac{1}{2}}2(\kappa ^{2}+\kappa )\mathcal{B}_{|D|,
		\lambda _{1}}'(2\Gamma (5)+\log ^{4}2)^{\frac{1}{2}}\bigg(
	\frac{2\mathcal{B}_{|D|,\lambda _{1}}}{\sqrt{\lambda _{1}}}+1\bigg)^{2}+2c_{V}
	\lambda _{2}\lambda _{1}^{r-1}\big \|g_{\rho}\big \|_{\rho}.
\end{eqnarray*}
The desired result is obtained by combining these estimates.
\end{proof}
For the case $f_{\rho}\notin \mathcal{H}_{K}$, we need the following new
estimate for
$\Big\{E_{\mathbf{z}^{|D|}}\Big[\big \|f_{D,\lambda _{1},\lambda _{2}}
\big \|_{K}^{2}\Big]\Big\}^{\frac{1}{2}}$. The approach on deriving an
upper bound of
$\Big\{E_{\mathbf{z}^{|D|}}\Big[\big \|f_{D,\lambda _{1},\lambda _{2}}
\big \|_{K}^{2}\Big]\Big\}^{\frac{1}{2}}$ relies on our new estimate in
{Proposition~\ref{rx}}.
%
\begin{pro}%
\label{rfd}
Suppose $|y|\leq M$ almost surely. Let the regularity condition {\eqref{regularity}} hold with some index $r\in (0,1/2)$. If
$\lambda _{1},\lambda _{2}\in (0,1)$ satisfy
$2\lambda _{2}c_{V}=\lambda _{1}$, then we have
\begin{eqnarray}
	\nonumber
	&&\Big\{E_{\mathbf{z}^{|D|}}\Big[\big \|f_{D,\lambda _{1},\lambda _{2}}
	\big \|_{K}^{2}\Big]\Big\}^{\frac{1}{2}}\leq 2\sqrt{3}(2\Gamma (9)+
	\log ^{8}2)^{\frac{1}{4}}(2\Gamma (5)+\log ^{4}2)^{\frac{1}{4}}\bigg(
	\frac{2\mathcal{B}_{|D|,\lambda _{1}}}{\sqrt{\lambda _{1}}}+1\bigg)^{2}
	\times
	\\
	\nonumber
	&&\times \Big[2M(\kappa +1)\frac{1}{\sqrt{\lambda _{1}}}\mathcal{B}_{|D|,
		\lambda _{1}}'+2(\kappa ^{2}+\kappa )\big \|g_{\rho}\big \|_{\rho}
	\lambda _{1}^{r-1}\mathcal{B}_{|D|,\lambda _{1}}'\Big]+2\sqrt{3}
	\big \|g_{\rho}\big \|_{\rho}\lambda _{1}^{r-\frac{1}{2}}.
\end{eqnarray}
\end{pro}

\begin{proof}
We start with the decomposition
$\big \|f_{D,\lambda _{1},\lambda _{2}}\big \|_{K}\leq \big \|f_{D,
	\lambda _{1},\lambda _{2}}-f_{\lambda _{1}}\big \|_{K}+\big \|f_{
	\lambda _{1}}\big \|_{K}$. {Proposition~\ref{rx}} implies
%
\begin{equation}
	\nonumber
	\sqrt{\lambda _{1}}\big \|f_{D,\lambda _{1},\lambda _{2}}-f_{\lambda _{1}}
	\big \|_{K}\leq 2\mathcal{A}_{D,\lambda _{1},\lambda _{2},V_{D}}\Pi _{D,
		\lambda _{1}}+4\Xi _{D,\lambda _{1}}\mathcal{A}_{D,\lambda _{1},
		\lambda _{2},V_{D}}\big \|f_{\lambda _{1}}\big \|_{K}+
	\frac{2\lambda _{2}c_{V}}{\sqrt{\lambda _{1}}}\big \|f_{\lambda _{1}}
	\big \|_{K}.
\end{equation}
The above inequality together with the fact
$\big \|f_{\lambda _{1}}\big \|_{K}\leq \lambda _{1}^{r-\frac{1}{2}}
\big \|g_{\rho}\big \|_{\rho}$ imply that
%
\begin{equation}
	\nonumber
	\big \|f_{D,\lambda _{1},\lambda _{2}}-f_{\lambda _{1}}\big \|_{K}
	\leq 2\frac{1}{\sqrt{\lambda _{1}}}\mathcal{A}_{D,\lambda _{1},
		\lambda _{2},V_{D}}\Pi _{D,\lambda _{1}}+4\lambda _{1}^{-\frac{1}{2}}
	\lambda _{1}^{r-\frac{1}{2}}\big \|g_{\rho}\big \|_{\rho}\Xi _{D,
		\lambda _{1}}\mathcal{A}_{D,\lambda _{1},\lambda _{2},V_{D}}+2\big \|g_{
		\rho}\big \|_{\rho}c_{V}\lambda _{2}\lambda _{1}^{r-1-\frac{1}{2}}.
\end{equation}
Noting the condition $2\lambda _{2}c_{V}=\lambda _{1}$, we obtain
%
\begin{equation}
	\nonumber
	\big \|f_{D,\lambda _{1},\lambda _{2}}\big \|_{K}\leq
	\frac{2}{\sqrt{\lambda _{1}}}\mathcal{A}_{D,\lambda _{1},\lambda _{2},V_{D}}
	\Pi _{D,\lambda _{1}}+4\lambda _{1}^{r-1}\big \|g_{\rho}\big \|_{\rho}
	\Xi _{D,\lambda _{1}}\mathcal{A}_{D,\lambda _{1},\lambda _{2},V_{D}}+2
	\lambda _{1}^{r-\frac{1}{2}}\big \|g_{\rho}\big \|_{\rho}.
\end{equation}
The basic Jensen's inequality and Schwarz inequality implies that
\begin{eqnarray}
	\nonumber
	&&E_{\mathbf{z}^{|D|}}\Big[\big \|f_{D,\lambda _{1},\lambda _{2}}
	\big \|_{K}^{2}\Big]\leq \frac{12}{\lambda _{1}}\Big\{E_{\mathbf{z}^{|D|}}
	\Big[\mathcal{A}_{D,\lambda _{1},\lambda _{2},V_{D}}^{4}\Big]\Big\}^{
		\frac{1}{2}}\Big\{E_{\mathbf{z}^{|D|}}\Big[\Pi _{D,\lambda _{1}}^{4}
	\Big]\Big\}^{\frac{1}{2}}
	\\
	\nonumber
	&&+48\lambda _{1}^{2r-2}\big \|g_{\rho}\big \|_{\rho}^{2}\Big\{E_{
		\mathbf{z}^{|D|}}\Big[\Xi _{D,\lambda _{1}}^{4}\Big]\Big\}^{
		\frac{1}{2}}\Big\{E_{\mathbf{z}^{|D|}}\Big[\mathcal{A}_{D,\lambda _{1},
		\lambda _{2},V_{D}}^{4}\Big]\Big\}^{\frac{1}{2}}+12\lambda _{1}^{2r-1}
	\big \|g_{\rho}\big \|_{\rho}^{2}.
\end{eqnarray}
Using Jensen's inequality and {Lemma~\ref{belem}} with $s=4$, we have
\begin{eqnarray}
	\nonumber
	&&\Big\{E_{\mathbf{z}^{|D|}}\Big[\big \|f_{D,\lambda _{1},\lambda _{2}}
	\big \|_{K}^{2}\Big]\Big\}^{\frac{1}{2}}\leq
	\frac{2\sqrt{3}}{\sqrt{\lambda _{1}}}\Big\{E_{\mathbf{z}^{|D|}}\Big[
	\mathcal{A}_{D,\lambda _{1},\lambda _{2},V_{D}}^{4}\Big]\Big\}^{
		\frac{1}{4}}\Big\{E_{\mathbf{z}^{|D|}}\Big[\Pi _{D,\lambda _{1}}^{4}
	\Big]\Big\}^{\frac{1}{4}}
	\\
	\nonumber
	&&+2\sqrt{3}\lambda _{1}^{r-1}\big \|g_{\rho}\big \|_{\rho}\Big\{E_{
		\mathbf{z}^{|D|}}\Big[\Xi _{D,\lambda _{1}}^{4}\Big]\Big\}^{
		\frac{1}{4}}\Big\{E_{\mathbf{z}^{|D|}}\Big[\mathcal{A}_{D,\lambda _{1},
		\lambda _{2},V_{D}}^{4}\Big]\Big\}^{\frac{1}{4}}+2\sqrt{3}\lambda _{1}^{r-
		\frac{1}{2}}\big \|g_{\rho}\big \|_{\rho}
	\\
	\nonumber
	&&\leq 2\sqrt{3}(2\Gamma (9)+\log ^{8}2)^{\frac{1}{4}}\bigg(
	\frac{2\mathcal{B}_{|D|,\lambda _{1}}}{\sqrt{\lambda _{1}}}+1\bigg)^{2}(2
	\Gamma (5)+\log ^{4}2)^{\frac{1}{4}}2M(\kappa +1)\mathcal{B}_{|D|,
		\lambda _{1}}'\frac{1}{\sqrt{\lambda _{1}}}
	\\
	\nonumber
	&&4\sqrt{3}\big \|g_{\rho}\big \|_{\rho}(2\Gamma (9)+\log ^{8}2)^{
		\frac{1}{4}}\bigg(
	\frac{2\mathcal{B}_{|D|,\lambda _{1}}}{\sqrt{\lambda _{1}}}+1\bigg)^{2}(2
	\Gamma (5)+\log ^{4}2)^{\frac{1}{4}}2(\kappa ^{2}+\kappa )\mathcal{B}_{|D|,
		\lambda _{1}}'\lambda _{1}^{r-1}+2\sqrt{3}\big \|g_{\rho}\big \|_{
		\rho}\lambda _{1}^{r-\frac{1}{2}}.
\end{eqnarray}
After combining the first two terms, we finish the proof.
\end{proof}
Now we are ready to prove {Theorem~\ref{thm3}}.
\begin{proof}[Proof of {Theorem~\ref{thm3}}]
Under above preparations, we need to re-estimate the following expected
norm,
\begin{eqnarray}
	\nonumber
	&&E\Big[\Omega _{D,\lambda _{1},\lambda _{2},V_{D}}\big \|L_{K,D}-L_{K,
		\widehat{D}}\big \|\big \|f_{D,\lambda _{1},\lambda _{2}}\big \|_{K}
	\Big]
	\\
	\nonumber
	&&\leq \bigg(2\sqrt{2}(2+\sqrt{\pi})^{\frac{1}{2}}\kappa L
	\frac{2^{\frac{\alpha +2}{2}}B_{\widetilde{K}}^{\frac{\alpha }{2}}}{\lambda _{1}\widetilde{d}^{\frac{\alpha }{2}}}+2
	\bigg)\kappa L(2+\sqrt{\pi})^{\frac{1}{2}}
	\frac{2^{\frac{\alpha +2}{2}}B_{\widetilde{K}}^{\frac{\alpha }{2}}}{\lambda _{1}^{\frac{1}{2}}\widetilde{d}^{\frac{\alpha }{2}}}
	\Big\{E_{\mathbf{z}^{|D|}}\Big[\big \|f_{D,\lambda _{1},\lambda _{2}}
	\big \|_{K}^{2}\Big]\Big\}^{\frac{1}{2}}\Big\{E_{\mathbf{z}^{|D|}}
	\Big[\mathcal{A}_{D,\lambda _{1},\lambda _{2},V_{D}}^{2}\Big]\Big\}^{
		\frac{1}{2}}.
\end{eqnarray}
Substitute {Proposition~\ref{rfd}} into above inequality, we know that the above terms can be further bounded by
%
\begin{eqnarray}
	\nonumber
	&& \bigg(2\sqrt{2}(2+\sqrt{\pi})^{\frac{1}{2}}\kappa L
	\frac{2^{\frac{\alpha +2}{2}}B_{\widetilde{K}}^{\frac{\alpha }{2}}}{\lambda _{1}\widetilde{d}^{\frac{\alpha }{2}}}+2
	\bigg)\bigg(
	\frac{2\mathcal{B}_{|D|,\lambda _{1}}}{\sqrt{\lambda _{1}}}+1\bigg)^{2}(2
	\Gamma (5)+\log ^{4}2)^{\frac{1}{2}}\kappa L(2+\sqrt{\pi})^{
		\frac{1}{2}}2^{\frac{\alpha +2}{2}}B_{\widetilde{K}}^{
		\frac{\alpha }{2}}\times
	\\
	\nonumber
	&&\Bigg(4\sqrt{3}(2\Gamma (9)+\log ^{8}2)^{\frac{1}{4}}(2\Gamma (5)+
	\log ^{4}2)^{\frac{1}{4}}\Big(
	\frac{2\mathcal{B}_{|D|,\lambda _{1}}}{\sqrt{\lambda _{1}}}+1\Big)^{2}
	\Big[2M(\kappa +1)
	\frac{1}{\lambda _{1}^{\frac{1}{2}}\widetilde{d}^{\frac{\alpha }{2}}}
	\frac{1}{\sqrt{\lambda _{1}}}\mathcal{B}_{|D|,\lambda _{1}}'
	\\
	&&+2(\kappa ^{2}+\kappa )\big \|g_{\rho}\big \|_{\rho}
	\frac{\lambda _{1}^{r-1}}{\lambda _{1}^{\frac{1}{2}}\widetilde{d}^{\frac{\alpha }{2}}}
	\mathcal{B}_{|D|,\lambda _{1}}'\Big]+2\sqrt{3}\big \|g_{\rho}\big \|_{
		\rho}
	\frac{\lambda _{1}^{r-\frac{1}{2}}}{\lambda _{1}^{\frac{1}{2}}\widetilde{d}^{\frac{\alpha }{2}}}
	\Bigg).
	\label{nbd}
\end{eqnarray}
On the other hand, note that the estimate of {\eqref{bdd1}} on
$E\bigg[\Omega _{D,\lambda _{1},\lambda _{2},V_{D}}\big \|\hat{S}_{D}^{T}y_{D}-S_{D}^{T}y_{D}
\big \|\bigg]$ in case $r\in (0,1/2)$ does not change. Now combine {\eqref{bdd1}}, {\eqref{nbd}} and {\eqref{bdd3}} in {Proposition~\ref{r2b}}, the
desired bound follows.
\end{proof}
%
\begin{rmk}%
\label{secondstage_discuss}
In contrast to previous works, for handling the
tough case $r\in (0,1/2)$, since we have employed the different decomposition
approach described above, the core terms of the second-stage estimates have
already changed to some new terms
\begin{equation*}\frac{1}{\lambda
		_{1}^{\frac{1}{2}}\widetilde{d}^{\frac{\alpha }{2}}}\frac{1}{\sqrt{\lambda
			_{1}}}\mathcal{B}_{|D|,\lambda _{1}}', \frac{\lambda _{1}^{r-1}}{\lambda
		_{1}^{\frac{1}{2}}\widetilde{d}^{\frac{\alpha }{2}}}\mathcal{B}_{|D|,\lambda
		_{1}}', \frac{\lambda _{1}^{r-\frac{1}{2}}}{\lambda
		_{1}^{\frac{1}{2}}\widetilde{d}^{\frac{\alpha }{2}}}.\end{equation*} In fact,
when $r\in (0,1/2)$, in the above estimate of $\Omega _{D,\lambda
	_{1},\lambda _{2},V_{D}}\big \|L_{K,D}-L_{K,\widehat{D}}\big \|\big
\|f_{D,\lambda _{1},\lambda _{2}}\big \|_{K}$, the derivation of the upper
bound of $\big \|f_{D,\lambda _{1},\lambda _{2}}\big \|_{K}$ deeply relies on
our new estimates in {Proposition~\ref{rx}} and {Proposition~\ref{rfd}}.
\end{rmk}
\begin{proof}[Proof of {Theorem~\ref{thm4}}]
We substitute the main parameters
\begin{equation*}
	\lambda _{1}=|D|^{-\frac{1}{1+\beta }},\  \lambda _{2}=
	\frac{1}{2c_{V}}|D|^{-\frac{1}{1+\beta }}, \ \widetilde{d}=d_{1}=d_{2}=
	\cdots =d_{|D|}=|D|^{\frac{2}{\alpha (1+\beta )}}
\end{equation*}
into the general error bound estimate in {Theorem~\ref{thm3}}. We obtain
%
\begin{equation}
	\nonumber
	\mathcal{B}_{|D|,\lambda _{1}}=\frac{2\kappa }{\sqrt{|D|}}\bigg(
	\frac{\kappa }{\sqrt{|D|\lambda _{1}}}+\sqrt{\mathcal{N}(\lambda _{1})}
	\bigg)\leq 2\kappa (\kappa +\sqrt{\mathcal{C}_{0}})|D|^{-
		\frac{1}{2(1+\beta )}}
\end{equation}
and
%
\begin{equation}
	\nonumber
	\frac{2\mathcal{B}_{|D|,\lambda _{1}}}{\sqrt{\lambda _{1}}}\leq 4
	\kappa (\kappa +\sqrt{\mathcal{C}_{0}}).
\end{equation}
Now we estimate the key terms
\begin{equation*}
	\frac{1}{\lambda _{1}\widetilde{d}^{\frac{\alpha }{2}}},
	\frac{1}{\lambda _{1}^{\frac{1}{2}}\widetilde{d}^{\frac{\alpha }{2}}}
	\frac{1}{\sqrt{\lambda _{1}}}\mathcal{B}_{|D|,\lambda _{1}}',
	\frac{\lambda _{1}^{r-1}}{\lambda _{1}^{\frac{1}{2}}\widetilde{d}^{\frac{\alpha }{2}}}
	\mathcal{B}_{|D|,\lambda _{1}}',
	\frac{\lambda _{1}^{r-\frac{1}{2}}}{\lambda _{1}^{\frac{1}{2}}\widetilde{d}^{\frac{\alpha }{2}}}.
\end{equation*}
We start with the following three estimates,
%
\begin{equation}
	\nonumber
	\frac{1}{\lambda _{1}\widetilde{d}^{\frac{\alpha }{2}}}=|D|^{
		\frac{1}{1+\beta }}\cdot |D|^{-\frac{\alpha }{2}
		\frac{2}{\alpha (1+\beta )}}=|D|^{\frac{1}{1+\beta }-
		\frac{1}{1+\beta }}=1.
\end{equation}
%
\begin{equation}
	\nonumber
	\frac{1}{\lambda _{1}^{\frac{1}{2}}\widetilde{d}^{\frac{\alpha }{2}}}=|D|^{
		\frac{1}{2(1+\beta )}}\cdot |D|^{-\frac{\alpha }{2}
		\frac{2}{\alpha (1+\beta )}}=|D|^{-\frac{1}{2(1+\beta )}},
\end{equation}
%
\begin{equation}
	\nonumber
	\mathcal{B}_{|D|,\lambda _{1}}'=\frac{1}{|D|\sqrt{\lambda _{1}}}+
	\frac{\sqrt{\mathcal{N}(\lambda _{1})}}{\sqrt{|D|}}\leq (1+\sqrt{
		\mathcal{C}_{0}})|D|^{-\frac{1}{2(1+\beta )}}.
\end{equation}
The above estimate for $\mathcal{B}_{|D|,\lambda _{1}}'$ and the fact
$r\in (0,1/2)$ in fact also imply
%
\begin{equation}
	\nonumber
	\mathcal{B}_{|D|,\lambda _{1}}'\leq (1+\sqrt{\mathcal{C}_{0}})|D|^{-
		\frac{1}{2(1+\beta )}}<(1+\sqrt{\mathcal{C}_{0}})|D|^{-
		\frac{r}{1+\beta }}.
\end{equation}
Therefore, we obtain
\begin{eqnarray}
	\nonumber
	&&
	\frac{1}{\lambda _{1}^{\frac{1}{2}}\widetilde{d}^{\frac{\alpha }{2}}}
	\frac{1}{\sqrt{\lambda _{1}}}\mathcal{B}_{|D|,\lambda _{1}}'\leq |D|^{-
		\frac{1}{2(1+\beta )}}|D|^{\frac{1}{2(1+\beta )}}(1+\sqrt{\mathcal{C}_{0}})|D|^{-
		\frac{1}{2(1+\beta )}}
	\\
	\nonumber
	&&\ \ \ \ \ \ \ \ \ \ \ \ \ \ \ \ \ \ \ \ \ \ \ \ \  =(1+\sqrt{
		\mathcal{C}_{0}})|D|^{-\frac{1}{2(1+\beta )}}<(1+\sqrt{\mathcal{C}_{0}})|D|^{-
		\frac{r}{1+\beta }},
\end{eqnarray}
\begin{eqnarray}
	\nonumber
	&&
	\frac{\lambda _{1}^{r-1}}{\lambda _{1}^{\frac{1}{2}}\widetilde{d}^{\frac{\alpha }{2}}}
	\mathcal{B}_{|D|,\lambda _{1}}'\leq |D|^{-\frac{r-1}{1+\beta }}|D|^{-
		\frac{1}{2(1+\beta )}}(1+\sqrt{\mathcal{C}_{0}})|D|^{-
		\frac{1}{2(1+\beta )}}=(1+\sqrt{\mathcal{C}_{0}})|D|^{-
		\frac{r}{1+\beta }},
\end{eqnarray}
and
%
\begin{equation}
	\nonumber
	\frac{\lambda _{1}^{r-\frac{1}{2}}}{\lambda _{1}^{\frac{1}{2}}\widetilde{d}^{\frac{\alpha }{2}}}=|D|^{-
		\frac{r-\frac{1}{2}}{1+\beta }}|D|^{-\frac{1}{2(1+\beta )}}=|D|^{-
		\frac{r}{1+\beta }}.
\end{equation}
We turn to estimate
$\lambda _{1}^{r-\frac{1}{2}}\mathcal{B}_{|D|,\lambda _{1}}'$,
$c_{V}\lambda _{2}\lambda _{1}^{r-1}$ and $\lambda _{1}^{r}$. Above basic
estimate for $\mathcal{B}_{|D|,\lambda _{1}}'$ implies
%
\begin{equation}
	\nonumber
	\lambda _{1}^{r-\frac{1}{2}}\mathcal{B}_{|D|,\lambda _{1}}'\leq |D|^{-
		\frac{r-\frac{1}{2}}{1+\beta }}\cdot (1+\sqrt{\mathcal{C}_{0}})|D|^{-
		\frac{1}{2(1+\beta )}}\leq (1+\sqrt{\mathcal{C}_{0}})|D|^{-
		\frac{r}{1+\beta }}.
\end{equation}
Also, note that the above parameters design for $\lambda _{1}$ and
$\lambda _{2}$ imply
%
\begin{equation}
	\nonumber
	c_{V}\lambda _{2}\lambda _{1}^{r-1}=\frac{1}{2}|D|^{-
		\frac{r}{1+\beta }}, \ \lambda _{1}^{r}=|D|^{-\frac{r}{1+\beta }}.
\end{equation}
After combining all the above estimates with {Theorem~\ref{thm3}}, the desired
rates are obtained.
\end{proof}
%

\section{Distributed learning with multi-penalty distribution regression and its learning rates}
\label{sec5}

This section is devoted to proving the result on distributed learning with
multi-penalty distribution regression. The proof is based on an observation
of the relation between our former estimates in this paper and the one-stage
distributed learning theory. Firstly, we make some preparations. Recall
the definition of $f_{\lambda _{1}}$, we have
\begin{equation*}
\lambda _{1}f_{\lambda _{1}}=L_{K}(f_{\rho}-f_{\lambda _{1}}).
\end{equation*}
Then for any mean embedding set $D$ associated with the given data set
$\widetilde{D}$, we have
\begin{eqnarray}
\nonumber
&&f_{D,\lambda _{1},\lambda _{2}}-f_{\lambda _{1}}
\\
\nonumber
&&=(L_{K,D}+\lambda _{1}I+\lambda _{2}V_{D}^{T}V_{D})^{-1}S_{D}^Ty_{D}-f_{
	\lambda _{1}}
\\
\nonumber
&&=(L_{K,D}+\lambda _{1}I+\lambda _{2}V_{D}^{T}V_{D})^{-1}\big[S_{D}^Ty_{D}-L_{K,D}f_{
	\lambda _{1}}-\lambda _{1}f_{\lambda _{1}}-\lambda _{2}V_{D}^{T}V_{D}f_{
	\lambda _{1}}\big]
\\
\nonumber
&&=(L_{K,D}+\lambda _{1}I+\lambda _{2}V_{D}^{T}V_{D})^{-1}\Big\{
\frac{1}{|D|}\sum _{z\in D}(y-f_{\lambda _{1}}(\mu _{x}))K_{\mu _{x}}-L_{K}(f_{
	\rho}-f_{\lambda _{1}})-\lambda _{2}V_{D}^{T}V_{D}f_{\lambda _{1}}
\Big\}
\end{eqnarray}
in which $z=(\mu _{x},y)\in X_{\mu}\times Y$. Denote
\begin{equation*}
\Delta _{D}=\frac{1}{|D|}\sum _{z\in D}(y-f_{\lambda _{1}}(\mu _{x}))K_{
	\mu _{x}}-L_{K}(f_{\rho}-f_{\lambda _{1}})
\end{equation*}
and
\begin{equation*}
\mathcal{Q}_{D}=(L_{K,D}+\lambda _{1}I+\lambda _{2}V_{D}^{T}V_{D})^{-1}-(L_{K}+
\lambda _{1}I+\lambda _{2}V_{D}^{T}V_{D})^{-1}.
\end{equation*}
Then we can decompose
$f_{\widehat{D},\lambda _{1},\lambda _{2}}-f_{\lambda _{1}}$ as
%
\begin{eqnarray}
\nonumber
&&f_{\widehat{D},\lambda _{1},\lambda _{2}}-f_{\lambda _{1}}=\Big[f_{
	\widehat{D},\lambda _{1},\lambda _{2}}-f_{D,\lambda _{1},\lambda _{2}}
\Big]+\mathcal{Q}_{D}\Delta _{D}+(L_{K}+\lambda _{1}I+\lambda _{2}V_{D}^{T}V_{D})^{-1}
\Delta _{D}
\\
&& \quad \quad \quad \quad \quad \quad \quad -(L_{K,D}+\lambda _{1}I+
\lambda _{2}V_{D}^{T}V_{D})^{-1}\lambda _{2}V_{D}^{T}V_{D}f_{\lambda _{1}}.
\label{rep}
\end{eqnarray}
In the following, we will use the corresponding notations $\mathcal{Q}_{D_{j}}$,
$L_{K,D_{j}}$, $\Delta _{D_{j}}$, $V_{D_{j}}$ involving the sample subset
$\widetilde{D}_{j}$ and its associated mean embedding set $D_{j}$. Then
the corresponding representations for
$f_{\widehat{D}_{j},\lambda _{1},\lambda _{2}}-f_{\lambda _{1}}$ are well-defined
for local data sets. Applying second-order decomposition {\eqref{sd1}} to $\mathcal{Q}_{D}$ with
\begin{equation}
\nonumber
A=L_{K,D}+\lambda _{1}I+\lambda _{2}V_{D}^{T}V_{D},
\end{equation}
\begin{equation}
\nonumber
B=L_{K}+\lambda _{1}I+\lambda _{2}V_{D}^{T}V_{D},
\end{equation}
we have
\begin{eqnarray*}
\nonumber
&&L_{K}^{1/2}\mathcal{Q}_{D}\Delta _{D}=L_{K}^{1/2}(L_{K}+\lambda _{1}I+
\lambda _{2}V_{D}^{T}V_{D})^{-1}(L_{K}-L_{K,D})(L_{K}+\lambda _{1}I+
\lambda _{2}V_{D}^{T}V_{D})^{-1}\Delta _{D}
\\
\nonumber
&&\ \ \ \ \ \ \ \ \ \ \ \ \ \ \ \ \  \ +L_{K}^{1/2}(L_{K}+\lambda _{1}I+
\lambda _{2}V_{D}^{T}V_{D})^{-1}(L_{K}-L_{K,D})(L_{K,D}+\lambda _{1}I+
\lambda _{2}V_{D}^{T}V_{D})^{-1}
\\
\nonumber
&&\ \ \ \ \ \ \ \ \ \ \ \  \ \ \ \ \ \  \ (L_{K}-L_{K,D})(L_{K}+
\lambda _{1}I+\lambda _{2}V_{D}^{T}V_{D})^{-1}\Delta _{D}
\end{eqnarray*}
which can be further decomposed as
\begin{eqnarray*}
&&\{L_{K}^{1/2}(L_{K}+\lambda _{1}I)^{-1/2}\}\{(L_{K}+\lambda _{1}I)^{1/2}(L_{K}+
\lambda _{1}I+\lambda _{2}V_{D}^{T}V_{D})^{-1/2}\}
\\
&& \{(L_{K}+\lambda _{1}I+\lambda _{2}V_{D}^{T}V_{D})^{-1/2}(L_{K}+
\lambda _{1}I)^{1/2}\}\{(L_{K}+\lambda _{1}I)^{-1/2}(L_{K}-L_{K,D})\}
\\
&& \{(L_{K}+\lambda _{1}I+\lambda _{2}V_{D}^{T}V_{D})^{-1/2}\}\{(L_{K}+
\lambda _{1}I+\lambda _{2}V_{D}^{T}V_{D})^{-1/2}(L_{K}+\lambda _{1}I)^{1/2}
\}
\\
&& \{(L_{K}+\lambda _{1}I)^{-1/2}\Delta _{D}\}
\\
&& +\{L_{K}^{1/2}(L_{K}+\lambda _{1}I)^{-1/2}\}\{(L_{K}+\lambda _{1}I)^{1/2}(L_{K}+
\lambda _{1}I+\lambda _{2}V_{D}^{T}V_{D})^{-1/2}\}
\\
&& \{(L_{K}+\lambda _{1}I+\lambda _{2}V_{D}^{T}V_{D})^{-1/2}(L_{K}+
\lambda _{1}I)^{1/2}\}\{(L_{K}+\lambda _{1}I)^{-1/2}(L_{K}-L_{K,D})\}
\\
&& \{(L_{K,D}+\lambda _{1}I+\lambda _{2}V_{D}^{T}V_{D})^{-1}\}\{(L_{K}-L_{K,D})(L_{K}+
\lambda _{1}I)^{-1/2}\}
\\
&& \{(L_{K}+\lambda _{1}I)^{1/2}(L_{K}+\lambda _{1}I+\lambda _{2}V_{D}^{T}V_{D})^{-1/2}
\}\{(L_{K}+\lambda _{1}I+\lambda _{2}V_{D}^{T}V_{D})^{-1/2}(L_{K}+
\lambda _{1}I)^{1/2}\}
\\
&& \{(L_{K}+\lambda _{1}I)^{-1/2}\Delta _{D}\}.
\end{eqnarray*}
After taking RKHS-norms on both sides, we obtain
%
\begin{eqnarray}
\nonumber
&&\Big\|L_{K}^{1/2}\mathcal{Q}_{D}\Delta _{D}\Big\|_{K}\leq 2^{
	\frac{3}{2}}\frac{\Xi _{D,\lambda _{1}}}{\sqrt{\lambda _{1}}}\Big\|(L_{K}+
\lambda _{1}I)^{-1/2}\Delta _{D}\Big\|_{K}+4
\frac{\Xi _{D,\lambda _{1}}^{2}}{\lambda _{1}}\Big\|(L_{K}+\lambda _{1}I)^{-1/2}
\Delta _{D}\Big\|_{K}
\\
&&\ \ \ \ \ \ \ \ \ \ \ \ \ \ \ \ \ \ \ \ \ \  =\Big(2^{\frac{3}{2}}
\frac{\Xi _{D,\lambda _{1}}}{\sqrt{\lambda _{1}}}+4
\frac{\Xi _{D,\lambda _{1}}^{2}}{\lambda _{1}}\Big)\Big\|(L_{K}+
\lambda _{1}I)^{-1/2}\Delta _{D}\Big\|_{K}
\label{d1}
\end{eqnarray}
where $\Xi _{D,\lambda _{1}}$ is defined in {\eqref{fldef}}. Considering the
above representation {\eqref{rep}} on mean embedding set $D_{j}$, we know
that
\begin{eqnarray}
\nonumber
&&f_{\widehat{D}_{j},\lambda _{1},\lambda _{2}}-f_{\lambda _{1}}=
\Big[f_{\widehat{D}_{j},\lambda _{1},\lambda _{2}}-f_{D_{j},\lambda _{1},
	\lambda _{2}}\Big]+\mathcal{Q}_{D_{j}}\Delta _{D_{j}}+(L_{K}+\lambda _{1}I+
\lambda _{2}V_{D_{j}}^{T}V_{D_{j}})^{-1}\Delta _{D_{j}}
\\
\nonumber
&&\quad \quad \quad \quad \quad \quad \quad -(L_{K,D_{j}}+\lambda _{1}I+
\lambda _{2}V_{D_{j}}^{T}V_{D_{j}})^{-1}\lambda _{2}V_{D_{j}}^{T}V_{D_{j}}f_{
	\lambda _{1}}.
\end{eqnarray}
Then we have
%
\begin{eqnarray}
\nonumber
&&\overline{f_{\widehat{D},\lambda _{1},\lambda _{2}}}-f_{\lambda _{1}}=
\sum _{j=1}^{m}\frac{|D_{j}|}{|D|}\Big[f_{\widehat{D}_{j},\lambda _{1},
	\lambda _{2}}-f_{\lambda _{1}}\Big]
\\
\nonumber
&&=\sum _{j=1}^{m}\frac{|D_{j}|}{|D|}\Big[f_{\widehat{D}_{j},\lambda _{1},
	\lambda _{2}}-f_{{D}_{j},\lambda _{1},\lambda _{2}}\Big]+\sum _{j=1}^{m}
\frac{|D_{j}|}{|D|}(L_{K,D_{j}}+\lambda _{1}I+\lambda _{2}V_{D_{j}}^{T}V_{D_{j}})^{-1}
\Delta _{D_{j}}
\\
&&\ \ -\sum _{j=1}^{m}\frac{|D_{j}|}{|D|}(L_{K,D_{j}}+\lambda _{1}I+
\lambda _{2}V_{D_{j}}^{T}V_{D_{j}})^{-1}\lambda _{2}V_{D_{j}}^{T}V_{D_{j}}f_{
	\lambda _{1}}.
\label{rep1}
\end{eqnarray}
Since $\Delta _{D}=\sum _{j=1}\frac{|D_{j}|}{|D|}\Delta _{D_{j}}$, substraction
between {\eqref{rep1}} and {\eqref{rep}} yields the crucial error decomposition
for two-stage multi-penalty distribution regression distributed learning
scheme {\eqref{disal}} in the following proposition.
%
\begin{pro}
\label{two-stage_distributed_decomposition}
For the distributed estimator
$\overline{f_{\widehat{D},\lambda _{1},\lambda _{2}}}$ and the estimator
$f_{\widehat{D},\lambda _{1},\lambda _{2}}$ associated with a single data
set $D$, there holds the following two-stage error decomposition:
\begin{eqnarray}
	\nonumber
	&&\overline{f_{\widehat{D},\lambda _{1},\lambda _{2}}}- f_{
		\widehat{D},\lambda _{1},\lambda _{2}}=\sum _{j=1}^{m}
	\frac{|D_{j}|}{|D|}\Big[[f_{\widehat{D}_{j},\lambda _{1},\lambda _{2}}-f_{{D}_{j},
		\lambda _{1},\lambda _{2}}]-[f_{\widehat{D},\lambda _{1},\lambda _{2}}-f_{{D},
		\lambda _{1},\lambda _{2}}]\Big]
	\\
	\nonumber
	&&\ \ \ \ \ \ \ \ \ \ \ \  \ +\sum _{j=1}^{m}\frac{|D_{j}|}{|D|}\big[(L_{K,D_{j}}+
	\lambda _{1}I+\lambda _{2}V_{D_{j}}^{T}V_{D_{j}})^{-1}-(L_{K,D}+
	\lambda _{1}I+\lambda _{2}V_{D}^{T}V_{D})^{-1}\big]\Delta _{D_{j}}
	\\
	\nonumber
	&&\ \ \ \ \ \ \ \ \ \ \ \  \ +\sum _{j=1}^{m}\frac{|D_{j}|}{|D|}\big[(L_{K,D}+\lambda _{1}I+\lambda _{2}V_{D}^{T}V_{D})^{-1}
	\lambda _{2}V_{D}^{T}V_{D}f_{\lambda _{1}}-(L_{K,D_{j}}+
	\lambda _{1}I+\lambda _{2}V_{D_{j}}^{T}V_{D_{j}})^{-1}\lambda _{2}V_{D_{j}}^{T}V_{D_{j}}f_{
		\lambda _{1}}\big].
\end{eqnarray}
\end{pro}
In the following, we always denote the three terms on the right-hand side
of the above error decomposition by $\mathcal{S}_{1}$,
$\mathcal{S}_{2}$, $\mathcal{S}_{3}$.
%
\begin{rmk}
\label{rem3}
It can be observed that $\mathcal{S}_{1}$ presents the second-stage sampling effect to the multi-penalty distribution regression-based distributed learning algorithm. $\mathcal{S}_{2}$ is mainly related to one-stage distributed decomposition. $\mathcal{S}_{3}$ shows some additional multi-penalty influence on the proposed distributed learning algorithm. In contrast to former works, the decomposition is able to capture a more general setting when the data are functional or distribution data.
\end{rmk}
For later use, we spit
$\Delta _{D_{j}}=\Delta _{D_{j}}'+\Delta _{D_{j}}''$ in which
%
\begin{equation}
\Delta _{D_{j}}'=S_{D_{j}}^Ty_{D_{j}}-L_{K,D_{j}}f_{\rho}, \ \Delta _{D_{j}}''=L_{K,D_{j}}(f_{
	\rho}-f_{\lambda _{1}})-L_{K}(f_{\rho}-f_{\lambda _{1}}).
\label{split}
\end{equation}
We are ready to prove {Theorem~\ref{disrate}} on the optimal learning rates
for distributed learning with multi-penalty distribution regression. In
subsequent proof, we use the notation $f_{1}\lesssim f_{2}$ to denote that
there is an absolute constant $C$ independent of $|D|$ and $m$ such that
$f_{1}\leq C f_{2}$.
\begin{proof}[Proof of {Theorem~\ref{disrate}}]
We aim to bound
$E[\|\mathcal{S}_{1}+\mathcal{S}_{2}+\mathcal{S}_{3}\|_{\rho}]$, in which
$\mathcal{S}_{1}$, $\mathcal{S}_{2}$, $\mathcal{S}_{3}$ are defined as
in {Proposition~\ref{two-stage_distributed_decomposition}}. The inequality
in {\eqref{yin}} implies that,
\begin{eqnarray}
	\nonumber
	&&E\Big[\big \|f_{\widehat{D},\lambda _{1},\lambda _{2}}-f_{D,
		\lambda _{1},\lambda _{2}}\big \|_{\rho}\Big]\lesssim \Big(
	\frac{1}{\lambda _{1}d^{\frac{\alpha }{2}}}+1\Big)\bigg(
	\frac{2\mathcal{B}_{|D|,\lambda _{1}}}{\sqrt{\lambda _{1}}}+1\bigg)^{2}
	\frac{1}{\lambda _{1}^{\frac{1}{2}}d^{\frac{\alpha }{2}}}\bigg[2+
	\frac{\mathcal{B}_{|D|,\lambda _{1}}}{\sqrt{\lambda _{1}}}\bigg(
	\frac{2\mathcal{B}_{|D|,\lambda _{1}}}{\sqrt{\lambda _{1}}}+1\bigg)
	\\
	\nonumber
	&&\ \ \ \ \ \ \ \ \ \ \ \  \ \ \ \ \ \ \ \ \ \ \ \ \ \ \ \ \ \  \ \
	\  \ \ \ \ \ \ \ +(\lambda _{1}+\lambda _{2}c_{V})\lambda _{1}^{r-
		\frac{3}{2}}\bigg(
	\frac{2\mathcal{B}_{|D|,\lambda _{1}}}{\sqrt{\lambda _{1}}}+1\bigg)^{2r-1}
	\bigg],
\end{eqnarray}
and same procedure implies that the above inequality continues to hold
when $D$ is replaced by $D_{j}$. Substituting
$\lambda _{1}=|D|^{-\frac{1}{2r+\beta }}$ into {\eqref{cap}} yields
$\mathcal{N}(\lambda _{1})\leq \mathcal{C}_{0}|D|^{
	\frac{\beta }{2r+\beta }}$. Using the condition
$ m\leq |D|^{\frac{2r-1}{2r+\beta }}$ and $|D_{j}|=|D|/m$,
$j=1,2,...,m$, we have
%
\begin{equation}
	\frac{\mathcal{N}(\lambda _{1})}{\lambda _{1}|D_{j}|}\leq \mathcal{C}_{0}
	m |D|^{\frac{1-2r}{2r+\beta }}\leq \mathcal{C}_{0}.
	\label{xl}
\end{equation}
Then it follows that, for $j=1,2,...,m$,
%
\begin{equation}
	\nonumber
	\frac{\mathcal{B}_{|D_{j}|,\lambda _{1}}}{\sqrt{\lambda _{1}}}=
	\frac{2\kappa }{\sqrt{\lambda _{1}|D_{j}|}}\bigg(
	\frac{\kappa }{\sqrt{|D_{j}|\lambda _{1}}}+\sqrt{\mathcal{N}(\lambda _{1})}
	\bigg)\leq 2\kappa (\kappa +\sqrt{\mathcal{C}_{0}}).
\end{equation}
Note that by taking $\widetilde{d}=d$, the former estimates {\eqref{e1}} and {\eqref{e2}} have already implied
%
\begin{equation}
	\nonumber
	\frac{1}{\lambda _{1}d^{\frac{\alpha }{2}}}\leq 1, \
	\frac{1}{\lambda _{1}^{\frac{1}{2}}d^{\frac{\alpha }{2}}}\leq |D|^{-
		\frac{r}{2r+\beta }}, \ (\lambda _{1}+\lambda _{2}c_{V})\lambda _{1}^{r-
		\frac{3}{2}}\leq \frac{3}{2}.
\end{equation}

Now we can estimate $\mathcal{S}_{1}$ in expectation form as
\begin{eqnarray}
	\nonumber
	&&E\Big[\big\|\mathcal{S}_{1}\big\|_{\rho}\Big]\lesssim \sum _{j=1}^{m}
	\frac{|D_{j}|}{|D|}\Big(\frac{1}{\lambda _{1}d^{\frac{\alpha }{2}}}+1
	\Big)\bigg(
	\frac{2\mathcal{B}_{|D_{j}|,\lambda _{1}}}{\sqrt{\lambda _{1}}}+1
	\bigg)^{2}\frac{1}{\lambda _{1}^{\frac{1}{2}}d^{\frac{\alpha }{2}}}
	\bigg[2+
	\frac{\mathcal{B}_{|D_{j}|,\lambda _{1}}}{\sqrt{\lambda _{1}}}\bigg(
	\frac{2\mathcal{B}_{|D_{j}|,\lambda _{1}}}{\sqrt{\lambda _{1}}}+1
	\bigg)
	\\
	\nonumber
	&&\ \ \ \ \ \ \ \ \ \ \ \  \ \ \ \ \ \ \ \ \ \ \ \  +(\lambda _{1}+
	\lambda _{2}c_{V})\lambda _{1}^{r-\frac{3}{2}}\bigg(
	\frac{2\mathcal{B}_{|D_{j}|,\lambda _{1}}}{\sqrt{\lambda _{1}}}+1
	\bigg)^{2r-1}\bigg]
	\\
	\nonumber
	&&\ \ \ \ \ \ \ \ \ \ \ \ \ \ \ \ \ \ +\Big(
	\frac{1}{\lambda _{1}d^{\frac{\alpha }{2}}}+1\Big)\bigg(
	\frac{2\mathcal{B}_{|D|,\lambda _{1}}}{\sqrt{\lambda _{1}}}+1\bigg)^{2}
	\frac{1}{\lambda _{1}^{\frac{1}{2}}d^{\frac{\alpha }{2}}}\bigg[2+
	\frac{\mathcal{B}_{|D|,\lambda _{1}}}{\sqrt{\lambda _{1}}}\bigg(
	\frac{2\mathcal{B}_{|D|,\lambda _{1}}}{\sqrt{\lambda _{1}}}+1\bigg)
	\\
	\nonumber
	&&\ \ \ \ \ \ \ \ \ \ \ \  \ \ \ \ \ \ \ \ \ \ \ \ +(\lambda _{1}+
	\lambda _{2}c_{V})\lambda _{1}^{r-\frac{3}{2}}\bigg(
	\frac{2\mathcal{B}_{|D|,\lambda _{1}}}{\sqrt{\lambda _{1}}}+1\bigg)^{2r-1}
	\bigg]\lesssim |D|^{-\frac{r}{2r+\beta }}.
\end{eqnarray}

We turn to estimate $\mathcal{S}_{2}$. By subtracting and adding the operator
$(L_{K}+\lambda _{1}I+\lambda _{2}V_{D_{j}}^{T}V_{D_{j}})^{-1}$, noting
that $\Delta _{D_{j}}=\Delta _{D_{j}}'+\Delta _{D_{j}}''$, where
$\Delta _{D_{j}}'$, $\Delta _{D_{j}}''$ are given as in {\eqref{split}},
we have the decomposition
%
\begin{equation}
	\mathcal{S}_{2}=\sum _{j=1}^{m}\frac{|D_{j}|}{|D|}\mathcal{Q}_{D_{j}}
	\Delta _{D_{j}}'+\sum _{j=1}^{m}\frac{|D_{j}|}{|D|}\mathcal{Q}_{D_{j}}
	\Delta _{D_{j}}''-\mathcal{Q}_{D}\Delta _{D}.
	\label{disdecoms2}
\end{equation}
Using the procedure of deriving {\eqref{d1}} with $\Delta _{D}$ replaced
by $\Delta _{D_{j}}'$ and $\Delta _{D_{j}}''$, and taking RKHS-norms on both
sides, we have
\begin{eqnarray}
	\nonumber
	&&\Big\|L_{K}^{1/2}\mathcal{Q}_{D_{j}}\Delta _{D_{j}}'\Big\|_{K}^{2}
	\leq \Big(2^{\frac{3}{2}}
	\frac{\Xi _{D_{j},\lambda _{1}}}{\sqrt{\lambda _{1}}}+4
	\frac{\Xi _{D_{j},\lambda _{1}}^{2}}{\lambda _{1}}\Big)^{2}\Big\|(L_{K}+
	\lambda _{1}I)^{-1/2}\Delta _{D_{j}}'\Big\|_{K}^{2}
	\\
	\nonumber
	&&\ \ \ \ \ \  \ \ \ \ \ \  \ \ \ \ \ \  \ \ \ \ \ \ \lesssim \Big(
	\frac{\Xi _{D_{j},\lambda _{1}}}{\sqrt{\lambda _{1}}}+
	\frac{\Xi _{D_{j},\lambda _{1}}^{2}}{\lambda _{1}}\Big)^{2}\Big\|(L_{K}+
	\lambda _{1}I)^{-1/2}\Delta _{D_{j}}'\Big\|_{K}^{2}
\end{eqnarray}
and
%
\begin{equation}
	\nonumber
	\Big\|L_{K}^{1/2}\mathcal{Q}_{D_{j}}\Delta _{D_{j}}''\Big\|_{K}
	\lesssim \Big(\frac{\Xi _{D_{j},\lambda _{1}}}{\sqrt{\lambda _{1}}}+
	\frac{\Xi _{D_{j},\lambda _{1}}^{2}}{\lambda _{1}}\Big)\Big\|(L_{K}+
	\lambda _{1}I)^{-1/2}\Delta _{D_{j}}''\Big\|_{K}.
\end{equation}
Now we have obtained the error bounds that we need via the decomposition
for $\mathcal{S}_{2}$ in {\eqref{disdecoms2}}. We observe that the bounds
$\Big(\frac{\Xi _{D_{j},\lambda _{1}}}{\sqrt{\lambda _{1}}}+
\frac{\Xi _{D_{j},\lambda _{1}}^{2}}{\lambda _{1}}\Big)\Big\|(L_{K}+
\lambda _{1}I)^{-1/2}\Delta _{D_{j}}'\Big\|_{K}$ and
$\Big(\frac{\Xi _{D_{j},\lambda _{1}}}{\sqrt{\lambda _{1}}}+
\frac{\Xi _{D_{j},\lambda _{1}}^{2}}{\lambda _{1}}\Big)\Big\|(L_{K}+
\lambda _{1}I)^{-1/2}\Delta _{D_{j}}''\Big\|_{K}$ have already been handled
in \cite{sbsecond} for one-stage distributed learning theory. Therefore,
we obtain that when $|D_{j}|=|D|/m$, $\mathcal{S}_{2}$, in fact, shares
the same expected $L_{\rho_{X_\mu}}^2$-norm bounds with the bound of Corollary 3 in
\cite{sbsecond}. Namely,
%
\begin{equation}
	E_{\mathbf{z}^{|D|}}\Big[\big\|\mathcal{S}_{2}\big\|_{\rho}\Big]
	\lesssim \sqrt{\frac{\mathcal{N}(\lambda _{1})}{\lambda _{1}|D|}}
	\Big(\sqrt{\lambda _{1}}+
	\frac{m\|f_{\rho }-f_{\lambda _{1}}\|_{\rho }}{\sqrt{|D|\lambda _{1}}}
	\Big).
	\label{eq5.7}
\end{equation}
Applying {\eqref{xl}} with $|D_{j}|$ replaced by $|D|/m$, we know that
$\frac{\mathcal{N}(\lambda _{1})}{\lambda _{1}|D|}\lesssim |D|^{
	\frac{1-2r}{2r+\beta }}$, which shows
%
\begin{equation}
	\nonumber
	\sqrt{\frac{\mathcal{N}(\lambda _{1})}{\lambda _{1}|D|}}\lesssim |D|^{
		\frac{\frac{1}{2}-r}{2r+\beta }}.
\end{equation}
Recall the fact that
$\|f_{\lambda _{1}}-f_{\rho}\|_{\rho}\leq \lambda _{1}^{r}\|g_{\rho}
\|_{\rho}$, then we have
\begin{eqnarray}
	\nonumber
	&&E_{\mathbf{z}^{|D|}}\Big[\big\|\mathcal{S}_{2}\big\|_{\rho}\Big]
	\lesssim \sqrt{\frac{\mathcal{N}(\lambda _{1})}{\lambda _{1}|D|}}
	\lambda _{1}^{r}\Big(\lambda _{1}^{\frac{1}{2}-r}+
	\frac{m}{\sqrt{|D|\lambda _{1}}}\Big)
	\\
	\nonumber
	&&\quad \quad \quad \quad \quad \quad \lesssim |D|^{-
		\frac{r}{2r+\beta }}\Big(|D|^{-\frac{\frac{1}{2}-r}{2r+\beta }}|D|^{
		\frac{\frac{1}{2}-r}{2r+\beta }}+|D|^{\frac{2r-1}{2r+\beta }}|D|^{-
		\frac{1}{2}}|D|^{\frac{\frac{1}{2}}{2r+\beta }}|D|^{
		\frac{\frac{1}{2}-r}{2r+\beta }}\Big)
	\\
	\nonumber
	&&\quad \quad \quad \quad \quad \quad \lesssim |D|^{-
		\frac{r}{2r+\beta }}.
\end{eqnarray}

We finally estimate $\mathcal{S}_{3}$. Note that for any
$j=1,2,...,m$,
\begin{eqnarray}
	\nonumber
	&&\Big\|L_{K}^{1/2}(L_{K,D_{j}}+\lambda _{1}I+\lambda _{2}V_{D_{j}}^{T}V_{D_{j}})^{-1}
	\lambda _{2}V_{D_{j}}^{T}V_{D_{j}}f_{\lambda _{1}}\Big\|_{K}
	\\
	\nonumber
	&&\leq \lambda _{2}\Big\|V_{D_{j}}^{T}V_{D_{j}}\Big\|\Big\|L_{K}^{1/2}(
	\lambda _{1}I+L_{K})^{-1/2}\Big\|\Big\|(\lambda _{1}I+L_{K})^{1/2}(L_{K}+
	\lambda _{1}I+\lambda _{2}V_{D_{j}}^{T}V_{D_{j}})^{-1/2}\Big\|
	\\
	\nonumber
	&&\quad \Big\|(L_{K}+\lambda _{1}I+\lambda _{2}V_{D_{j}}^{T}V_{D_{j}})^{1/2}(L_{K,D_{j}}+
	\lambda _{1}I+\lambda _{2}V_{D_{j}}^{T}V_{D_{j}})^{-1/2}\Big\|
	\\
	\nonumber
	&&\quad \Big\|(L_{K,D_{j}}+\lambda _{1}I+\lambda _{2}V_{D_{j}}^{T}V_{D_{j}})^{-1/2}
	\Big\|\Big\|f_{\lambda _{1}}\Big\|_{K}
	\\
	\nonumber
	&&\leq \lambda _{2}c_{V}\sqrt{2}\mathcal{A}_{D_{j},\lambda _{1},
		\lambda _{2},V_{D_{j}}}^{1/2}\frac{1}{\sqrt{\lambda _{1}}}\|f_{
		\lambda _{1}}\|_{K}.
\end{eqnarray}
Taking expectations on both sides and noting from \cite{s4} the fact that
$\|f_{\lambda _{1}}\|_{K}\leq\kappa ^{2r-1}\|g_{\rho}\|_{\rho}$ when
$r\in [1/2,1]$, we have
\begin{eqnarray}
	\nonumber
	&&E_{\mathbf{z}^{|D_{j}|}}\bigg[\Big\|L_{K}^{1/2}(L_{K,D_{j}}+
	\lambda _{1}I+\lambda _{2}V_{D_{j}}^{T}V_{D_{j}})^{-1}\lambda _{2}V_{D_{j}}^{T}V_{D_{j}}f_{
		\lambda _{1}}\Big\|_{K}\bigg]
	\\
	\nonumber
	&&\lesssim E_{\mathbf{z}^{|D_{j}|}}\Big[\mathcal{A}_{D_{j},\lambda _{1},
		\lambda _{2},V_{D_{j}}}^{1/2}\Big]\lambda _{1}^{2r-\frac{1}{2}}
	\lesssim \bigg(
	\frac{2\mathcal{B}_{|D_{j}|,\lambda _{1}}}{\sqrt{\lambda _{1}}}+1
	\bigg)|D|^{-\frac{2r-\frac{1}{2}}{2r+\beta }}\lesssim |D|^{-
		\frac{r}{2r+\beta }},
\end{eqnarray}
in which the last inequality follows from $\frac{1}{2}\leq r\leq 1$. Similar
procedure implies
%
\begin{equation}
	\nonumber
	E_{\mathbf{z}^{|D|}}\bigg[\Big\|L_{K}^{1/2}(L_{K,D}+\lambda _{1}I+
	\lambda _{2}V_{D}^{T}V_{D})^{-1}\lambda _{2}V_{D}^{T}V_{D}f_{\lambda _{1}}
	\Big\|_{K}\bigg]\lesssim |D|^{-\frac{r}{2r+\beta }}.
\end{equation}
Finally, we obtain that
%
\begin{equation}
	\nonumber
	E_{\mathbf{z}^{|D|}}\Big[\big\|\mathcal{S}_{3}\big\|_{\rho}\Big]
	\lesssim |D|^{-\frac{r}{2r+\beta }}.
\end{equation}
Now combine above three estimates for $\mathcal{S}_{1}$,
$\mathcal{S}_{2}$, $\mathcal{S}_{3}$, we arrive at
%
\begin{equation}
	\nonumber
	E\Big[\big\|\overline{f_{\widehat{D},\lambda _{1},\lambda _{2}}}- f_{
		\widehat{D},\lambda _{1},\lambda _{2}}\big\|_{\rho}\Big]=\mathcal{O}(|D|^{-
		\frac{r}{2r+\beta }}).
\end{equation}
On the other hand, {Theorem~\ref{thm1}} has already shown
%
\begin{equation}
	\nonumber
	E\Big[\big\|f_{\widehat{D},\lambda _{1},\lambda _{2}}- f_{\rho}\big\|_{
		\rho}\Big]=\mathcal{O}(|D|^{-\frac{r}{2r+\beta }}).
\end{equation}
Minkowski inequality finally implies
\begin{eqnarray*}
	E\Big[\big\|\overline{f_{\widehat{D},\lambda _{1},\lambda _{2}}}-f_{
		\rho}\big\|_{\rho}\Big]=\mathcal{O}(|D|^{-\frac{r}{2r+\beta }}).
\end{eqnarray*}
The proof is complete.
\end{proof}

\section{Conclusion}
\label{sec6}

In this paper, we have proposed and studied the multi-penalty distribution
regression algorithm. The learning rates are systematically studied via
integral operator theory. The optimal rates are shown to be achievable
under appropriate conditions. Meanwhile, the learning rates in the hard
learning scenario of $f_{\rho}=L_{K}^{r}(g_{\rho})$, $r\in (0,1/2)$ for
some $ g_{\rho}\in L_{\rho _{X_{\mu}}}^{2}$ are first studied in the literature
on two-stage distribution regression. The results improve the existing
achievable rates in the literature. Moreover, based on the multi-penalty
distribution regression scheme we provided, we propose a new distributed
learning algorithm and derive optimal learning rates for it. It would be
interesting to extend our methods in this work to other settings in learning
theory, such as multi-kernel learning and deep neural network. The potential
application domains are expected to be explored.


%

%


\section*{Acknowledgments}
 The authors would like to thank Professor Ding-Xuan Zhou for his valuable
suggestions on this work.This work was partially supported by the Research Grants Council of the Hong Kong Special Administrative Region, China ( CityU 11202819,11203521).

%
%


%
%

\end{document}